\newcommand{\R}{\mathbb{R}}
\newcommand{\tr}{\mathrm{tr}}
\newcommand{\N}{\mathbb{N}}
\newcommand{\E}{\mathbb{E}}
\newcommand{\h}{\mathcal{H}}
\newcommand{\F}{\mathcal{F}}
\newcommand{\G}{\mathcal{G}}
\newcommand{\X}{\mathcal{X}}
\newcommand{\Z}{\mathbb{Z}}
\newcommand{\BIT}{\begin{itemize}}
\newcommand{\EIT}{\end{itemize}}
\newcommand{\BNUM}{\begin{enumerate}}
\newcommand{\ENUM}{\end{enumerate}}
\newcommand{\BA}{\begin{array}}
\newcommand{\EA}{\end{array}}
\newcommand{\mysec}[1]{Section~\ref{sec:#1}}
\newcommand{\eq}[1]{Eq.~(\ref{eq:#1})}
\renewcommand{\L}{\mathcal{L}}
\newcommand{\s}{S}
\newcommand{\C}{\Sigma}
\newcommand{\Sn}{\hat{\s}_n}
\newcommand{\Cn}{\hat{\Sigma}_n}
\newcommand{\yn}{\hat{y}}
\newcommand{\la}{\lambda}
\newcommand{\EE}{\mathcal{E}}
\newcommand{\Ltwo}{{L_2(d \rho_\X)}}
\newcommand{\rhox}{{\rho_\X}}
\renewcommand{\S}{{S}}
\newcommand{\frho}{{f_\rho}}
\newcommand{\expect}[1]{{\mathbb E}[#1]}
\newcommand{\Cnl}{\widehat{\C}_{n\lambda}}
\newcommand{\Cl}{\C_\la}
\newcommand{\Ll}{\L_\la}
\providecommand{\scal}[2]{\left\langle{#1},{#2}\right\rangle}
\newcommand{\eqals}[1]{\begin{align*}#1\end{align*}}
\newcommand{\eqal}[1]{\begin{align}#1\end{align}}
\newtheorem{defi}{Definition}
\newtheorem{thm}{Theorem}
\newcommand{\bt}{\begin{thm}}
\newcommand{\et}{\end{thm}}
\newtheorem{cor}{Corollary}
\newtheorem{prop}{Proposition}
\newtheorem{lemma}{Lemma}
\newtheorem{rmk}{Remark}
\newtheorem{example}{Example}
\newtheorem{ass}{\bfseries (A \hspace*{.5mm} ) $\!\!\!\!\!\!\!\!\!\!\!\!\!\!\!$ }
\patchcmd{\endass}{\@endpefalse}{}{}{}
\newcommand{\bas}{\begin{ass}}
\newcommand{\eas}{\end{ass}}
\let\OLDthebibliography\thebibliography
\renewcommand\thebibliography[1]{
  \OLDthebibliography{#1}
  \setlength{\parskip}{2.4pt}
  \setlength{\itemsep}{0pt plus 0.3ex}
}
\title{Statistical Optimality of Stochastic Gradient Descent on Hard Learning Problems through Multiple Passes}
\author{
Loucas Pillaud-Vivien \\
INRIA - Ecole Normale Sup\'erieure \\
PSL Research University \\
\texttt{loucas.pillaud-vivien@inria.fr} \\
\And
Alessandro Rudi \\
INRIA - Ecole Normale Sup\'erieure \\
PSL Research University \\
\texttt{alessandro.rudi@inria.fr} \\
\And
Francis Bach \\
INRIA - Ecole Normale Sup\'erieure \\
PSL Research University \\
\texttt{francis.bach@inria.fr} \\
}
\newcounter{savenum}
\begin{document}

\maketitle

\begin{abstract}
We consider stochastic gradient descent (SGD) for least-squares regression with potentially several passes over the data. While several passes have been widely reported to perform practically better in terms of predictive performance on unseen data, the existing theoretical analysis of SGD suggests that a single pass is statistically optimal. While this is true for low-dimensional easy problems, we show that for hard problems, multiple passes lead to statistically optimal predictions while single pass does not; we also show that in these hard models, the optimal number of passes over the data increases with sample size. In order to define the notion of hardness and show that our predictive performances are optimal, we consider potentially infinite-dimensional models and notions typically associated to kernel methods, namely, the decay of eigenvalues of the covariance matrix of the features and the complexity of the optimal predictor as measured through the covariance matrix.
We illustrate our results on synthetic experiments with non-linear kernel methods and on a classical benchmark with a linear model.
\end{abstract}

\section{Introduction}

Stochastic gradient descent (SGD) and its multiple variants---averaged~\cite{polyak1992acceleration}, accelerated~\cite{lan2012optimal}, variance-reduced~\cite{roux2012stochastic,johnson2013accelerating,defazio2014saga}---are the workhorses of large-scale machine learning, because (a) these methods looks at only a few observations before updating the corresponding model, and (b) they are known in theory and in practice to generalize well to unseen data~\cite{bottou2018optimization}.

Beyond the choice of step-size (often referred to as the learning rate), the number of passes to make on the data remains an important practical and theoretical issue.  In the context of finite-dimensional models (least-squares regression or logistic regression), the theoretical answer has been known for many years: a single passes suffices for the optimal statistical performance~\cite{polyak1992acceleration,nemirovsky1983problem}. Worse, most of the theoretical work only apply to single pass algorithms, with some exceptions leading to analyses of multiple passes when the step-size is taken smaller than the best known setting~\cite{hardt2016train,Lin2017multiplepass}.

However, in practice, multiple passes are always performed as they empirically lead to better generalization (e.g., loss on unseen test data)~\cite{bottou2018optimization}.
But no analysis so far has been able to show that, given the appropriate step-size, multiple pass SGD was theoretically better than single pass SGD.

The main contribution of this paper is to show that for least-squares regression, while single pass averaged SGD is optimal for a certain class of ``easy'' problems, multiple passes are needed to reach optimal prediction performance on another class of ``hard'' problems.

In order to define and characterize these classes of problems, we need to use tools from infinite-dimensional models which are common in the analysis of kernel methods. De facto, our analysis will be done in infinite-dimensional feature spaces, and for finite-dimensional problems where the dimension far exceeds the number of samples, using these tools are the only way to obtain non-vacuous dimension-independent bounds. Thus, overall, our analysis applies both to finite-dimensional models with explicit features (parametric estimation), and to kernel methods (non-parametric estimation).
 
 The two important quantities  in the analysis are:
\BIT
\item[(a)] The decay of eigenvalues of the covariance matrix $\Sigma$ of the input features, so that the ordered eigenvalues $\lambda_m$ decay as $O(m^{-\alpha})$; the parameter $\alpha \geqslant 1$ characterizes the size of the feature space, $\alpha=1$ corresponding to the largest feature spaces and  $\alpha = +
\infty$ to finite-dimensional spaces. The decay will be measured through $\tr \Sigma^{1/\alpha} = \sum_m \lambda_m^{1/\alpha}$, which is small when the decay of eigenvalues is faster  than $O(m^{-\alpha})$.

 \item[(b)] The complexity of the optimal predictor $\theta_\ast$ as measured through the covariance matrix~$\Sigma$, that is with coefficients $\langle e_m, \theta_\ast\rangle$ in the eigenbasis $(e_m)_m$ of the covariance matrix that decay so that 
 $ \langle \theta_\ast, \Sigma^{1-2r} \theta_\ast\rangle$ is small. The parameter $r \geqslant 0$ characterizes the difficulty of the learning problem: $r = 1/2$ corresponds to characterizing the complexity of the predictor through the squared norm $\| \theta_\ast\|^2$, and thus $r$ close to zero corresponds to the hardest problems while $r$ larger, and in particular $r \geqslant 1/2$,    corresponds to simpler problems.
\EIT

 Dealing with non-parametric estimation provides a simple way to evaluate the optimality of learning procedures. Indeed, given problems with parameters $r$ and $\alpha$, the best prediction performance (averaged square loss on unseen data) is well known \cite{fischer2017sobolev} and decay as $O(n^{\frac{-2r\alpha}{2r\alpha +1}} )$, with $\alpha = +
\infty$ leading to the usual parametric rate $O(n^{-1})$. For \emph{easy problems}, that is  for which $r \geqslant \frac{\alpha-1}{2\alpha}$, then it is known that most iterative algorithms achieve this optimal rate of convergence (but with various running-time complexities), such as exact regularized risk minimization~\cite{caponnetto2007optimal}, gradient descent on the empirical risk~\cite{yao2007early}, or averaged stochastic gradient descent~\cite{dieuleveut2016nonparametric}.

  We show that for \emph{hard problems}, that is for which $r \leqslant \frac{\alpha-1}{2\alpha}$ (see Example \ref{ex:hard_problem} for a typical hard problem), then multiple passes are superior to a single pass. More precisely, under additional assumptions detailed in \mysec{ls} that will lead to a subset of the hard problems, with $\Theta(n^{(\alpha- 1 - 2 r\alpha)/( 1 + 2 r \alpha)})$ passes, we achieve the optimal statistical performance $O(n^{\frac{-2r\alpha}{2r\alpha +1}} )$, while for all other hard problems, a single pass only achieves $O(n^{-2r} )$.  This is illustrated in Figure~\ref{fig:optimality_zones}.

  We thus get a number of passes that grows with the number of observations $n$ and depends precisely on the quantities $r$ and $\alpha$. In synthetic experiments with kernel methods where $\alpha$ and $r$ are known, these scalings are precisely observed. In experiments on parametric models with large dimensions, we also exhibit an increasing number of required passes when the number of observations increases.

\begin{figure}[ht]
\footnotesize
\includegraphics[width=0.48\textwidth]{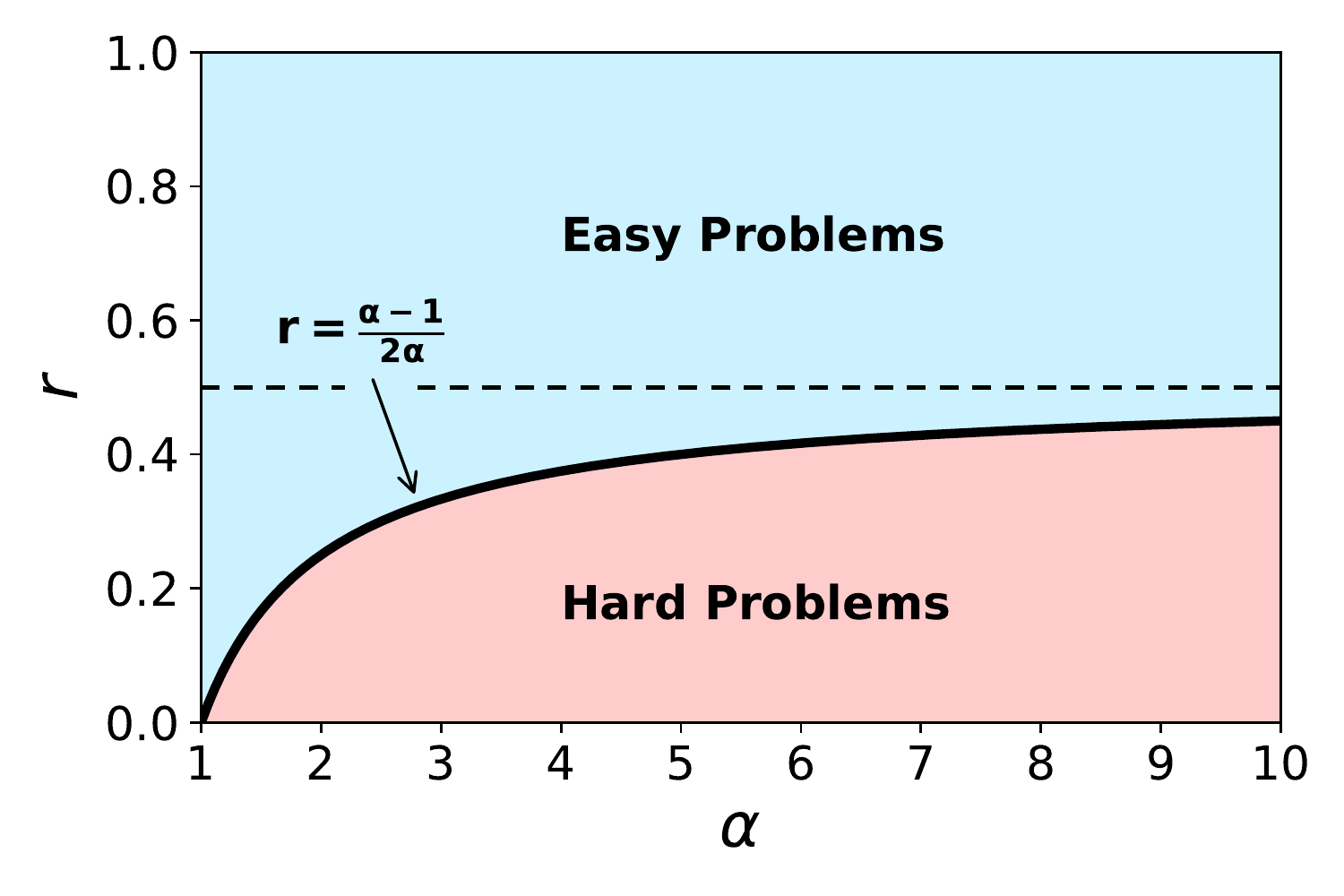}
\hspace{0.5cm}%
\includegraphics[width=0.48\textwidth]{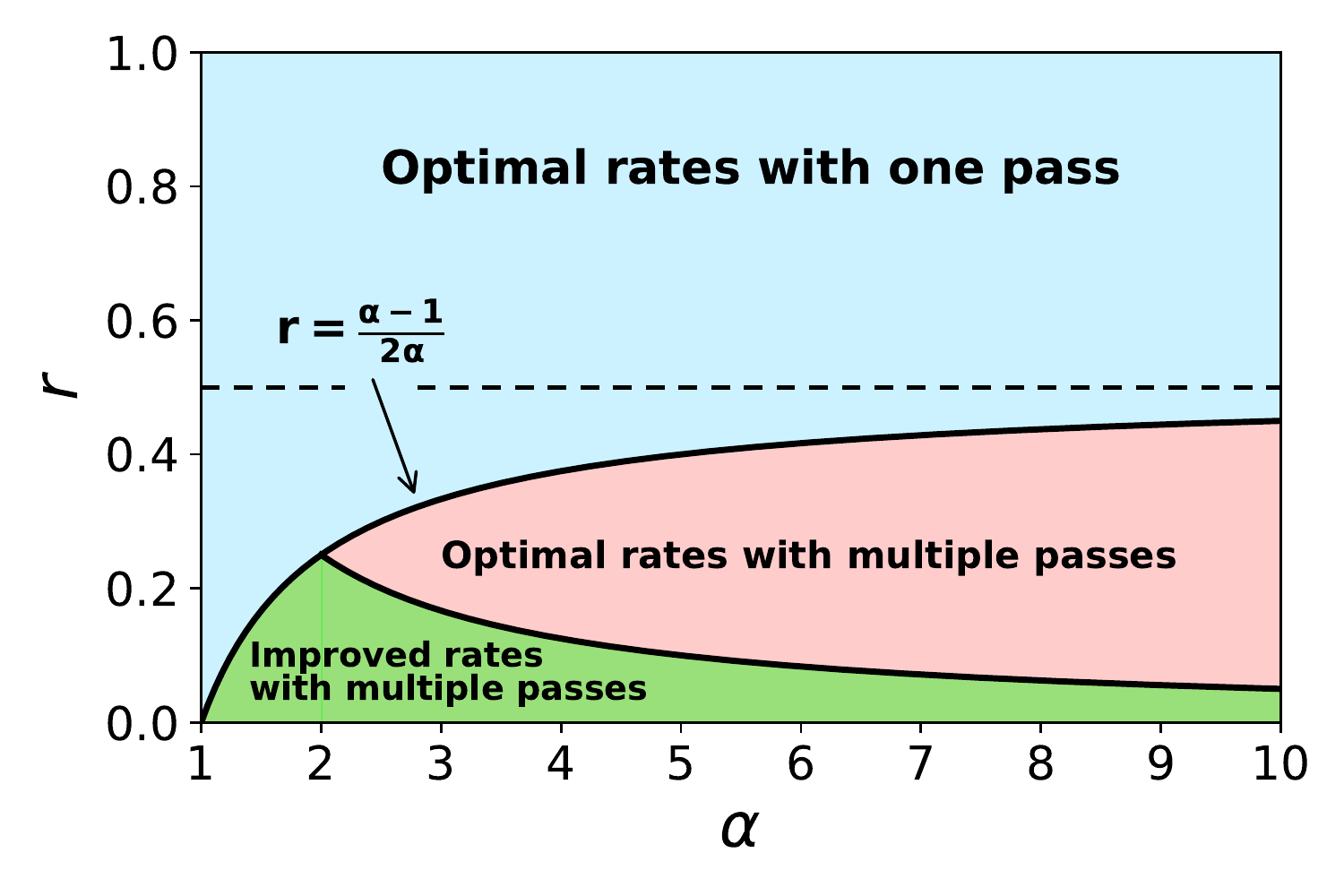} \\
\vspace{-0.5cm}
\caption{ (Left)  easy and hard problems in the $(\alpha,r)$-plane. (Right)  different regions for which multiple passes improved known previous bounds (green region) or reaches optimality (red region).}
\label{fig:optimality_zones}
\end{figure}

\section{Least-squares regression in finite dimension}
\label{sec:ls}

We consider a joint distribution $\rho$ on pairs of input/output $(x,y) \in \X \times \R$, where $\X$ is any input space, and we consider a feature map $\Phi$ from the input space $\X$ to a feature space $\mathcal{H}$, which we assume Euclidean in this section, so that all quantities are well-defined. In \mysec{kernels}, we will extend all the notions to Hilbert spaces.
 
\subsection{Main assumptions}\label{sect:assumptions}

We are considering predicting $y$ as a linear function  $f_\theta(x) = \langle \theta, \Phi(x) \rangle_\h$ of $\Phi(x)$, that is estimating $\theta \in \h$ such that $F(\theta) = \frac{1}{2} \E ( y - \langle \theta, \Phi(x) \rangle_\h)^2$ is as small as possible. 
Estimators will depend on $n$ observations, with standard sampling assumptions:

\begin{enumerate}[label={\bfseries(A\arabic*)}, ref=(A\arabic*),leftmargin=*]
\setcounter{enumi}{\value{savenum}}
\item \label{asm:iid}   \hspace*{.2cm} 
 \emph{The $n$ observations $(x_i,y_i) \in \X \times \R$, $i=1,\dots,n$, are independent and identically distributed from the distribution $\rho$.}
\setcounter{savenum}{\value{enumi}}
\end{enumerate}

Since $\h$ is finite-dimensional,  $F$ always has a (potentially non-unique) minimizer in $\h$ which we denote $\theta_\ast$.   We make the following standard boundedness assumptions:

\begin{enumerate}[label={\bfseries(A\arabic*)}, ref=(A\arabic*),leftmargin=*]
\setcounter{enumi}{\value{savenum}}
\item \label{asm:bounded} \hspace*{.2cm} \emph{$\| \Phi(x) \|\leqslant R$ almost surely, $| y -  \langle \theta_\ast, \Phi(x) \rangle_\h|$ is almost surely bounded by $\sigma$ and $|y|$ is almost surely bounded by $M$.}
\setcounter{savenum}{\value{enumi}}
\end{enumerate}

In order to obtain improved rates with multiple passes, and motivated by the equivalent previously used condition in reproducing kernel Hilbert spaces presented in \mysec{kernels}, we make the following extra assumption (we denote by $\Sigma = \E [ \Phi(x) \otimes_\h \Phi(x) ] $ the (non-centered) covariance matrix).

\begin{enumerate}[label={\bfseries(A\arabic*)}, ref=(A\arabic*),leftmargin=*]
\setcounter{enumi}{\value{savenum}}
\item \label{asm:hyp-space} \hspace*{.2cm} \emph{For $\mu \in [0,1]$, there exists $\kappa_\mu \geqslant 0$ such that, almost surely, $\Phi(x) \otimes_\h \Phi(x) \preccurlyeq_\h \kappa_\mu^2 R^{2\mu}  \Sigma^{1-\mu}$. Note that it can also be written as
 $\| \Sigma^{\mu/2 - 1/2} \Phi(x) \|_\h \leqslant \kappa_\mu R^\mu$.}
\setcounter{savenum}{\value{enumi}}
\end{enumerate}

Assumption \ref{asm:hyp-space} is always satisfied with any $\mu \in [0,1]$, and has particular values for $\mu=1$, with $\kappa_1 = 1$, and $\mu = 0$, where $\kappa_0$ has to be larger than the dimension of the space $\h$. 

We will also introduce a parameter $\alpha$ that characterizes the decay of eigenvalues of $\Sigma$ through the quantity $\tr \Sigma^{1/\alpha}$, as well as the difficulty of the learning problem through $ \| \Sigma^{1/2 - r} \theta_\ast \|_\h$, for $r \in [0,1]$. In the finite-dimensional case, these quantities can always be defined and most often finite, but may be very large compared to sample size. In the following assumptions the quantities are assumed to be finite and small compared to $n$.

\begin{enumerate}[label={\bfseries(A\arabic*)},ref=(A\arabic*),leftmargin=*]
\setcounter{enumi}{\value{savenum}}
\item \label{asm:capacity_condition} \hspace*{.2cm} \emph{There exists $\alpha > 1$ such that $\tr\ \Sigma ^{1/\alpha} < \infty$.}
\setcounter{savenum}{\value{enumi}}
\end{enumerate}

Assumption \ref{asm:capacity_condition} is often called the ``capacity condition''. First note that this assumption implies that the decreasing sequence of the eigenvalues of $\Sigma$, $(\lambda_m)_{m \geqslant 1}$, satisfies $\lambda_m = o \left( 1/m^{\alpha} \right)$. Note that
$\tr \Sigma^\mu\leqslant\kappa_\mu^2 R^{2\mu}  $ and thus often we have $\mu \geqslant 1/\alpha$, and in the most favorable cases in \mysec{kernels}, this bound will be achieved. We also assume:

\begin{enumerate}[label={\bfseries(A\arabic*)},ref=(A\arabic*),leftmargin=*]
\setcounter{enumi}{\value{savenum}}
\item \label{asm:source_condition} \hspace*{.2cm} \emph{There exists $r \geqslant 0$, such that    $ \| \Sigma^{1/2 - r} \theta_\ast \|_\h  < \infty $.}
\setcounter{savenum}{\value{enumi}}
\end{enumerate}

Assumption \ref{asm:source_condition} is often called the ``source condition''. Note also that for $r = 1/2$, this simply says that the optimal predictor has a small norm.

 In the subsequent sections, we essentially assume that $\alpha$, $\mu$ and $r$ are chosen (by the theoretical analysis, not by the algorithm) so that all quantities $R_\mu$, $\| \Sigma^{1/2 - r} \theta_\ast \|_\h$ and $\tr \Sigma^{1/\alpha}$ are finite and small. As recalled in the introduction, these parameters are often used in the non-parametric literature to quantify the hardness of the learning problem (Figure \ref{fig:optimality_zones}).

 We will use result with $O(\cdot)$ and $\Theta(\cdot)$ notations, which will all be independent of $n$ and $t$ (number of observations and number of iterations) but can depend on other finite constants. Explicit dependence on all parameters of the problem is given in proofs. More precisely, we will use the usual $O(\cdot)$ and $\Theta(\cdot)$ notations for sequences $b_{nt}$ and $a_{nt}$ that can depend on $n$ and $t$, as $a_{nt} = O(b_{nt})$ if and only if, there exists $M>0$ such that for all $n,t$, $a_{nt} \leqslant M b_{nt}$, and $a_{nt} = \Theta(b_{nt})$  if and only if, there exist $M,M' >0$ such that for all $n,t$, $M' b_{nt} \leqslant a_{nt} \leqslant M b_{nt}$.

\subsection{Related work}

Given our assumptions above, several algorithms have been developed for obtaining low values of the expected excess risk $\E \big[ F(\theta) \big] - 
F(\theta_\ast)$.

\textbf{Regularized empirical risk minimization.} Forming the empirical risk $\hat{F}(\theta)$, it minimizes $\hat{F}(\theta) + \lambda \| \theta\|_\h^2$, for appropriate values of $\lambda$. It is known that for easy problems where $r \geqslant \frac{\alpha-1}{2\alpha}$, it achieves the optimal rate of convergence
$O(n^{\frac{-2r\alpha}{2r\alpha +1}} )$~\cite{caponnetto2007optimal}. However, algorithmically, this requires to solve  a linear system of size $n$ times the dimension of $\h$. One could also use fast variance-reduced stochastic gradient algorithms such as SAG~\cite{roux2012stochastic}, SVRG~\cite{johnson2013accelerating} or SAGA~\cite{defazio2014saga}, with a complexity proportional to the dimension of $\h$ times $n + R^2 / \lambda$.

\textbf{Early-stopped gradient descent on the empirical risk.} Instead of solving the linear system directly, one can use gradient descent with early stopping~\cite{yao2007early,lin2018optimal}. Similarly to the regularized empirical risk minimization case, a rate of $O(n^{-\frac{2r\alpha}{2r\alpha + 1}})$ is achieved for the easy problems, where $r \geqslant \frac{\alpha-1}{2\alpha}$. Different iterative regularization techniques beyond batch gradient descent with early stopping have been considered, with computational complexities ranging from $O(n^{1+\frac{\alpha}{2r\alpha+1}})$ to $O(n^{1+\frac{\alpha}{4r\alpha+2}})$ times the dimension of $\h$ (or $n$ in the kernel case in \mysec{kernels}) for optimal predictions \cite{yao2007early,gerfo2008spectral,rosasco2015learning,blanchard2016convergence,lin2018optimal}.

\textbf{Stochastic gradient.} The usual stochastic gradient recursion is iterating from $i=1$ to $n$,
\[ \theta_i = \theta_{i-1} + \gamma \big( y_{i} - \langle \theta_{i-1}, \Phi(x_{i}) \rangle_\h \big) \Phi(x_{i}),
\]
with the averaged iterate $\bar{\theta}_n = \frac{1}{n} \sum_{i=1}^n \theta_i$. Starting from $\theta_0 = 0$,  \cite{bach2013non} shows that the expected excess performance $\E [ F(\bar{\theta}_n) ] - F(\theta_\ast)$ decomposes into a \emph{variance} term  that depends on the noise~$\sigma^2$ in the prediction problem, and a \emph{bias term}, that depends on the deviation $ \theta_\ast - \theta_0 = \theta_\ast$ between the initialization and the optimal predictor.  Their bound  is, up to universal constants,  $\frac{\sigma^2 {\rm dim}(\h)}{n} + \frac{ \| \theta_\ast \|_\h^2 }{\gamma n} $.

Further, \cite{dieuleveut2016nonparametric} considered the quantities $\alpha$ and $r$ above to get the bound, up to constant factors:
\[   \frac{\sigma^2 \tr \Sigma^{1/\alpha} ( \gamma n)^{1/\alpha} }{n} + \frac{ \| \Sigma^{1/2 -r } \theta_\ast \|^2 }{\gamma^{2r} n^{2r}}.
  \]
We recover the finite-dimensional bound for $\alpha = +\infty$ and $r=1/2$. The bounds above are valid for all $\alpha \geqslant 1$ and all $r \in [0,1]$, and the step-size $\gamma$ is such that $\gamma R^2 \leqslant 1/4$, and thus we see a natural trade-off appearing for the step-size $\gamma$, between bias and variance.

When $r \geqslant \frac{\alpha-1}{2\alpha}$, then the optimal step-size minimizing the bound above is $\gamma \propto n^{\frac{-2\alpha\min \{r,1\} - 1 +\alpha}{2\alpha\min \{r,1\} + 1}}$, and the obtained rate is optimal. Thus a single pass is optimal. However, when  $r \leqslant \frac{\alpha-1}{2\alpha}$, the best step-size does not depend on $n$, and one can only achieve $O(n^{-2r})$.

Finally, in the same multiple pass set-up as ours, \cite{Lin2017multiplepass} has shown that for easy problems where $r \geqslant \frac{\alpha-1}{2\alpha}$ (and single-pass averaged SGD is already optimal) that multiple-pass non-averaged SGD is becoming optimal after a correct number of passes (while single-pass is not). Our  proof principle of comparing to batch gradient is taken  from~\cite{Lin2017multiplepass}, but we apply it to harder problems where $r \leqslant \frac{\alpha-1}{2\alpha}$. Moreover we consider the multi-pass averaged-SGD algorithm, instead of non-averaged SGD, and take explicitly into account the effect of Assumption \ref{asm:hyp-space}.

\section{Averaged SGD with multiple passes}
\label{sec:ASGD}
We consider the following algorithm, which is stochastic gradient descent with sampling with replacement with multiple passes over the data (we experiment in Section \ref{app:experiments} of the Appendix with cycling over the data, with or without reshuffling between each pass).
\BIT
\item \textbf{Initialization}: $\theta_0 = \bar{\theta}_0 = 0$, $t$ = maximal number of iterations, $\gamma = 1/(4R^2) = $ step-size
\item \textbf{Iteration}: for $u=1$ to $t$, sample $i(u)$ uniformly from $\{1,\dots,n\}$ and make the step
\[
\theta_u = \theta_{u-1} + \gamma \big( y_{i(u)} - \langle \theta_{t-1}, \Phi(x_{i(u)}) \rangle_\h \big) \Phi(x_{i(u)}) \ \  \mbox{ and } 
\ \ \textstyle \bar{\theta}_u = ( 1 - \frac{1}{u}) \bar{\theta}_{u-1} + \frac{1}{u} \theta_u.
\]
\EIT
In this paper, following~\cite{bach2013non,dieuleveut2016nonparametric}, but as opposed to~\cite{dieuleveut2017harder}, we consider unregularized recursions. This removes a unnecessary regularization parameter (at the expense of harder proofs).

\subsection{Convergence rate and optimal number of passes}
Our main result is the following (see full proof in Appendix):
\begin{thm}
\label{thm:main_result}
Let $n \in \N^*$ and $t \geqslant n$, under Assumptions~\ref{asm:iid}, \ref{asm:bounded}, \ref{asm:hyp-space},~\ref{asm:capacity_condition},~\ref{asm:source_condition},~\ref{asm:reg-Pfrho}, with $\gamma = 1 / ( 4 R^2)$.
\begin{itemize}
\item For $\mu\alpha < 2r\alpha + 1 < \alpha $, if we take $t = \Theta(  n^{\alpha/\left(2r\alpha + 1 \right)})$, we obtain the following rate: 
\[ \E  F(\bar{\theta}_t) - F(\theta_\ast)  =  O ( n^{-2r\alpha/\left(2r\alpha + 1 \right)}) .\]
\item For $\mu\alpha \geqslant 2r\alpha + 1$, if we take $t = \Theta(  n^{1/\mu}~(\log  n)^{\frac{1}{\mu}})$, we obtain the following rate:
\[  \E  F(\bar{\theta}_t) - F(\theta_\ast)  \leqslant O (  n^{-2r/\mu} ). \]
\end{itemize}
\end{thm}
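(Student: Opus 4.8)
The plan is to compare the multi-pass averaged SGD iterate $\bar\theta_t$ to the averaged iterate of \emph{batch} gradient descent run on the empirical risk $\hat F(\theta) = \frac{1}{2n}\sum_{i=1}^n (y_i - \langle\theta,\Phi(x_i)\rangle_\h)^2$, following the proof principle of \cite{Lin2017multiplepass}. Conditionally on the sample $(x_i,y_i)_{i=1}^n$, the SGD step with uniform sampling is an \emph{unbiased} stochastic gradient of $\hat F$, whose deterministic counterpart is $\eta_u = \eta_{u-1} + \gamma(\hat b - \hat\Sigma\,\eta_{u-1})$, with $\hat\Sigma = \frac1n\sum_i \Phi(x_i)\otimes_\h\Phi(x_i)$ and $\hat b = \frac1n\sum_i y_i\Phi(x_i)$, averaged into $\bar\eta_u$. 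I would then split the excess risk as
\[
\E F(\bar\theta_t) - F(\theta_\ast) = \underbrace{\big(\E F(\bar\theta_t) - \E F(\bar\eta_t)\big)}_{\text{(i) SGD fluctuations}} + \underbrace{\big(\E F(\bar\eta_t) - F(\theta_\ast)\big)}_{\text{(ii) batch-GD error}},
\]
and treat the two pieces separately, recalling that $F(\theta)-F(\theta_\ast) = \tfrac12\|\Sigma^{1/2}(\theta-\theta_\ast)\|_\h^2$.

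For term (ii), I would use the spectral-filter viewpoint: after $t$ steps with step-size $\gamma$, gradient descent behaves like a regularized inverse with effective regularization $\lambda \asymp 1/(\gamma t)$. This produces a bias term controlled by the source condition \asm{asm:source_condition}, scaling like $(\gamma t)^{-2r}\|\Sigma^{1/2-r}\theta_\ast\|_\h^2$, and a variance term stemming from the label noise and from the discrepancy between $\hat\Sigma$ and $\Sigma$. The latter is handled by operator concentration of the empirical covariance together with the capacity condition \asm{asm:capacity_condition}, which bounds the effective dimension $\tr[\Sigma(\Sigma+\lambda)^{-1}] \lesssim \lambda^{-1/\alpha}$, yielding a variance of order $\sigma^2(\gamma t)^{1/\alpha}/n$; the extra regularity assumption \asm{asm:reg-Pfrho} is invoked here to control the relevant remainder terms. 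Balancing $(\gamma t)^{-2r}$ against $(\gamma t)^{1/\alpha}/n$ gives the optimal horizon $t^\star \asymp n^{\alpha/(2r\alpha+1)}$ and the optimal rate $n^{-2r\alpha/(2r\alpha+1)}$, exactly as for early-stopped batch GD \cite{yao2007early}.

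The crux, and the step I expect to be the main obstacle, is bounding the fluctuation term (i): with multiple passes the same data are reused, so the per-step stochastic noise does not average out as cleanly as in the single-pass regime. Here Assumption \asm{asm:hyp-space} is decisive: the bound $\Phi(x)\otimes_\h\Phi(x)\preccurlyeq_\h \kappa_\mu^2 R^{2\mu}\Sigma^{1-\mu}$ controls the conditional second moment of the stochastic gradients in the $\Sigma$-geometry, and determines how fast the deviation can grow in $t$. I would make this precise by writing a recursion for the conditional quantity $\E\|\Sigma^{1/2}(\theta_u-\eta_u)\|_\h^2$, bounding the noise injected at each step through \asm{asm:hyp-space}, and summing along the averaging; the outcome is that $\bar\eta_t$ faithfully tracks $\bar\theta_t$ only up to a horizon $t \lesssim n^{1/\mu}$ (up to logarithmic factors coming from the concentration of $\hat\Sigma$). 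Quantifying this cutoff sharply, and showing the fluctuation stays dominated by the batch-GD error below it, is the technical heart of the argument.

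Finally I would combine the two bounds and optimize $t$. When $\mu\alpha < 2r\alpha+1$, the GD-optimal horizon $t^\star \asymp n^{\alpha/(2r\alpha+1)}$ satisfies $t^\star \lesssim n^{1/\mu}$, so term (i) is negligible there and averaged SGD inherits the optimal batch rate $O(n^{-2r\alpha/(2r\alpha+1)})$, giving the first case. When $\mu\alpha \geqslant 2r\alpha+1$, the fluctuation term forces an earlier stop at $t \asymp n^{1/\mu}(\log n)^{1/\mu}$; evaluating the GD bias $(\gamma t)^{-2r}$ at this truncated horizon yields the sub-optimal rate $O(n^{-2r/\mu})$, matching the second case of the statement.
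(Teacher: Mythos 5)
Your proposal follows essentially the same route as the paper's proof: compare multi-pass averaged SGD to averaged batch gradient descent, analyze batch GD as a spectral filter with effective regularization $\lambda \asymp 1/(\gamma t)$ under the source and capacity conditions, control the deviation $\bar\theta_t - \bar\eta_t$ as an SGD recursion conditional on the data whose noise is bounded via \asm{asm:hyp-space} and concentration of $\hat\Sigma_n$, and then optimize $t$ with the same case split and the same $n^{1/\mu}$ cutoff. The only minor difference in emphasis is that in the paper the truncated horizon in the regime $\mu\alpha \geqslant 2r\alpha+1$ is forced by the batch-GD analysis itself (the concentration requirement $n \gtrsim \lambda^{-\mu}\log(1/\lambda)$) and not only by the fluctuation term, though both limitations arise from the same covariance-concentration mechanism you identify.
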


\paragraph{\bfseries Sketch of proof.} The main difficulty in extending proofs from the single pass case~\cite{bach2013non,dieuleveut2016nonparametric} is that as soon as an observation is processed twice, then statistical dependences are introduced and the proof does not go through. In a similar context, some authors have considered stability results~\cite{hardt2016train}, but the large step-sizes that we consider do not allow this technique. Rather, we follow \cite{rosasco2015learning,Lin2017multiplepass} and compare our multi-pass stochastic recursion $\theta_t$ to the batch gradient descent iterate $\eta_t$ defined as $ \eta_t  = \eta_{t-1} + \frac{\gamma}{n} \sum_{i=1}^n\big( y_{i} - \langle \eta_{t-1}, \Phi(x_{i}) \rangle_\h \big) \Phi(x_{i})$ with its averaged iterate $\bar{\eta}_t$. We thus need to study the predictive performance of $\bar{\eta}_t$ and the deviation $\bar{\theta}_t- \bar{\eta}_t$.
It turns out that, given the data, the deviation $\theta_t - \eta_t$ satisfies an SGD recursion (with the respect to the randomness of the sampling with replacement). For a more detailed summary of the proof technique see \mysec{proof}. 

The novelty compared to \cite{rosasco2015learning,Lin2017multiplepass} is (a) to use refined results on averaged SGD for least-squares, in particular convergence in various norms for the deviation $\bar{\theta}_t- \bar{\eta_t}$ (see \mysec{appsgd}), that can use our new Assumption \ref{asm:hyp-space}. Moreover, (b) we need to extend the convergence results for the batch gradient descent recursion from~\cite{lin2018optimal}, also to take into account the new assumption (see \mysec{ales}). These two results are interesting on their own.

\paragraph{\bfseries Improved rates with multiple passes.}
 We can draw the following conclusions:
\BIT
\item If $ 2\alpha r + 1 \geqslant \alpha $, that is, easy problems, it has been shown by \cite{dieuleveut2016nonparametric} that a single pass with a smaller step-size than the one we propose here is optimal, and our result does not apply.

\item If   $\mu\alpha < 2r\alpha + 1 < \alpha $, then our proposed   number of iterations is $t = \Theta(n^{\alpha / ( 2\alpha r + 1)})$, which is now greater than~$n$; the convergence rate is then $O(n^{\frac{-2r\alpha}{2r\alpha +1}} )$, and, as we will see in \mysec{lower}, the predictive performance is then optimal when $\mu \leqslant 2r$.

\item If  $\mu\alpha \geqslant 2r\alpha + 1$, then with a number of iterations is $t = \Theta(n^{1/\mu})$, which is greater than~$n$ (thus several passes), with a convergence rate equal to $O( n^{-2r / \mu})$, which improves upon the best known rates of $O(n^{-2r})$. As we will see in \mysec{lower}, this is  not optimal.
\EIT

Note that these rates are theoretically only  bounds on the optimal number of passes over the data, and one should be cautious when drawing conclusions; however our simulations on synthetic data, see Figure \ref{fig:t_versus_n} in Section \ref{sec:experiments}, confirm that our proposed scalings for the number of passes is observed in practice.

\section{Application to kernel methods}
\label{sec:kernels}

In the section above, we have assumed that $\h$ was finite-dimensional, so that the optimal predictor~$\theta_\ast \in \h$ was always defined. Note however, that our bounds that depends on $\alpha$, $r$ and $\mu$ are \emph{independent of the dimension}, and hence, intuitively, following~\cite{dieuleveut2017harder}, should apply immediately to infinite-dimensional spaces. 

We now first show in \mysec{hilbert} how this intuition can be formalized and how using kernel methods provides a particularly interesting example. Moreover, this interpretation allows to characterize the statistical optimality of our results in  \mysec{lower}.

\subsection{Extension to Hilbert spaces, kernel methods and non-parametric estimation}
\label{sec:hilbert}

Our main result in Theorem~\ref{thm:main_result} extends directly to the case where $\h$ is an infinite-dimensional Hilbert space. In particular, given a feature map $\Phi: \X \to \h$,  any vector $\theta \in \h$ is naturally associated to a function defined as $f_\theta(x) = \langle \theta, \Phi(x) \rangle_\h$. Algorithms can then be run with infinite-dimensional objects if the {\em kernel} $K(x',x) = \langle \Phi(x'), \Phi(x) \rangle_\h$ can be computed efficiently.  This identification of elements $\theta $ of $\h$ with functions $f_\theta$ endows the various quantities we have introduced in the previous sections, with natural interpretations in terms of functions. The stochastic gradient descent described in Section~\ref{sec:ASGD} adapts instantly to this new framework as the iterates $(\theta_u)_{u \leqslant t}$ are linear combinations of feature vectors $\Phi(x_i)$, $i=1,\dots,n$, and the algorithms can classically be ``kernelized''~\cite{Ying2008,dieuleveut2016nonparametric}, with an overall running time complexity of $O(nt)$.

First note that Assumption~\ref{asm:hyp-space} is equivalent
to, for all $x \in \X$ and $\theta \in \h$, $|f_\theta(x)|^2 \leqslant \kappa_\mu^2 R^{2\mu} \langle f_\theta, \Sigma^{1-\mu} f_\theta \rangle_\h$, that is, 
$\| g\|^2_{L_\infty} \leqslant \kappa_\mu^2 R^{2\mu} \| \Sigma^{1/2 - \mu/2 }g \|_\h^2$ for any $g \in \h$ and also implies\footnote{Indeed, for any $g \in \h$, $\| \Sigma^{1/2 - \mu/2 }g \|_\h = \| \Sigma^{- \mu/2 }g \|_{L_2} \leqslant \| \Sigma^{- 1/2 }g \|^{\mu}_{L_2} \| g \|^{1-\mu}_{L_2} = \| g \|^{\mu}_\h \| g \|^{1-\mu}_{L_2}$, where we used that for any $g \in \h$, any bounded operator $A$, $s \in [0,1]$: $\|A^s g\|_{L_2} \leqslant \|A g\|_{L_2}^s \|g\|_{L_2}^{1-s}$ (see \cite{rudi2017generalization}).}
$\| g\|_{L_\infty} \leqslant  {\kappa_\mu} R^{\mu}  \| g \|_\h^{\mu} \| g\|_{L_2}^{1-\mu}$, which are common assumptions in the context of kernel methods~\cite{steinwart2009optimal}, essentially controlling in a more refined way the regularity of the whole space of functions associated to $\h$, with respect to the $L^\infty$-norm, compared to the too crude inequality $\|g\|_{L^\infty} = \sup_x |\scal{\Phi(x)}{g}_\h| \leqslant \sup_x \|\Phi(x)\|_\h \|g\|_\h \leqslant R\|g\|_\h$.

The natural relation with functions allows to analyze effects that are crucial in the context of learning, but difficult to grasp in the finite-dimensional setting. Consider the following prototypical example of a hard learning problem,

\begin{example}[Prototypical hard problem on simple Sobolev space]
\label{ex:hard_problem}
Let $\X = [0,1]$, with $x$ sampled uniformly on $X$ and
\[
y = \textrm{sign}(x-1/2) + \epsilon, \quad \Phi(x) = \{ |k|^{-1} e^{2 i k \pi x} \}_{ k \in \mathbb{Z}^\ast }.
\]
\end{example}

This corresponds to the kernel $K(x,y) = \sum_{k \in \mathbb{Z}^\ast}  |k|^{-2} e^{2 i k \pi (x-y)}$, which is well defined (and lead to the simplest Sobolev space). Note that for any $\theta \in \h$, which is here identified as the space of square-summable sequences $\ell^2(\mathbb{Z})$, we have $f_\theta(x) = \langle \theta, \Phi(x) \rangle_{\ell^2(\mathbb{Z})}  = \sum_{k \in \mathbb{Z}^\ast} \frac{\theta_k}{|k|} e^{2 i k \pi x}  $. This means that for any estimator $\hat{\theta}$ given by the algorithm, $f_{\hat{\theta}}$ is at least once continuously differentiable, while the target function $\textrm{sign}(\cdot-1/2)$ is not even continuous. Hence, we are in a situation where~$\theta_*$, the minimizer of the excess risk, does not belong to $\h$. Indeed let represent $\textrm{sign}(\cdot-1/2)$ in $\h$, for almost all $x \in [0,1]$, by its Fourier series
$\textrm{sign}(x-1/2) = \sum_{k \in \mathbb{Z}_\ast} \alpha_k e^{2 i k \pi x}$, with 
$|\alpha_k| \sim 1/ k$, an informal reasoning would lead to $(\theta_\ast)_k = \alpha_k |k| \sim 1$, which is not square-summable and thus $\theta_* \notin \h$. For more details, see~\cite{Adams1975Sobolev,Wahba1990spline}.

This setting generalizes important properties that are valid for Sobolev spaces, as shown in the following example, where $\alpha, r, \mu$ are characterized in terms of the smoothness of the functions in $\h$, the smoothness of $f^*$ and the dimensionality of the input space $\X$.

\begin{example}[Sobolev Spaces \cite{wendland2004scattered,steinwart2009optimal,bach2017equivalence, fischer2017sobolev}]
	Let $\X \subseteq \R^d$, $d \in \N$, with $\rhox$ supported on $\X$, absolutely continous with the uniform distribution and such that $\rhox(x) \geqslant a >0$ almost everywhere, for a given $a$.  Assume that $f^*(x) = \expect{y|x}$ is $s$-times differentiable, with $s > 0$. Choose a kernel, inducing Sobolev spaces of smoothness $m$ with $m >d/2$, as the Mat\'ern kernel
	\begin{align*} 
	K(x',x) = \|x'-x\|^{m-d/2} {\mathcal {K}}_{d/2-m}(\|x'-x\|),
	\end{align*}
	where ${\mathcal {K}}_{d/2-m}$ is the modified Bessel function of the second kind.
	Then the assumptions are satisfied for any $\epsilon > 0$, with $
	\alpha = \frac{2m}{d}, ~~~ \mu = \frac{d}{2m} + \epsilon, ~~~r = \frac{s}{2m}.$
\end{example}
In the following subsection we compare the rates obtained in Thm.~\ref{thm:main_result}, with known lower bounds under the same assumptions.

\subsection{Minimax lower bounds}
\label{sec:lower}
In this section we recall known lower bounds on the rates for classes of learning problems satisfying the conditions in Sect.~\ref{sect:assumptions}. Interestingly, the comparison below shows that our results in Theorem~\ref{thm:main_result} are optimal in the setting $2r \geqslant \mu$. While the optimality of SGD was known for the regime $\{2r\alpha + 1 \geqslant \alpha~ \cap ~2r \geqslant \mu\}$, here we extend the optimality to the new regime $\alpha \geqslant 2r\alpha + 1 \geqslant \mu \alpha$, covering essentially all the region $2r \geqslant \mu$, as it is possible to observe in Figure~\ref{fig:optimality_zones}, where for clarity we plotted the best possible value for $\mu$ that is $\mu = 1/\alpha$  \cite{fischer2017sobolev} (which is true for Sobolev spaces).

When $r \in (0,1]$ is fixed, but there are no assumptions on $\alpha$ or $\mu$, then the optimal minimax rate of convergence
is $O(n^{-2r / ( 2r + 1)})$, attained by regularized empirical risk minimization~\cite{caponnetto2007optimal} and other spectral filters on the empirical covariance operator~\cite{blanchard2017optimal}.

When $r \in (0,1]$ and $\alpha \geqslant 1$ are fixed (but there are no constraints on $\mu$), the optimal minimax rate of convergence $O(n^{\frac{-2r\alpha}{2r\alpha +1}} )$ is attained when $r \geqslant \frac{\alpha-1}{2\alpha}$, with empirical risk minimization~\cite{lin2018optimal} or stochastic gradient descent~\cite{dieuleveut2016nonparametric}.

When $r \geqslant \frac{\alpha-1}{2\alpha}$, the rate of convergence $O(n^{\frac{-2r\alpha}{2r\alpha +1}} )$  is known to be a lower bound on the optimal minimax rate, but the best upper-bound so far is $O(n^{-2r})$ and is achieved by empirical risk minimization~\cite{lin2018optimal} or stochastic gradient descent~\cite{dieuleveut2016nonparametric}, and the optimal rate is not known.

When $r \in (0,1]$,  $\alpha \geqslant 1$ and $\mu \in [1/\alpha, 1]$ are fixed, then 
 the rate of convergence $O(n^{\frac{-\max\{\mu, 2r\} \alpha}{2\max\{\mu, 2r\}\alpha +1}} )$  is known to be a lower bound on the optimal minimax rate~\cite{fischer2017sobolev}. This is attained by regularized empirical risk minimization when $2r \geqslant \mu$~\cite{fischer2017sobolev}, and \emph{now by SGD with multiple passes}, and it is thus the optimal rate in this situation. When $2r < \mu$, the only known upper bound is $O(n^{- 2 \alpha r / ( \mu \alpha + 1) })$, and the optimal rate is not known.

 \section{Experiments}
\label{sec:experiments}
In our experiments, the main goal is to show that with more that one pass over the data, we can improve the accuracy of SGD when the problem is hard. We also want to highlight our dependence of the optimal number of passes (that is $t/n$) with respect to the number of observations $n$. 

\paragraph{\bfseries Synthetic experiments.} Our main experiments are performed on artificial data following the setting in \cite{rudi2017generalization}. For this purpose, we take kernels $K$ corresponding to splines of order $q$ (see \cite{Wahba1990spline}) that fulfill Assumptions \ref{asm:iid} \ref{asm:bounded} \ref{asm:hyp-space} \ref{asm:capacity_condition} \ref{asm:source_condition} \ref{asm:reg-Pfrho}. Indeed, let us consider the following function
\[   
\Lambda_q(x,z) =\sum_{k \in \Z}\frac{e^{2 i \pi k (x-z)}}{\left|k\right|^q},
\]
defined almost everywhere on $[0,1]$, with $q \in \R$, and for which we have the interesting relationship: $\langle \Lambda_q(x,\cdot), \Lambda_{q'}(z,\cdot) \rangle_{\Ltwo} = \Lambda_{q + q'}(x,z)$ for any $q,q' \in \R$. Our setting is the following:
\begin{itemize}
\item {\bfseries Input distribution:} $\X = [0,1]$ and $\rho_\X$ is the uniform distribution.
\item {\bfseries Kernel:} $\forall (x,z) \in [0,1], \ K(x,z) = \Lambda_{\alpha} (x,z)$.
\item {\bfseries Target function:} $\forall x \in [0,1], \ \theta_\ast = \Lambda_{r\alpha + \frac{1}{2}} (x,0)$.
\item {\bfseries Output distribution :} $\rho(y|x)$ is a Gaussian with variance $\sigma^2$ and mean $\theta_\ast$.
\end{itemize}
For this setting we can show that the learning problem satisfies Assumptions \ref{asm:iid} \ref{asm:bounded} \ref{asm:hyp-space} \ref{asm:capacity_condition} \ref{asm:source_condition} \ref{asm:reg-Pfrho} with $r,\ \alpha,\ \textrm{and} \mu = 1/\alpha $.
We take different values of these parameters to encounter all the different regimes of the problems shown in Figure \ref{fig:optimality_zones}. 

For each $n$ from $100$ to $1000$, we found the optimal number of steps $t_*(n)$ that minimizes the test error $F(\bar{\theta}_t) - F(\theta_\ast)$. Note that because of overfitting the test error increases for $t > t_*(n)$. In Figure \ref{fig:t_versus_n}, we show $t_*(n)$ with respect to $n$ in $\log$ scale. As expected, for the easy problems (where $r \geqslant \frac{\alpha-1}{2\alpha}$, see top left and right plots), the slope of the plot is $1$ as one pass over the data is enough: $t_*(n) = \Theta(n)$.  But we see that for hard problems (where $r \leqslant \frac{\alpha-1}{2\alpha}$, see bottom left and right plots), we need more than one pass to achieve optimality as the optimal number of iterations is very close to $ t_*(n) = \Theta\left(n^{\frac{\alpha}{2r\alpha + 1}}\right) $. That matches the theoretical predictions of Theorem \ref{thm:main_result}. We also notice in the plots that, the bigger $\frac{\alpha}{2r\alpha + 1}$ the harder the problem is and the bigger the number of epochs we have to take. Note, that to reduce the noise on the estimation of $t_*(n)$, plots show an average over 100 replications. 

To conclude, the experiments presented in the section correspond exactly to the theoretical setting of the article (sampling with replacement), however we present in Figures \ref{fig:t_versus_n_cycling} and \ref{fig:t_versus_n_without_replacement} of Section \ref{app:experiments} of the Appendix results on the same datasets for two different ways of sampling the data:
(a)\emph{without replacement}: for which we select randomly the data points but never use twice the same point in one epoch, (b) \emph{cycles}: for which we pick successively the data points in the same order. The obtained scalings relating number of iterations or passes to number of observations are the same.
\begin{figure}[ht]
\footnotesize
\includegraphics[width=0.48\textwidth]{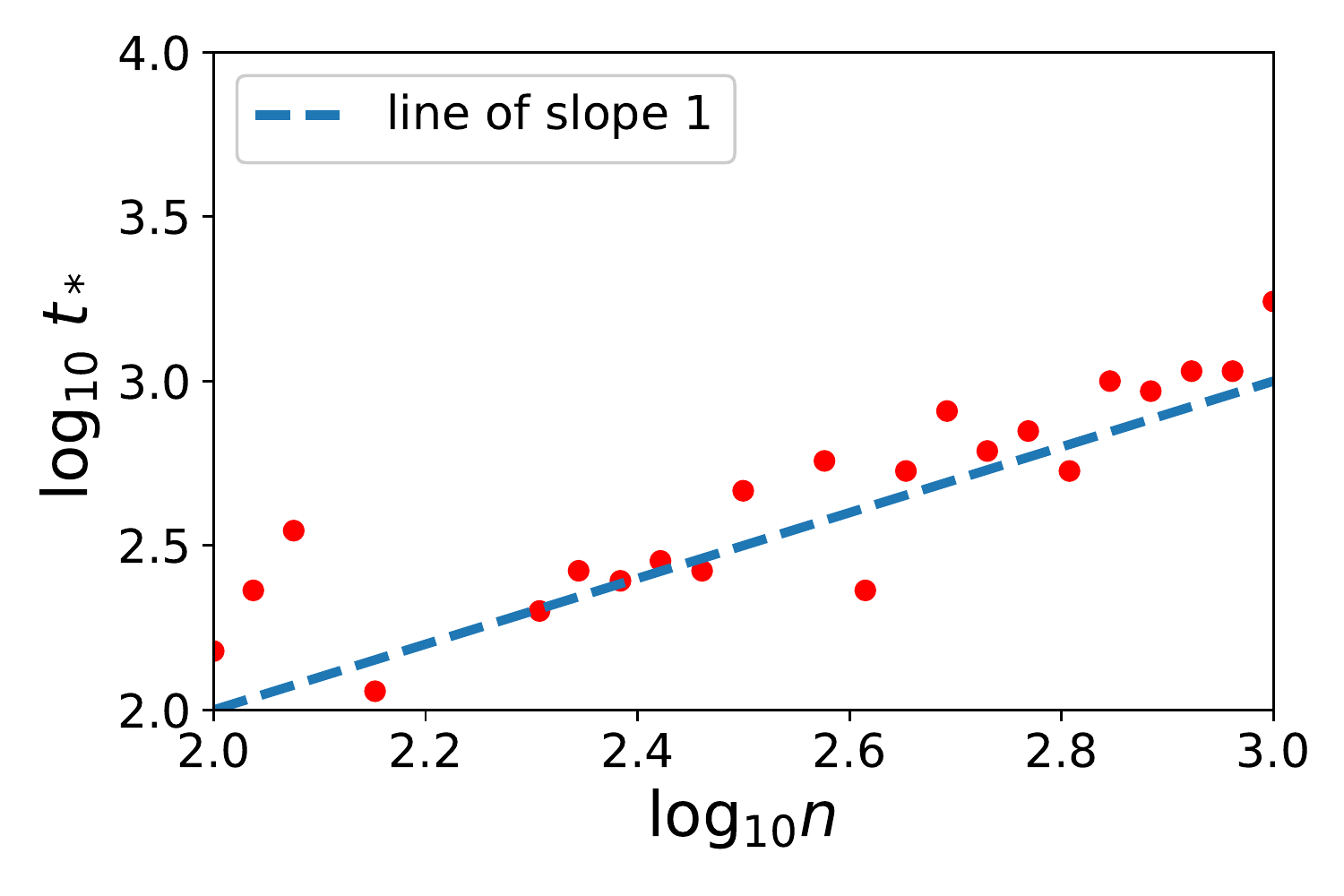}
\hspace{0.5cm}%
\includegraphics[width=0.48\textwidth]{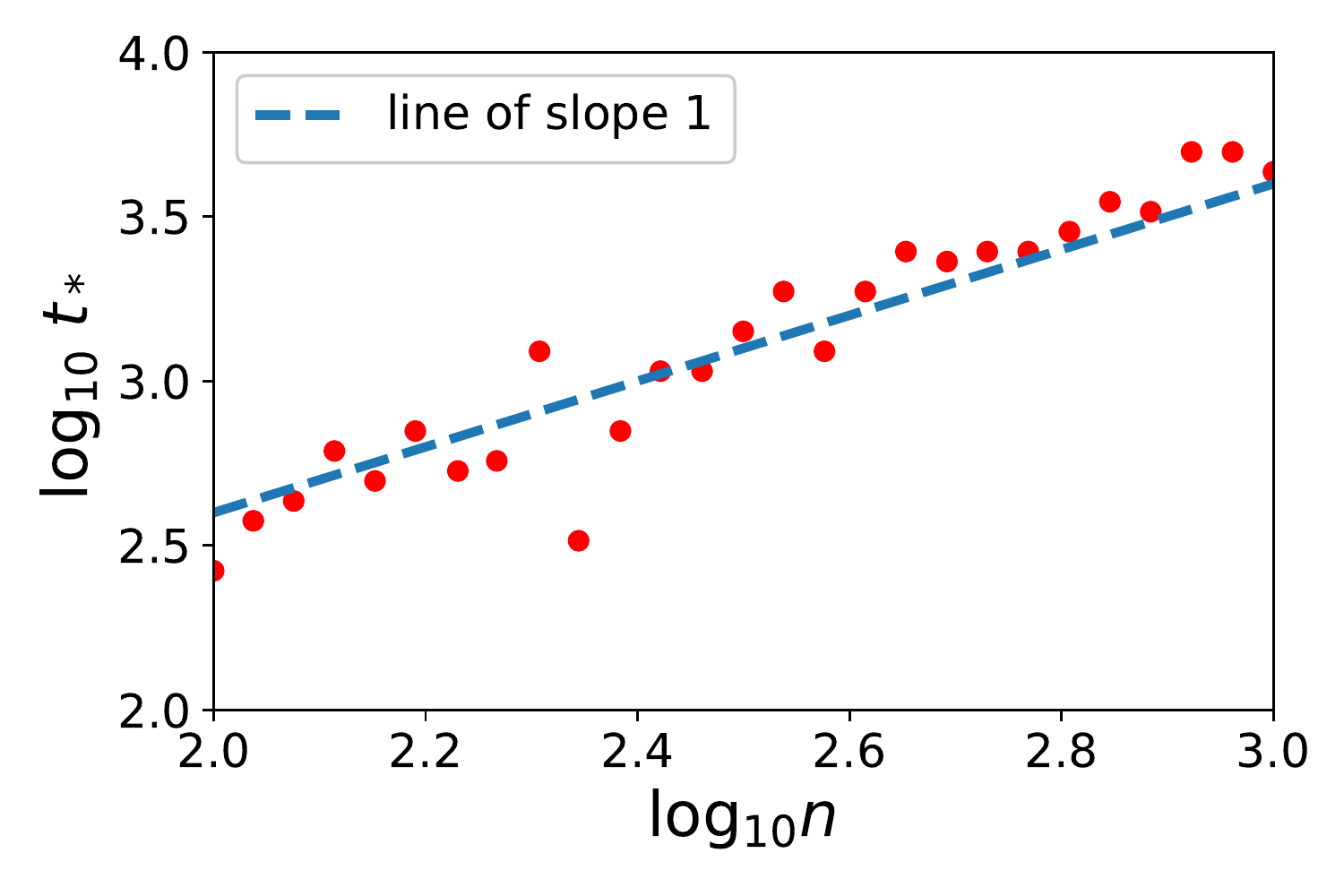} \\
\includegraphics[width=0.48\textwidth]{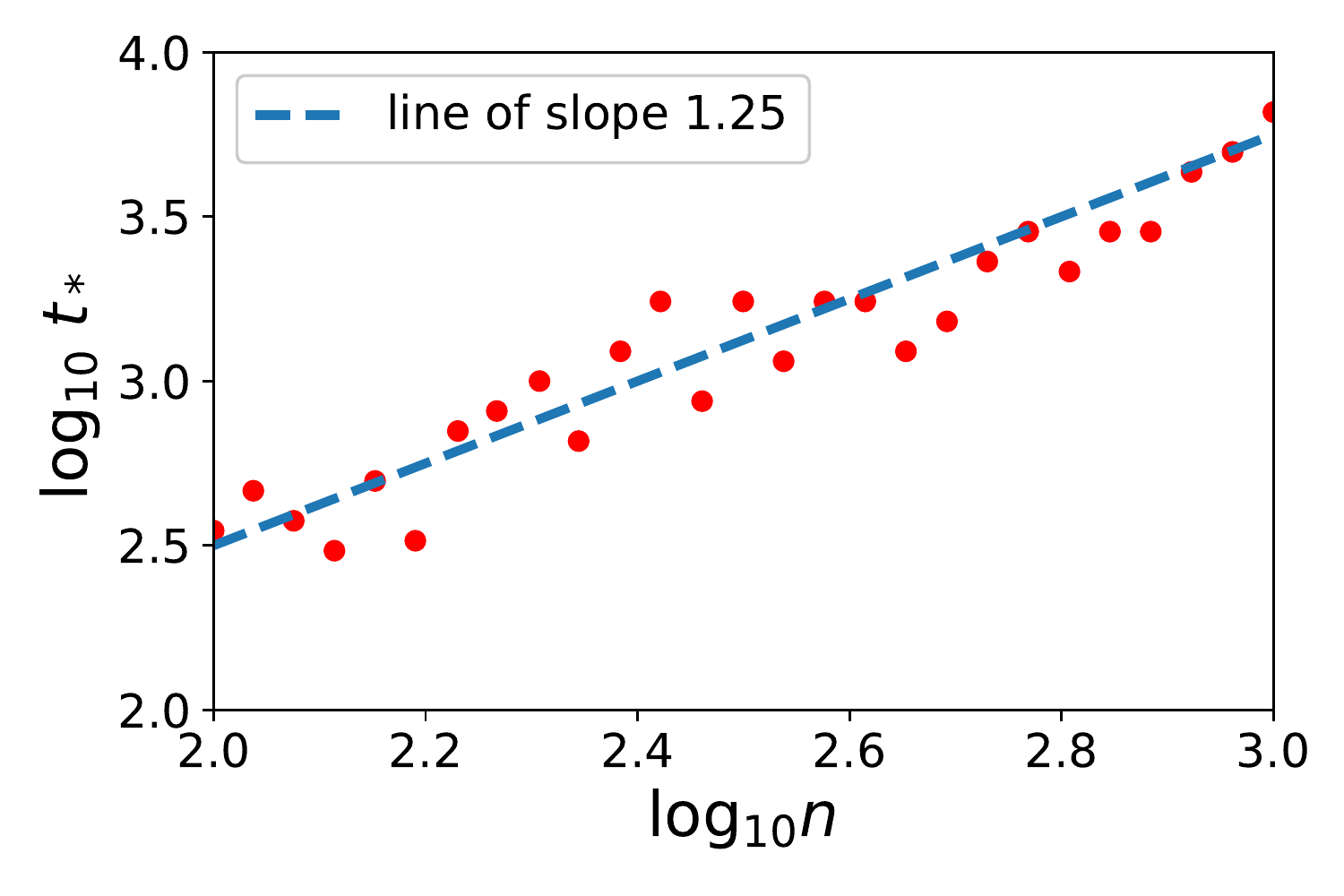}
\hspace{0.5cm}%
\includegraphics[width=0.48\textwidth]{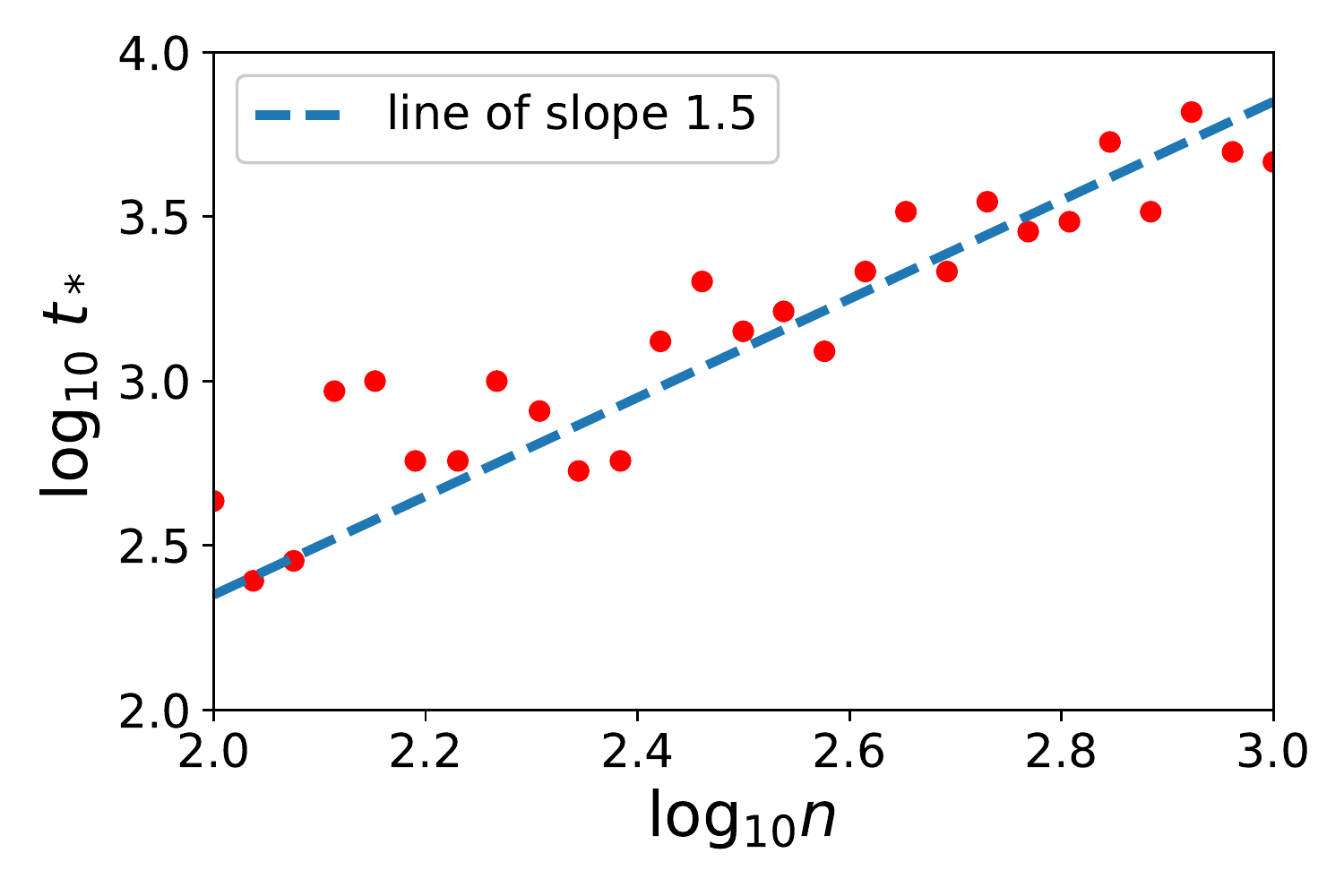}
\vspace{-0.6cm}
\caption{ \small The four plots represent each a different configuration on the $(\alpha , r)$ plan represented in Figure \ref{fig:optimality_zones}, for $r = 1/(2\alpha) $. {\bfseries Top left} ($\alpha = 1.5$) and {\bfseries right} ($\alpha = 2$) are two easy problems (Top right is the limiting case where $r = \frac{\alpha-1}{2\alpha}$) for which one pass over the data is optimal. {\bfseries Bottom left} ($\alpha = 2.5$) and {\bfseries right} ($\alpha = 3$) are two hard problems for which an increasing number of passes is required. The blue dotted line are the slopes predicted by the theoretical result in Theorem \ref{thm:main_result}. }
\label{fig:t_versus_n}
\end{figure}

\paragraph{\bfseries Linear model.} To illustrate our result with some real data, we show how the optimal number of passes over the data increases with the number of samples. In Figure \ref{fig:MNIST}, we simply performed linear least-squares regression on the MNIST dataset and plotted the optimal number of passes over the data that leads to the smallest error on the test set. Evaluating $\alpha$ and $r$ from Assumptions \ref{asm:capacity_condition} and \ref{asm:source_condition}, we found $\alpha = 1.7$ and $r = 0.18$. As $r = 0.18 \leqslant \frac{\alpha - 1}{2\alpha} \sim 0.2$, Theorem \ref{thm:main_result} indicates that this corresponds to a situation where only one pass on the data is not enough, confirming the behavior of Figure \ref{fig:MNIST}. This suggests that learning MNIST with linear regression is a \textit{hard problem}.

\begin{figure}[ht]
\center
\includegraphics[width=0.5\textwidth]{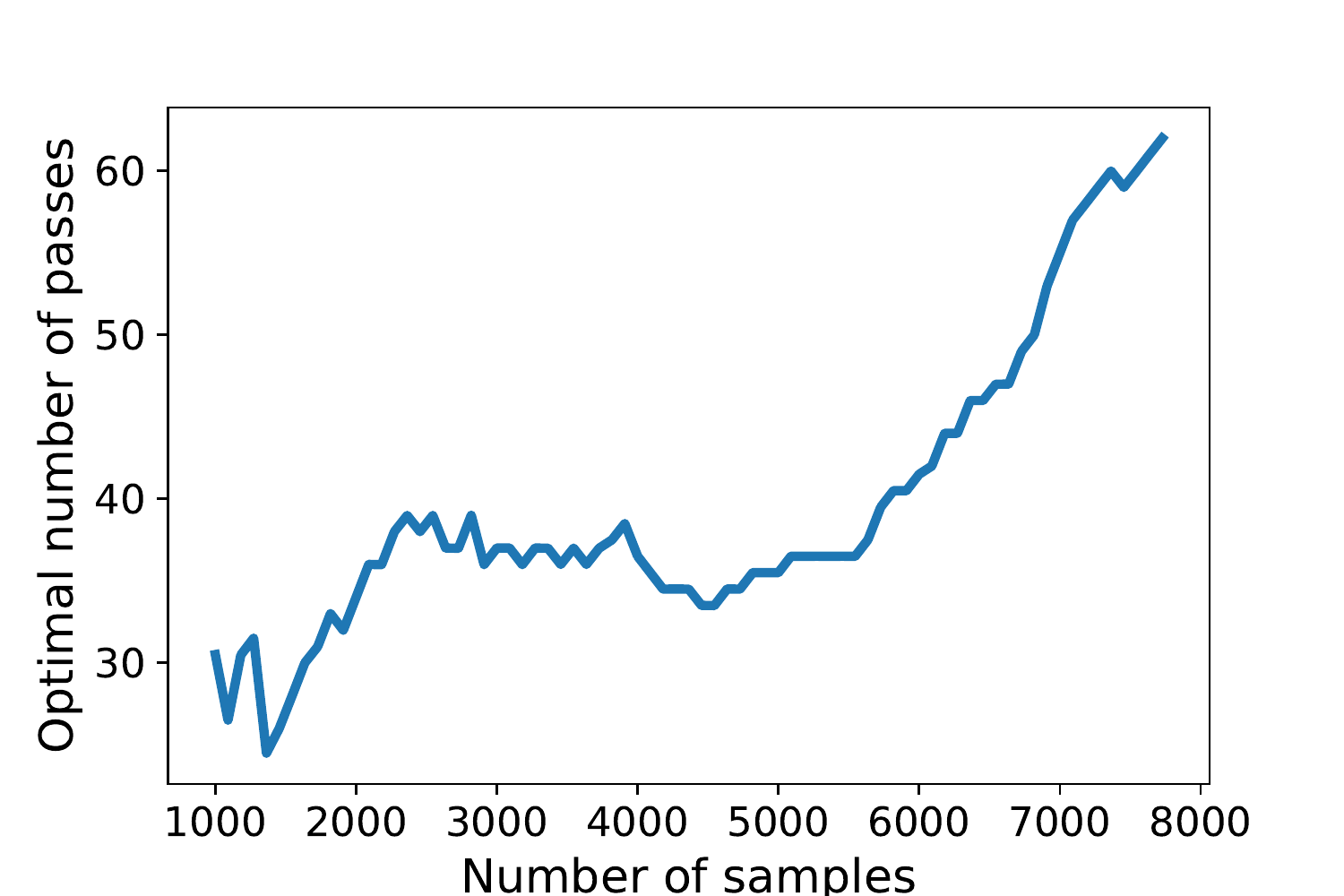}
\caption{ \small For the MNIST data set, we show the optimal number of passes over the data with respect to the number of samples in the case of the linear regression.}
\label{fig:MNIST}
\end{figure}

\section{Conclusion}

In this paper, we have shown that for least-squares regression, in hard problems where single-pass SGD is not statistically optimal ($r <  \frac{\alpha-1}{2\alpha}$), then multiple passes lead to statistical optimality with a number of passes that somewhat surprisingly needs to  grow with sample size, with a convergence rate which is superior to previous analyses of stochastic gradient. Using a non-parametric estimation, we show that under certain conditions ($2r \geqslant \mu$), we attain statistical optimality.

Our work could be extended in several ways: (a) our experiments suggest that cycling over the data and cycling with random reshuffling perform similarly to sampling with replacement, it would be interesting to combine our theoretical analysis with work aiming at analyzing other sampling schemes~\cite{NIPS2016_6245,gurbuzbalaban2015random}.  (b) Mini-batches could be also considered with a potentially interesting effects compared to the streaming setting. Also, 
(c) our analysis focuses on least-squares regression, an extension to all smooth loss  functions would widen its applicability.
Moreover, (d) providing optimal efficient algorithms for the situation $2r < \mu$ is a clear open problem (for which the optimal rate is not known, even for non-efficient algorithms).   Additionally, (e) in the context of classification, we could combine our analysis with~\cite{loucas} to study the potential discrepancies between training and testing losses and errors when considering high-dimensional models~\cite{zhang2016understanding}. More generally, (f) we could explore the effect of our analysis for methods based on the least squares estimator in the context of structured prediction \cite{ciliberto2016consistent,osokin2017structured,ciliberto2018localized} and (non-linear) multitask learning \cite{ciliberto2017consistent}. Finally, (g) to reduce the computational complexity of the algorithm, while retaining the (optimal) statistical guarantees, we could combine multi-pass stochastic gradient descent, with approximation techniques like {\em random features} \cite{rahimi2008random}, extending the analysis of \cite{carratino2018learning} to the more general setting considered in this paper.

\subsection*{Acknowledgements}
We acknowledge support from the European Research Council (grant SEQUOIA 724063). We also thank Raphaël Berthier and Yann Labbé for their enlightening advices on this project.

\bibliographystyle{unsrt}
\bibliography{multipass_sgd}

\clearpage

\appendix

{\bfseries \huge Appendix}

\vspace{0.5cm}

The appendix in constructed as follows:

\BIT
\item
We first present in Section \ref{sec:appsgd} a new result for stochastic gradient recursions which generalizes the work of ~\cite{bach2013non} and~\cite{dieuleveut2016nonparametric} to more general norms. This result could be used in other contexts.

\item The proof technique for Theorem~\ref{thm:main_result} is presented in \mysec{proof}.

\item In \mysec{SGD-BGD} we give a proof of the various lemmas needed in the first part of the proof of Theorem~\ref{thm:main_result} (deviation between SGD and batch gradient descent).

\item In \mysec{ales} we provide new results for the analysis of batch gradient descent, which are adapted to our new \ref{asm:hyp-space}, and instrumental in proving Theorem~\ref{thm:main_result} in \mysec{proof}.

\item Finally, in Section \ref{app:experiments} we present experiments for different sampling techniques.

\EIT

\section{A general result for the SGD variance term}
\label{sec:appsgd}
Independently of the problem studied in this paper, we consider i.i.d.~observations $(z_t,\xi_t) \in \h \times \h$ a Hilbert space, and the recursion started from $\mu_0=0$.
\begin{align}
\label{eq:sgdmu}\mu_t = \left(I - \gamma z_t \otimes z_t \right)\mu_{t-1} + \gamma  \xi_t
\end{align}
 (this will applied with $z_t = \Phi(x_{i(t)})$). This corresponds to the variance term of SGD.  We denote by $\bar{\mu}_t$ the averaged iterate
$\bar{\mu}_t = \frac{1}{t} \sum_{i=1}^t \mu_i$.

The goal of the proposition below is to provide a bound on $\E \left[  \left\| H^{u/2} \bar{\mu}_t  \right\|^2  \right]$ for $u \in [0, \frac{1}{\alpha}+1]$, where $H = \E \left[z_t \otimes z_t\right]$ is such that $\tr H^{1/\alpha} $ is finite. Existing results only cover the case $u = 1$.
\begin{prop}[A general result for the SGD variance term]
\label{prop:generalSGD}
Let us consider the  recursion in \eq{sgdmu} started at $\mu_0 = 0$. Denote $\E \left[z_t \otimes z_t\right] = H$, assume that  $\tr H^{1/\alpha} $ is finite, $\E\left[ \xi_t \right] = 0$, $\E\left[ (z_t \otimes z_t)^2 \right] \preccurlyeq R^2 H$, $\E\left[ \xi_t \otimes \xi_t\right] \preccurlyeq \sigma^2 H$ and $\gamma R^2 \leqslant 1/4$, then for $u \in [0, \frac{1}{\alpha}+1]$:
\begin{align}
\E \left[  \left\| H^{u/2} \bar{\mu}_t  \right\|^2  \right] \leqslant  4 \sigma^2 \gamma^{1-u} \ \frac{\gamma^{1/\alpha}\tr H^{1/\alpha}}{t^{u-1/\alpha}}.
\end{align}
\end{prop}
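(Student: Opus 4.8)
The plan is to work entirely with the covariance operators of the iterates. Since $\mu_0=0$, $\E[\xi_t]=0$ and $(z_t,\xi_t)$ is independent of $\mu_{t-1}$, one checks by induction that $\E[\mu_t]=0$ and that $C_t:=\E[\mu_t\otimes\mu_t]$ obeys the linear recursion $C_t=\mathcal{T}(C_{t-1})+\gamma^2\Xi$ with $C_0=0$, where $\Xi=\E[\xi_t\otimes\xi_t]\preccurlyeq\sigma^2 H$ and $\mathcal{T}$ is the self-adjoint super-operator $\mathcal{T}(S)=S-\gamma(HS+SH)+\gamma^2\mathcal{M}(S)$ with $\mathcal{M}(S)=\E[(z_t\otimes z_t)S(z_t\otimes z_t)]$. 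I would record three elementary facts: (i) $\mathcal{T}$ is order-preserving on positive operators; (ii) $\mathcal{M}(S)\succcurlyeq HSH$, so $\mathcal{T}(S)\succcurlyeq(I-\gamma H)S(I-\gamma H)$; and (iii) $\mathcal{M}(S)\preccurlyeq R^2\|S\|_{\mathrm{op}}H$, which follows from $S\preccurlyeq\|S\|_{\mathrm{op}}I$ together with the hypothesis $\E[(z_t\otimes z_t)^2]\preccurlyeq R^2 H$.

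Next I would reduce the averaged quantity to single-time covariances. Writing $\mathcal{F}_i$ for the natural filtration, the identity $\E[\mu_j\mid\mathcal{F}_i]=(I-\gamma H)^{j-i}\mu_i$ for $j\geqslant i$ gives $\E[\mu_i\otimes\mu_j]=C_i(I-\gamma H)^{j-i}$. Hence, with $B=I-\gamma H$ and using that all factors are functions of $H$ and therefore commute,
\[ \tr\big(H^u\,\E[\bar{\mu}_t\otimes\bar{\mu}_t]\big)=\frac{1}{t^2}\sum_{i=1}^t\tr\!\big(C_i\,H^{u/2}W_{t-i}(H)H^{u/2}\big),\qquad W_N(H)=I+2\sum_{m=1}^N B^m. \]
Each summand is nonnegative since $C_i\succcurlyeq0$ and $H^{u/2}W_{t-i}(H)H^{u/2}\succcurlyeq0$, and eigenvalue-wise $W_N$ is bounded by $2\min(1/(\gamma\lambda),N+1)$, which already shows where the averaging gain comes from.

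The crucial step is an \emph{a priori} bound on the covariance that is itself a function of $H$, collapsing the whole expression to a scalar sum over the spectrum: I claim $C_i\preccurlyeq\gamma\sigma^2(I-B^{2i})$. I would prove this by induction, assuming $C_{i-1}\preccurlyeq\gamma\sigma^2 H\Psi_{i-1}(H)$ with $\|C_{i-1}\|_{\mathrm{op}}\leqslant\gamma\sigma^2$, and bounding $C_i\preccurlyeq BC_{i-1}B+\gamma^2\mathcal{M}(C_{i-1})+\gamma^2\Xi\preccurlyeq\gamma\sigma^2 H\big(B^2\Psi_{i-1}+\gamma(1+\gamma R^2)\big)$ via (iii) and $\Xi\preccurlyeq\sigma^2 H$; the condition $\gamma R^2\leqslant1/4$ is exactly what keeps the multiplicative-noise contribution a small enough fraction of the injected noise for the induction to close with the stated constant. \emph{This is the main obstacle:} because $\mathcal{M}$ does not preserve functions of $H$, the true $C_i$ is genuinely not diagonal in the eigenbasis of $H$, and $\gamma R^2\leqslant1/4$ is precisely the budget that lets one dominate $C_i$ by a function of $H$ anyway.

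Finally I would diagonalize $H=\sum_m\lambda_m\, e_m\otimes e_m$ and insert the covariance bound to get
\[ \tr\big(H^u\,\E[\bar{\mu}_t\otimes\bar{\mu}_t]\big)\leqslant\frac{2\gamma\sigma^2}{t^2}\sum_{i=1}^t\sum_m\big(1-(1-\gamma\lambda_m)^{2i}\big)\lambda_m^u\min\!\big(\tfrac{1}{\gamma\lambda_m},\,t-i+1\big). \]
I would finish by splitting the eigenvalues at the threshold $\lambda=1/(\gamma t)$: for $\lambda_m\leqslant1/(\gamma t)$ use $1-(1-\gamma\lambda_m)^{2i}\leqslant2i\gamma\lambda_m$ and $\min(\cdot)=t-i+1$; for $\lambda_m>1/(\gamma t)$ use $1-(\cdot)\leqslant1$ and $\min(\cdot)\leqslant1/(\gamma\lambda_m)$. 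Carrying out the $i$-sum and then, in each regime, factoring $\lambda_m^{1/\alpha}$ out of $\lambda_m^{u+1}$ (resp.\ $\lambda_m^{u-1}$) using $\lambda_m\lessgtr1/(\gamma t)$ turns both pieces into a constant times $\gamma^{1/\alpha-u+1}t^{1/\alpha-u}\tr H^{1/\alpha}$. The restriction $u\in[0,1+1/\alpha]$ enters exactly here, guaranteeing the two exponents $u+1-1/\alpha\geqslant0$ and $u-1-1/\alpha\leqslant0$ have the right sign; using this threshold split rather than a single interpolation $\min(1,x)\leqslant x^\beta$ is what covers the hard regime $u<1/\alpha$, where $\tr H^u$ is infinite and a naive bound would diverge.
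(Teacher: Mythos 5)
Your proof is correct, but it takes a genuinely different route from the paper's. The paper follows a perturbation technique: it first solves the \emph{semi-stochastic} recursion (with $z_t \otimes z_t$ replaced by $H$) in closed form (Lemma~\ref{lemma:semi-stoSGD}), then expands $\mu_t = \sum_{r \geqslant 0}\mu_t^r$ where each $\mu_t^r$ is a semi-stochastic recursion driven by the noise $(H - z_t\otimes z_t)\mu_{t-1}^{r-1}$, bounds each term's covariance by induction on $r$ (Lemma~\ref{lemma:Boundcovariance}), controls the remainder with a crude bound (Lemma~\ref{lemma:BoundSGD}), and finally sums a geometric series in $\sqrt{\gamma R^2}$ --- which is exactly where $\gamma R^2 \leqslant 1/4$ and the constant $4$ come from. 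You instead stay entirely at the level of second moments: the recursion $C_t = \mathcal{T}(C_{t-1}) + \gamma^2 \Xi$, the exact cross-covariance identity $\E[\mu_i \otimes \mu_j] = C_i (I - \gamma H)^{j-i}$, and above all the operator-domination induction $C_i \preccurlyeq \gamma\sigma^2\big(I - (I-\gamma H)^{2i}\big)$, which I checked does close: it reduces eigenvalue-wise to $\gamma(\lambda + R^2) \leqslant 1$, guaranteed by $\gamma R^2 \leqslant 1/4$ together with $\|H\| \leqslant R^2$ (itself a consequence of $H^2 \preccurlyeq \E[(z_t\otimes z_t)^2] \preccurlyeq R^2 H$). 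Your route buys a few things: no infinite expansion, no remainder term, no triangle inequality over a series; every step is an exact operator inequality; and you only use the stated hypothesis $\E[(z_t\otimes z_t)^2] \preccurlyeq R^2 H$, whereas the paper's remainder bound (Lemma~\ref{lemma:BoundSGD}) silently adds the almost-sure bound $\|z_t\|^2 \leqslant R^2$, which is not among the hypotheses of Proposition~\ref{prop:generalSGD}. The paper's route is in turn more modular, reusing the semi-stochastic lemma and existing single-pass results. One caveat: your bookkeeping does not reproduce the constant $4$. Following your own intermediate bounds ($W_N \leqslant 2\min(N+1, 1/(\gamma\lambda))$ and $\sum_{i \leqslant t} i(t-i+1) = t(t+1)(t+2)/6$), the low-eigenvalue regime contributes up to $4\sigma^2\gamma^{1-u}\gamma^{1/\alpha} t^{1/\alpha - u}\tr H^{1/\alpha}$ and the high regime another $2\sigma^2\gamma^{1-u}\gamma^{1/\alpha} t^{1/\alpha - u}\tr H^{1/\alpha}$, i.e.\ a constant of roughly $6$. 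This is immaterial for how the proposition is used downstream (only the dependence on $t$, $\gamma$ and $\tr H^{1/\alpha}$ matters), but as stated the proposition claims $4$, so you should either tighten the elementary sums or state your version with a larger numerical constant.
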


\subsection{Proof principle}

We follow closely the proof technique of \cite{bach2013non}, and prove Proposition \ref{prop:generalSGD} by showing it first for a ``semi-stochastic'' recursion, where $z_t \otimes z_t$ is replaced by its expectation (see Lemma \ref{lemma:semi-stoSGD}). We will then compare our general recursion to the semi-stochastic one.

\subsection{Semi-stochastic recursion}
\label{sec:semisto}
\begin{lemma}[Semi-stochastic SGD]
\label{lemma:semi-stoSGD}
Let us consider the following recursion $\mu_t = \left(I - \gamma H \right)\mu_{t-1} + \gamma  \xi_t$ started at $\mu_0 = 0$. Assume that  $\tr H^{1/\alpha} $ is finite, $\E\left[ \xi_t \right] = 0$, $\E\left[ \xi_t \otimes \xi_t\right] \preccurlyeq \sigma^2 H$ and $\gamma H \preccurlyeq I$, then for $u \in [0, \frac{1}{\alpha}+1]$:
\begin{align}
\E \left[  \left\| H^{u/2} \bar{\mu}_t  \right\|^2  \right] \leqslant \sigma^2 \gamma^{1-u} \ \gamma^{1/\alpha}\tr H^{1/\alpha}t^{1/\alpha-u}.
\end{align}
\end{lemma}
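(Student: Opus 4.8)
The plan is to exploit the fact that, once $z_t\otimes z_t$ is replaced by the deterministic operator $H$, the recursion becomes \emph{linear} in the noise, so it can be solved in closed form and analysed spectrally. I would diagonalize $H=\sum_i \lambda_i\, e_i\otimes e_i$ in an orthonormal eigenbasis $(e_i)$ and work coordinate-wise from the start; since $H$ commutes with $(I-\gamma H)$ and every quantity we compute is an inner product, this sidesteps any worry about $H$ being non-invertible. Writing $(\mu_t)_i=\langle e_i,\mu_t\rangle$ and $(\xi_t)_i=\langle e_i,\xi_t\rangle$, each coordinate obeys the scalar recursion $(\mu_t)_i=(1-\gamma\lambda_i)(\mu_{t-1})_i+\gamma(\xi_t)_i$, whose solution from $\mu_0=0$ is $(\mu_t)_i=\gamma\sum_{k=1}^t (1-\gamma\lambda_i)^{t-k}(\xi_k)_i$. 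Averaging and swapping the two sums, the geometric series collapses to
\[
(\bar\mu_t)_i=\frac{1}{t}\sum_{k=1}^t \frac{1-(1-\gamma\lambda_i)^{\,t-k+1}}{\lambda_i}\,(\xi_k)_i ,
\]
where the coefficient of $(\xi_k)_i$ is read as its continuous limit $\gamma(t-k+1)$ when $\lambda_i=0$.

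Next I would take expectations. Because the observations are i.i.d., the $\xi_k$ are independent with $\E[\xi_k]=0$, so the $(\xi_k)_i$ are uncorrelated across $k$ and all cross terms in $\E|(\bar\mu_t)_i|^2$ vanish. Evaluating $\E[\xi_k\otimes\xi_k]\preccurlyeq\sigma^2 H$ at $e_i$ gives $\E|(\xi_k)_i|^2\le\sigma^2\lambda_i$; in particular the noise vanishes on the kernel of $H$, so coordinates with $\lambda_i=0$ contribute nothing and may be discarded. For the surviving coordinates, after the change of index $m=t-k+1$,
\[
\E\big[\|H^{u/2}\bar\mu_t\|^2\big]=\sum_i\lambda_i^{u}\,\E|(\bar\mu_t)_i|^2\le \frac{\sigma^2}{t^2}\sum_{i:\lambda_i>0}\lambda_i^{u-1}\sum_{m=1}^t\big(1-(1-\gamma\lambda_i)^m\big)^2 .
\]
In operator form this is $\tfrac{\sigma^2}{t^2}\tr\big[\sum_{m=1}^t(I-(I-\gamma H)^m)^2 H^{u-1}\big]$, whose only previously analysed case is $u=1$~\cite{bach2013non,dieuleveut2016nonparametric}.

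The heart of the argument is the per-eigenvalue bound on $\sum_{m=1}^t(1-(1-x)^m)^2$ with $x=\gamma\lambda_i\in(0,1]$, where I use $\gamma H\preccurlyeq I$. I would combine the elementary bound $(1-(1-x)^m)^2\le 1$ with the Bernoulli estimate $1-(1-x)^m\le mx$, giving $(1-(1-x)^m)^2\le (mx)^2$, into the interpolated inequality
\[
\big(1-(1-x)^m\big)^2\le\min\{1,(mx)^2\}\le (mx)^{2\theta},\qquad \theta\in[0,1].
\]
Choosing $2\theta=v:=1/\alpha-u+1$ and using $\sum_{m=1}^t m^{v}\le t^{\,v+1}$ yields $\sum_{m=1}^t(1-(1-x)^m)^2\le (\gamma\lambda_i)^{v}t^{\,v+1}$. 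Substituting back, the exponents combine cleanly: $\lambda_i^{\,u-1+v}=\lambda_i^{1/\alpha}$, $\gamma^{v}=\gamma^{1-u}\gamma^{1/\alpha}$, and $t^{\,v+1}/t^2=t^{\,v-1}=t^{1/\alpha-u}$, so summing over the spectrum produces exactly $\sigma^2\gamma^{1-u}\gamma^{1/\alpha}\,\tr H^{1/\alpha}\,t^{1/\alpha-u}$.

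The only genuinely delicate point — and the reason the statement is restricted to $u\in[0,\tfrac1\alpha+1]$ — is this interpolation step: the exponent must satisfy $\theta=v/2\in[0,1]$, i.e.\ $v\in[0,2]$, which holds precisely because $u\in[0,\tfrac1\alpha+1]$ together with $\alpha>1$ force $v\in[0,\tfrac1\alpha+1]\subseteq[0,2]$. Everything else (the unrolling, the collapse of the geometric sum, and $\sum_m m^v\le t^{v+1}$) is routine, and the hypothesis $\gamma H\preccurlyeq I$ is exactly what keeps $x\in[0,1]$ so that both elementary bounds, and hence their interpolation, are valid.
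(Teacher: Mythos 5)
Your proof is correct and follows essentially the same route as the paper's: unroll the linear recursion, collapse the geometric sums, use independence and $\E[\xi_k\otimes\xi_k]\preccurlyeq\sigma^2 H$ to reduce to the spectral quantity $\frac{\sigma^2}{t^2}\sum_i \lambda_i^{u-1}\sum_{m=1}^t\big(1-(1-\gamma\lambda_i)^m\big)^2$, and then apply the interpolation bound $\big(1-(1-x)^m\big)^2\le (mx)^{1/\alpha-u+1}$, which is exactly the key inequality the paper establishes in its footnote (there obtained by multiplying the two elementary bounds rather than via $\min\{1,(mx)^2\}$). If anything, your write-up is slightly tidier at two points: working coordinate-wise from the start handles a non-invertible $H$ explicitly (the paper formally writes $H^{-1}$), and the crude bound $\sum_{m=1}^t m^v\le t^{v+1}$ cleanly replaces the paper's sum-to-integral comparison.
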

\begin{proof}
For $t \geqslant 1$ and $u \in [0, \frac{1}{\alpha}+1]$, using an explicit formula for $\mu_t$ and $\bar{\mu}_t$ (see  \cite{bach2013non} for details), we get:
\begin{align*}
\mu_t &= \left(I - \gamma H \right)\mu_{t-1} + \gamma  \xi_t = \left(I - \gamma H \right)^t\mu_{0} + \gamma \sum_{k = 1}^t \left(I - \gamma H \right)^{t-k} \xi_k \\
\bar{\mu}_t &= \frac{1}{t}\sum_{u = 1}^t \mu_u =  \frac{\gamma}{t}\sum_{u = 1}^t \sum_{k = 1}^u \left(I - \gamma H \right)^{u-k} \xi_k =  \frac{1}{t}\sum_{k = 1}^t H^{-1} \left( I - \left(I - \gamma H \right)^{t-k+1}\right) \xi_k \\
 \E \left[  \left\| H^{u/2} \bar{\mu}_t  \right\|^2  \right] & =  \frac{1}{t^2}\E \sum_{k = 1}^t \tr\left[ \left( I - \left(I - \gamma H \right)^{t-k+1}\right)^2H^{u-2}  \xi_k \otimes \xi_k \right]\\
 & \leqslant  \frac{\sigma^2}{t^2}\sum_{k = 1}^t \tr\left[ \left( I - \left(I - \gamma H \right)^k\right)^2 H^{u-1} \right]
  \mbox{ using } \E\left[ \xi_t \otimes \xi_t\right] \preccurlyeq \sigma^2 H.
\end{align*}
Now, let $(\lambda_i)_{i \in \N^*}$ be the non-increasing sequence of eigenvalues of the operator $H$. We obtain:
\begin{align*}
\E \left[  \left\| H^{u/2} \bar{\mu}_t  \right\|^2  \right] & \leqslant \frac{\sigma^2}{t^2}\sum_{k = 1}^t \sum_{i = 1}^\infty  \left( I - \left(I - \gamma \lambda_i \right)^k\right)^2 \lambda_i^{u - 1}.
\end{align*}
We can now use a simple result\footnote{Indeed, adapting a similar result from~\cite{bach2013non}, on the one hand, $1-(1-\rho )^k\leqslant 1$ implying that $(1-(1-\rho)^k)^{1-1/\alpha+u}\leqslant 1$. On the other hand, $1-(1-\gamma x )^k\leqslant \gamma k x $ implying that $(1-(1-\rho )^k)^{1+1/\alpha-u}\leqslant (  k \rho)^{1+1/\alpha-u}$. Thus by multiplying the two we get $(1-(1-\rho)^k)^2 \leqslant (k \rho )^{1-u + 1/\alpha}$.}  that for any $\rho \in [0,1]$, $ k \geqslant 1$ and $u \in [0, \frac{1}{\alpha}+1]$, we have : $(1-(1-\rho )^k)^2 \leqslant (k \rho )^{1-u + 1/\alpha}$, applied to $\rho = \gamma \lambda_i$. We get, by comparing sums to integrals:
\begin{align*}
\E \left[  \left\| H^{u/2} \bar{\mu}_t  \right\|^2  \right] & \leqslant \frac{\sigma^2}{t^2}\sum_{k = 1}^t \sum_{i = 1}^\infty  \left( I - \left(I - \gamma \lambda_i \right)^k\right)^2 \lambda_i^{u - 1}\\
& \leqslant \frac{\sigma^2}{t^2}\sum_{k = 1}^t \sum_{i = 1}^\infty (k \gamma \lambda_i )^{1-u + 1/\alpha}  \lambda_i^{u - 1} \\
& \leqslant \frac{\sigma^2}{t^2} \gamma^{1-u+1/\alpha}\tr H^{1/\alpha}\sum_{k = 1}^t   k^{1-u+1/\alpha}\\
& \leqslant \frac{\sigma^2}{t^2} \gamma^{1-u+1/\alpha}\tr H^{1/\alpha}\int_{ 1}^t   y^{1-u+1/\alpha} dy \\
& \leqslant \frac{\sigma^2}{t^2} \gamma^{1-u} \ \gamma^{1/\alpha}\tr H^{1/\alpha}\frac{t^{2-u+1/\alpha}}{2-u+1/\alpha} \\
& \leqslant \sigma^2 \gamma^{1-u} \ \gamma^{1/\alpha}\tr H^{1/\alpha}t^{1/\alpha-u},
\end{align*}
which shows the desired result.
\end{proof}

\subsection{Relating the semi-stochastic recursion to the main recursion}
Then, to relate the semi-stochastic recursion with the true one, we use an expansion in the powers of $\gamma$ using recursively the perturbation idea from~\cite{aguech2000perturbation}. 

For $r \geqslant 0$, we define the sequence $(\mu_t^r)_{t \in \N}$, for $t \geqslant 1$, 
\begin{align}
\mu_t^r = (I - \gamma H) \mu_{t-1}^r + \gamma \Xi_t^r,\textrm{ with } \Xi_t^r = 
\begin{cases} 
\ (H - z_t \otimes z_t) \mu_{t-1}^{r-1} \textrm{ if } r \geqslant 1 \\ 
\ \ \Xi_t^0 = \xi_t 
\end{cases}. 
\end{align}
We will show that $\mu_t \simeq \sum_{i = 0}^{\infty} \mu_t^i$. To do so, notice that for $r \geqslant 0$,  $\mu_t - \sum_{i = 0}^{r} \mu_t^i$ follows the recursion:
\begin{align}
\mu_t - \sum_{i = 0}^{r} \mu_t^i = (I - z_t \otimes z_t)\left(\mu_{t-1} - \sum_{i = 0}^{r} \mu_{t-1}^i\right) + \gamma \Xi_t^{r+1},
\end{align}
so that by bounding the covariance operator we can apply a classical SGD result. This is the purpose of the following lemma.
\begin{lemma}[Bound on covariance operator]
\label{lemma:Boundcovariance}
For any $r \geqslant 0$, we have the following inequalities:
\begin{align}
\E\left[ \Xi_t^{r}\otimes \Xi_t^{r}\right] \preccurlyeq \gamma^rR^{2r}\sigma^2H \ \textrm{ and } \ \E\left[ \mu_t^{r}\otimes \mu_t^{r}\right] \preccurlyeq \gamma^{r+1}R^{2r}\sigma^2 I.
\end{align}
\end{lemma}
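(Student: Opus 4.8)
The plan is to prove both inequalities simultaneously by induction on $r \geqslant 0$, since the two sequences $(\Xi_t^r)$ and $(\mu_t^r)$ are defined by mutual recursion and each bound feeds the next. The base case $r=0$ is immediate: $\Xi_t^0 = \xi_t$, so $\E[\Xi_t^0 \otimes \Xi_t^0] = \E[\xi_t \otimes \xi_t] \preccurlyeq \sigma^2 H$ by assumption, matching $\gamma^0 R^0 \sigma^2 H$. For the bound on $\mu_t^0$, note that $\mu_t^0$ follows the semi-stochastic recursion $\mu_t^0 = (I - \gamma H)\mu_{t-1}^0 + \gamma \xi_t$, so I would use the explicit summation formula as in the proof of Lemma~\ref{lemma:semi-stoSGD} to write $\mu_t^0 = \gamma \sum_{k=1}^t (I - \gamma H)^{t-k}\xi_k$, then bound $\E[\mu_t^0 \otimes \mu_t^0] = \gamma^2 \sum_{k=1}^t (I-\gamma H)^{t-k}\E[\xi_k \otimes \xi_k](I-\gamma H)^{t-k} \preccurlyeq \gamma^2 \sigma^2 \sum_{k=1}^t (I-\gamma H)^{2(t-k)} H$. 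The operator $\gamma \sum_{j\geqslant 0}(I-\gamma H)^{2j} H \preccurlyeq \gamma \sum_{j \geqslant 0}(I - \gamma H)^j H = I$ (since $\gamma H \preccurlyeq I$ makes the geometric series telescope), giving $\E[\mu_t^0 \otimes \mu_t^0] \preccurlyeq \gamma \sigma^2 I$, which matches $\gamma^{1} R^0 \sigma^2 I$.

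For the inductive step, assume both bounds hold at level $r-1$. To bound $\E[\Xi_t^r \otimes \Xi_t^r]$ with $\Xi_t^r = (H - z_t \otimes z_t)\mu_{t-1}^{r-1}$, I would condition on $\mu_{t-1}^{r-1}$ (which is independent of $z_t$) and use the key fourth-moment-type estimate: for a fixed vector $v$, $\E[(H - z_t\otimes z_t) (v \otimes v)(H - z_t \otimes z_t)] \preccurlyeq \E[(z_t \otimes z_t)(v\otimes v)(z_t\otimes z_t)] \preccurlyeq R^2 \|v\|^2 H$, where the last step invokes the assumption $\E[(z_t \otimes z_t)^2] \preccurlyeq R^2 H$ after taking the expectation appropriately. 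Taking the outer expectation over $\mu_{t-1}^{r-1}$ and using the induction hypothesis $\E[\mu_{t-1}^{r-1} \otimes \mu_{t-1}^{r-1}] \preccurlyeq \gamma^r R^{2(r-1)}\sigma^2 I$ (so that $\E\|\mu_{t-1}^{r-1}\|^2 = \tr \E[\mu_{t-1}^{r-1}\otimes \mu_{t-1}^{r-1}]$ appears as the scalar factor) yields $\E[\Xi_t^r \otimes \Xi_t^r] \preccurlyeq \gamma^r R^{2r}\sigma^2 H$, as required. Then the bound on $\mu_t^r$ follows exactly as in the base case: $\mu_t^r$ obeys the semi-stochastic recursion driven by $\Xi_t^r$, so the same telescoping geometric-series argument converts the variance bound $\gamma^r R^{2r}\sigma^2 H$ on the driving noise into the bound $\gamma^{r+1}R^{2r}\sigma^2 I$ on $\mu_t^r$.

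The main obstacle I anticipate is making the fourth-moment step fully rigorous in operator form. The subtlety is that $\E[(z_t \otimes z_t)(v \otimes v)(z_t \otimes z_t)]$ is not literally $\E[(z_t \otimes z_t)^2]\|v\|^2$; one must argue that $(z_t \otimes z_t)(v\otimes v)(z_t \otimes z_t) = \langle z_t, v\rangle^2 \, (z_t \otimes z_t) \preccurlyeq \|v\|^2 \|z_t\|^2 (z_t \otimes z_t)$, and then bound $\E[\|z_t\|^2 z_t \otimes z_t]$. Here I would use that $\|z_t\|^2 \leqslant R^2$ is \emph{not} assumed directly; rather the clean route is $\E[\langle z_t, v\rangle^2 z_t \otimes z_t] \preccurlyeq \E[(z_t \otimes z_t)^2]\,\|v\|^2 \preccurlyeq R^2 \|v\|^2 H$ by the given assumption $\E[(z_t\otimes z_t)^2]\preccurlyeq R^2 H$, after checking that dropping the $-H$ cross terms only helps (the map $v\mapsto \E[(H-z_t\otimes z_t)(v\otimes v)(H - z_t \otimes z_t)]$ is dominated by the one with $H$ removed because $\E[\Xi_t^r]=0$ eliminates the centering and the $H(v\otimes v)H$ contribution is subtracted). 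Verifying this domination carefully, rather than the scalar intuition, is the one place where the argument requires genuine attention.
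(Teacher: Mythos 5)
Your overall architecture (joint induction on $r$, base case via the explicit formula and the geometric-series telescoping, then converting each $\Xi$-bound into a $\mu$-bound by the semi-stochastic argument) is exactly the paper's, and those parts are sound. But the inductive step for $\E[\Xi_t^r \otimes \Xi_t^r]$ has a genuine gap, precisely at the place you yourself flagged as needing attention. You condition on $v = \mu_{t-1}^{r-1}$, prove the fixed-$v$ bound $\E[(H - z_t\otimes z_t)(v\otimes v)(H - z_t\otimes z_t)] \preccurlyeq R^2\|v\|^2 H$ (this part is correct), and then take the outer expectation, so your final bound carries the scalar factor $\E\|\mu_{t-1}^{r-1}\|^2 = \tr \E[\mu_{t-1}^{r-1}\otimes \mu_{t-1}^{r-1}]$. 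The induction hypothesis, however, only gives $\E[\mu_{t-1}^{r-1}\otimes\mu_{t-1}^{r-1}] \preccurlyeq \gamma^{r}R^{2r-2}\sigma^2 I$, and an operator bound by $cI$ does \emph{not} bound the trace by $c$: it bounds it by $c\dim(\h)$, which is infinite in the Hilbert-space setting of Proposition~\ref{prop:generalSGD} (where only $\tr H^{1/\alpha} < \infty$ is assumed) and dimension-dependent even in finite dimension, destroying exactly the dimension-free character the lemma must have. So the step ``yields $\E[\Xi_t^r\otimes\Xi_t^r] \preccurlyeq \gamma^r R^{2r}\sigma^2 H$'' does not follow from what you established.

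The fix is to integrate over $v$ \emph{before} discarding the operator structure, which is what the paper does. By independence of $z_t$ and $\mu_{t-1}^{r-1}$,
\begin{align*}
\E\left[\Xi_t^r \otimes \Xi_t^r\right] \preccurlyeq \E\left[(H - z_t\otimes z_t)\,\E\left[\mu_{t-1}^{r-1}\otimes\mu_{t-1}^{r-1}\right](H - z_t\otimes z_t)\right],
\end{align*}
and the induction hypothesis $\E[\mu_{t-1}^{r-1}\otimes\mu_{t-1}^{r-1}] \preccurlyeq cI$ is used \emph{inside} the sandwich: for self-adjoint $M$, $A \preccurlyeq cI$ implies $MAM \preccurlyeq cM^2$, so the right-hand side is at most $c\,\E[(H - z_t\otimes z_t)^2] = c\left(\E[(z_t\otimes z_t)^2] - H^2\right) \preccurlyeq cR^2 H$; no trace ever appears. (Incidentally, this identity, which uses $\E[z_t\otimes z_t]=H$, is also the correct justification for dropping the cross terms, rather than $\E[\Xi_t^r]=0$.) Equivalently, in your quadratic-form language: first compute $\E_v\left[\langle z,v\rangle^2\right] = \langle z, \E[v\otimes v]\, z\rangle \leqslant c\|z\|^2$, and only then bound $\E_z\left[c\|z\|^2 z\otimes z\right] = c\,\E[(z\otimes z)^2] \preccurlyeq cR^2H$. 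The pointwise-in-$v$ inequality $\langle z,v\rangle^2 \leqslant \|z\|^2\|v\|^2$ is the lossy move; everything else in your proposal is correct.
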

\begin{proof}
We propose a proof by  induction on $r$. For $r=0$, and $t \geqslant 0$, $\E\left[ \Xi_t^{0}\otimes \Xi_t^{0}\right] = \E\left[ \xi_t\otimes \xi_t\right] \preccurlyeq \sigma^2 H$ by assumption. Moreover,
\begin{align*}
\E\left[ \mu_t^{0}\otimes \mu_t^{0}\right] &= \gamma^2 \sum_{k=1}^{t-1}(I - \gamma H)^{t-k} \E\left[ \Xi_t^{0}\otimes \Xi_t^{0}\right] (I - \gamma H)^{t-k} \preccurlyeq \gamma^2 \sigma^2  \sum_{k=1}^{t-1} (I - \gamma H)^{2(t-k)} H \preccurlyeq \gamma \sigma^2 I.
\end{align*}
Then, for $r \geqslant 1$, 
\begin{align*}
\E\left[ \Xi_t^{r+1}\otimes \Xi_t^{r+1}\right] &\preccurlyeq \E [(H - z_t \otimes z_t)\mu_{t-1}^r \otimes \mu_{t-1}^r (H - z_t \otimes z_t)] \\
&= \E [(H - z_t \otimes z_t)\E[\mu_{t-1}^r \otimes \mu_{t-1}^r] (H - z_t \otimes z_t)] \\
&\preccurlyeq \gamma^{r+1}R^{2r}\sigma^2\E [(H - z_t \otimes z_t)^2]\\
&\preccurlyeq \gamma^{r+1}R^{2r+2}\sigma^2 H.
\end{align*}
And,
\begin{align*}
\E\left[ \mu_t^{r+1}\otimes \mu_t^{r+1}\right] &= \gamma^2 \sum_{k=1}^{t-1}(I - \gamma H)^{t-k} \E\left[ \Xi_t^{r+1}\otimes \Xi_t^{r+1}\right] (I - \gamma H)^{t-k}  \\
&\preccurlyeq \gamma^{r+3} R^{2r+2}\sigma^2  \sum_{k=1}^{t-1} (I - \gamma H)^{2(t-k)} H \preccurlyeq \gamma^{r+2} R^{2r+2}\sigma^2 I,
\end{align*}
which thus shows the lemma by induction.
\end{proof}
To bound $\mu_t - \sum_{i = 0}^{r} \mu_t^i$, we prove a very loose result for the average iterate, that will be sufficient for our purpose.
\begin{lemma}[Bounding SGD recursion]
\label{lemma:BoundSGD}
Let us consider the following recursion $\mu_t = \left(I - \gamma z_t \otimes z_t \right)\mu_{t-1} + \gamma  \xi_t$ starting at $\mu_0 = 0$. Assume that $\E [z_t \otimes z_t ] = H$, $\E\left[ \xi_t \right] = 0$, $\|x_t\|^2 \leqslant R^2 $, $\E\left[ \xi_t \otimes \xi_t\right] \preccurlyeq \sigma^2 H$ and $\gamma R^2 < I$, then for $u \in [0, \frac{1}{\alpha}+1]$:
\begin{align}
\E \left[  \left\| H^{u/2} \bar{\mu}_t  \right\|^2  \right] \leqslant \sigma^2 \gamma^2 R^u \tr H \ t.
\end{align}
\end{lemma}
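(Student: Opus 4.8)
The plan is to prove this deliberately loose bound through three crude reductions, using only operator inequalities since no sharpness is needed here. First I would peel off both the averaging and the power of $H$. By convexity of $\|\cdot\|^2$ applied to $\bar\mu_t = \frac1t\sum_{i=1}^t\mu_i$,
\[
\E\left[\|H^{u/2}\bar\mu_t\|^2\right] \;\leq\; \frac1t\sum_{i=1}^t \E\left[\|H^{u/2}\mu_i\|^2\right].
\]
Since $\|z_t\|^2\leq R^2$ forces $z_t\otimes z_t\preccurlyeq R^2 I$ and hence $H = \E[z_t\otimes z_t]\preccurlyeq R^2 I$, the operator norm satisfies $\|H\|_{\mathrm{op}}\leq R^2$, so that $\E[\|H^{u/2}\mu_i\|^2] = \tr\!\big[H^u\,\E(\mu_i\otimes\mu_i)\big] \leq \|H\|_{\mathrm{op}}^u\,\E\|\mu_i\|^2 \leq R^{2u}\,\E\|\mu_i\|^2$. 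It therefore suffices to prove the per-iterate second-moment bound $\E\|\mu_i\|^2\leq i\,\gamma^2\sigma^2\tr H$, which, fed back into the two displays, gives the result (with the crude argument producing the power of $R$ as $R^{2u}$).

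Second, I would control $M_i := \E[\mu_i\otimes\mu_i]$ through a covariance recursion. Expanding the recursion and using that $(z_i,\xi_i)$ is independent of $\mu_{i-1}$, that $\E\xi_i=0$, and that $\E\mu_{i-1}=0$ (which follows by induction from $\E\mu_i=(I-\gamma H)\E\mu_{i-1}$ and $\mu_0=0$), all cross terms vanish, yielding
\[
M_i = T(M_{i-1}) + \gamma^2\,\E[\xi_i\otimes\xi_i] \;\preccurlyeq\; T(M_{i-1}) + \gamma^2\sigma^2 H,
\]
where $T(A) := \E_{z}\big[(I-\gamma z\otimes z)A(I-\gamma z\otimes z)\big]$ is a positive, monotone map. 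Taking traces and using the identity $T(A) = A - \gamma HA - \gamma AH + \gamma^2\,\E\big[\scal{z}{Az}\,(z\otimes z)\big]$ together with $\E\big[\scal{z}{Az}\,\|z\|^2\big]\leq R^2\,\tr[HA]$ (from $\|z\|^2\leq R^2$), I obtain for any $A\succcurlyeq 0$
\[
\tr T(A) \;\leq\; \tr A - \gamma\,(2-\gamma R^2)\,\tr[HA] \;\leq\; \tr A,
\]
the last step using $\gamma R^2<1$ and $\tr[HA]\geq 0$. Hence $\tr M_i\leq \tr M_{i-1} + \gamma^2\sigma^2\tr H$, and since $M_0=0$, induction gives $\E\|\mu_i\|^2 = \tr M_i\leq i\,\gamma^2\sigma^2\tr H\leq t\,\gamma^2\sigma^2\tr H$ for all $i\leq t$.

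I expect the only delicate point to be the contraction step $\tr T(A)\leq \tr A$: it rests on the fourth-moment control $\E\big[\scal{z}{Az}\,\|z\|^2\big]\leq R^2\,\tr[HA]$ and on the step-size restriction $\gamma R^2<1$ keeping the factor $2-\gamma R^2$ positive, which is precisely where the boundedness assumption $\|z_t\|^2\leq R^2$ enters. Everything else is bookkeeping; in particular the range $u\in[0,\tfrac1\alpha+1]$ is not genuinely used beyond guaranteeing $\tr H<\infty$ (via $H\preccurlyeq R^2 I$ and finiteness of $\tr H^{1/\alpha}$), consistent with this being an intentionally loose estimate whose sole purpose is to bound the residual $\mu_t - \sum_{i=0}^r\mu_t^i$ in the perturbation expansion.
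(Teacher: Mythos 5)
Your proof is correct, and it takes a genuinely different route from the paper's. The paper unrolls the recursion explicitly, writing $\mu_i=\gamma\sum_{k=1}^i M^i_{k+1}\xi_k$ with $M^i_{k+1}$ a product of the random factors $I-\gamma z_j\otimes z_j$, drops the off-diagonal terms $j\neq k$ of the resulting double sum, bounds the diagonal ones via $\|M^i_{k+1}\|\leqslant 1$, and then averages (Cauchy--Schwarz, playing the role of your Jensen step). You never unroll: you propagate the covariance $M_i=\E[\mu_i\otimes\mu_i]$ one step at a time and prove the trace contraction $\tr T(A)\leqslant \tr A$. This buys two real advantages. First, your cross-term cancellation is unconditional: the cross term in the covariance recursion carries the factor $\E[\mu_{i-1}]=0$ (by independence across iterations), so it vanishes even though $z_i$ and $\xi_i$ are dependent \emph{within} a pair --- as they are in the application, where $\xi_t^1,\xi_t^2$ are built from the same sampled point as $z_t$. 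The paper's discarding of the $j\neq k$ terms, by contrast, implicitly requires mixed moments of the form $\E[(z_k\otimes z_k)B\xi_k]$ to vanish (e.g.\ conditional centering of $\xi_k$ given $z_k$), which is not among the hypotheses; your argument is cleaner on exactly this point. Second, since $\E\bigl[\scal{z}{Az}\|z\|^2\bigr]=\tr\bigl[A\,\E[(z\otimes z)^2]\bigr]$, your contraction step only needs the moment condition $\E[(z\otimes z)^2]\preccurlyeq R^2 H$ of Proposition~\ref{prop:generalSGD} rather than almost-sure boundedness. What the paper's route buys in exchange is brevity: it recycles the explicit-solution computation already set up for the semi-stochastic recursion of Lemma~\ref{lemma:semi-stoSGD}. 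One last point: you obtain the constant $R^{2u}$ where the statement says $R^u$. This is harmless and shared with the paper --- its own justification ($H^u\preccurlyeq\|H\|^u I\preccurlyeq R^{2u}I$ together with $\|M^i_{k+1}\|\leqslant 1$) also only yields $R^{2u}$ --- and it is immaterial for the lemma's sole purpose, since in the final step of the proof of Proposition~\ref{prop:generalSGD} this term is multiplied by $(\gamma R^2)^{r}\leqslant 4^{-r}$ and sent to zero as $r\to\infty$.
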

\begin{proof}
Let us define the operators for $j \leqslant i$  : $M^i_{j} = (I - \gamma z_{i(i)}\otimes z_{i(i)})\cdots(I - \gamma z_{i(j)}\otimes z_{i(j)})$ and $M^i_{i+1} = I$. Since $\mu_0 = 0$, note that we have we have, $\mu_i = \gamma \sum_{k = 1}^i M^i_{k+1} \xi_k $. Hence, for $i \geqslant 1$,
\begin{align*}
\E \left\| H^{u/2} \mu_i  \right\|^2 &= \gamma^2 \E \sum_{k,j} \langle M^i_{j+1} \xi_j , H^{u}M^i_{k+1} \xi_k \rangle \\
&= \gamma^2 \E \sum_{k = 1}^i \langle M^i_{k+1} \xi_k , H^{u}M^i_{k+1} \xi_k \rangle \\
&= \gamma^2 \tr\ \left(\E \left[\sum_{k = 1}^i  {M^i_{k+1}}^* H^{u} M^i_{k+1} \xi_k \otimes \xi_k \right]\right)  \leqslant \sigma^2 \gamma^2 \E \left[\sum_{k = 1}^i \tr \left( {M^i_{k+1}}^* H^{u} M^i_{k+1} H \right) \right] \\
& \leqslant \sigma^2 \gamma^2 R^u i\ \tr H,
\end{align*}
because $\tr \left( {M^i_{k+1}}^* H^{u} M^i_{k+1} H \right) \leqslant R^u \tr H$. Then,
\begin{align*}
\E \left\| H^{u/2} \bar{\mu}_t  \right\|^2 &= \frac{1}{t^2} \sum_{i,j} \langle H^{u/2} \mu_i,H^{u/2} \mu_j  \rangle \\
&\leqslant \frac{1}{t^2} \E\left( \sum_{i = 1}^t \left\| H^{u/2} \mu_i\right\| \right)^2\leqslant \frac{1}{t}  \sum_{i = 1}^t \E\left\| H^{u/2} \mu_i\right\|^2 \leqslant \sigma^2 \gamma^2 R^u \tr H \ t,
\end{align*}
which finishes the proof of Lemma~\ref{lemma:BoundSGD}.
\end{proof}

\subsection{Final steps of the proof}

We have now all the material to conclude. Indeed by the triangular inequality:
\begin{align*}
\left(\E  \left\| H^{u/2} \bar{\mu}_t  \right\|^2  \right)^{1/2} \leqslant  \sum_{i=1}^r \left(\underbrace{\E  \left\| H^{u/2} \bar{\mu}_t^i  \right\|^2 }_{\textrm{Lemma}\ \ref{lemma:semi-stoSGD}}  \right)^{1/2} + \left(\underbrace{\E   \left\| H^{u/2} \left(\bar{\mu}_t - \sum_{i=1}^r \bar{\mu}_t^i \right) \right\|^2}_{\textrm{Lemma} \ \ref{lemma:BoundSGD}} \right)^{1/2}.
\end{align*}
With Lemma \ref{lemma:Boundcovariance}, we have all the bounds on the covariance of the noise, so that:
\begin{align*}
\left(\E  \left\| H^{u/2} \bar{\mu}_t  \right\|^2  \right)^{1/2} &\leqslant  \sum_{i=1}^r \left( \gamma^i R^{2i}\sigma^2 \gamma^{1-u} \ \gamma^{1/\alpha}\tr H^{1/\alpha}t^{1/\alpha-u} \right)^{1/2} + \left( \gamma^{r+2}R^{2r + u}\tr H\ t \right)^{1/2} \\
&\leqslant (\sigma^2 \gamma^{1-u} \ \gamma^{1/\alpha}\tr H^{1/\alpha}t^{1/\alpha-u})^{1/2} \sum_{i=1}^r \left( \gamma R^{2}\right)^{i/2}  + \left( \gamma^{r+2}R^{2r + u}\tr H\ t \right)^{1/2}.
\end{align*}
Now we make $r$ go to infinity and we obtain:
\begin{align*}
\left(\E  \left\| H^{u/2} \bar{\mu}_t  \right\|^2  \right)^{1/2} &\leqslant (\sigma^2 \gamma^{1-u} \ \gamma^{1/\alpha}\tr H^{1/\alpha}t^{1/\alpha-u})^{1/2}  \frac{1}{1-\sqrt{\gamma R^2}} + \underbrace{\left( \gamma^{r+2}R^{2r + u}\tr H\ t \right)^{1/2}}_{\underset{r \rightarrow \infty}{\longrightarrow }0}
\end{align*}
Hence with $\gamma R^2 \leqslant 1/4$,
\begin{align*}
\E  \left\| H^{u/2} \bar{\mu}_t  \right\|^2 &\leqslant  4 \sigma^2 \gamma^{1-u} \ \gamma^{1/\alpha}\tr H^{1/\alpha}t^{1/\alpha-u},
\end{align*}
 which finishes to prove Proposition~\ref{prop:generalSGD}.

\section{Proof sketch for Theorem~\ref{thm:main_result}}
\label{sec:proof}

 We consider the batch gradient descent recursion, started from $\eta_0 = 0$, with the same step-size:
 \[
 \eta_t  = \eta_{t-1} + \frac{\gamma}{n} \sum_{i=1}^n\big( y_{i} - \langle \eta_{t-1}, \Phi(x_{i}) \rangle_\h \big) \Phi(x_{i}),
 \]
 as well as its averaged version $\bar{\eta}_t = \frac{1}{t}\sum_{i=0}^t \eta_i$. We obtain a recursion for $\theta_t - \eta_t$, with the initialization $\theta_0 - \eta_0 = 0$, as follows:
 \[
 \theta_t - \eta_t
 = \big[  I -  \Phi(x_{i(u)}) \otimes_\h \Phi(x_{i(u)}) \big] ( \theta_{t-1} - \eta_{t-1})  +  \gamma \xi_t^1 + \gamma \xi_t^2,
  \]
  with $\xi_t^1 =   y_{i(u)} \Phi(x_{i(u)}) - \frac{1}{n} 
 \sum_{i=1}^n y_i \Phi(x_{i})$ and $\xi_t^2 = \big[   \Phi(x_{i(u)}) \otimes_\h \Phi(x_{i(u)}) - \frac{1}{n} \sum_{i=1}^n  \Phi(x_{i}) \otimes_\h \Phi(x_{i}) \big]\eta_{t-1}$. We decompose the performance $F(\theta_t)$ in two parts, one analyzing the performance of batch gradient descent, one analyzing the deviation $\theta_t - \eta_t$, using
 \[
\E  F(\bar{\theta}_t) - F(\theta_\ast)
\leqslant 2  \E \big[ \| \Sigma^{1/2} ( \theta_t - \eta_t )\|_\h^2 \big] + 2 \big[ \E  F(\bar{\eta}_t) - F(\theta_\ast) \big].
 \]
 We denote by $\hat{\Sigma}_n = \frac{1}{n} \sum_{i=1}^n \Phi(x_i) \otimes \Phi(x_i)$ the empirical second-order moment.
 
 \paragraph{\bfseries Deviation $\theta_t - \eta_t$.}

 Denoting by $\G$ the $\sigma$-field generated by the data and by $\mathcal{F}_t$ the $\sigma$-field generated by $i(1),\dots,i(t)$, then, we have
  $ \E ( \xi_t^1 | \G, \F_{t-1})  =  \E ( \xi_t^2 | \G, \F_{t-1}) = 0$, thus we can apply results for averaged SGD (see Proposition \ref{prop:generalSGD} of the Appendix) to get the following lemma.
   \begin{lemma}
   \label{lemma:deviationsgd}
   For any $t \geqslant 1$, if $\E \big[ (  \xi_t^1 +   \xi_t^2) \otimes_\h (   \xi_t^1 +   \xi_t^2 )  | \G \big] \preccurlyeq \tau^2 \hat{\Sigma}_n$, and $4 \gamma R^2 =  1$, under Assumptions~\ref{asm:iid},~\ref{asm:bounded},~\ref{asm:capacity_condition},
  \begin{align}
  \E \big[ \| \hat{\Sigma}_n^{1/2} (  \bar\theta_t - \bar\eta_t ) \|_\h^2 | \G \big]
  \leqslant  \frac{8 \tau^2 \gamma^{1/\alpha}\tr\ \hat{\Sigma}_n^{1/\alpha}}{t^{1-1/\alpha}}.
  \end{align}
  \end{lemma}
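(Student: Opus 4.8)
The plan is to recognize the recursion for the deviation $\mu_t := \theta_t - \eta_t$ as an instance of the generic SGD variance recursion \eq{sgdmu}, \emph{conditionally on the data}, and then to simply invoke Proposition~\ref{prop:generalSGD}. Conditioned on $\G$, the batch iterate $\eta_{t-1}$ is deterministic, and the only remaining randomness is the i.i.d.\ uniform sampling $i(1),\dots,i(t)$. Subtracting the batch recursion from the multi-pass SGD recursion, one finds that $\mu_t$ obeys
\[
\mu_t = \big(I - \gamma\,\Phi(x_{i(t)})\otimes_\h\Phi(x_{i(t)})\big)\mu_{t-1} + \gamma(\xi_t^1 + \xi_t^2),
\]
which is exactly \eq{sgdmu} with $z_t = \Phi(x_{i(t)})$, noise $\xi_t = \xi_t^1 + \xi_t^2$, and second-moment operator $H = \E[z_t\otimes_\h z_t \mid \G] = \hat{\Sigma}_n$. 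The key structural point (the whole reason for comparing to batch gradient descent) is that although reusing data across passes creates statistical dependence in the marginal law of the iterates, conditionally on $\G$ the drawn samples are genuinely i.i.d., so $\mu_t$ is a clean conditional variance recursion with martingale-difference noise.

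It then remains to check the hypotheses of Proposition~\ref{prop:generalSGD} in the conditional probability space given $\G$. First, $\E[\xi_t \mid \G,\F_{t-1}] = 0$, as already noted: both $\E[\xi_t^1\mid\G,\F_{t-1}]$ and $\E[\xi_t^2\mid\G,\F_{t-1}]$ vanish because $i(t)$ is uniform on $\{1,\dots,n\}$ and $\E[\Phi(x_{i(t)})\otimes_\h\Phi(x_{i(t)})\mid\G] = \hat{\Sigma}_n$. Second, the operator bound $\E[(z_t\otimes_\h z_t)^2\mid\G]\preccurlyeq R^2\hat{\Sigma}_n$ follows from $\|\Phi(x_i)\|_\h\leqslant R$ almost surely (Assumption~\ref{asm:bounded}): pointwise $(z_t\otimes_\h z_t)^2 = \|z_t\|_\h^2\,(z_t\otimes_\h z_t)\preccurlyeq R^2\,z_t\otimes_\h z_t$, and taking the conditional expectation gives the claim. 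Third, the noise-covariance bound $\E[\xi_t\otimes_\h\xi_t\mid\G]\preccurlyeq\tau^2\hat{\Sigma}_n$ is precisely the standing hypothesis of the lemma. Finally $\gamma R^2 = 1/4$ satisfies the required $\gamma R^2\leqslant 1/4$, and $\tr\hat{\Sigma}_n^{1/\alpha}$ is finite since $\hat{\Sigma}_n$ has finite rank.

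With these identifications ($H = \hat{\Sigma}_n$, $\sigma^2 = \tau^2$), I would apply Proposition~\ref{prop:generalSGD} with $u = 1$, for which $\gamma^{1-u}=1$, yielding
\[
\E\big[\|\hat{\Sigma}_n^{1/2}\bar\mu_t\|_\h^2 \mid \G\big] \leqslant 4\tau^2\,\frac{\gamma^{1/\alpha}\tr\hat{\Sigma}_n^{1/\alpha}}{t^{1-1/\alpha}}.
\]
Because $\eta_0 = 0$, the averaging conventions coincide, $\bar\theta_t - \bar\eta_t = \tfrac1t\sum_{u=1}^t(\theta_u-\eta_u) = \bar\mu_t$, so this is exactly the stated bound (the displayed constant $8$ leaves some slack).

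I do not expect any genuine obstacle inside this lemma: the real analytic difficulty has been front-loaded into Proposition~\ref{prop:generalSGD}, and the one hypothesis requiring independent justification, namely $\E[(\xi_t^1+\xi_t^2)\otimes_\h(\xi_t^1+\xi_t^2)\mid\G]\preccurlyeq\tau^2\hat{\Sigma}_n$, is assumed here and established separately. The only point demanding care is the conditioning argument itself---being explicit that all the relevant operators ($\hat{\Sigma}_n$ and the scalar $\tr\hat{\Sigma}_n^{1/\alpha}$) are $\G$-measurable constants, so that Proposition~\ref{prop:generalSGD} applies verbatim in the conditional space.
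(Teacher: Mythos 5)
Your proposal is correct and follows exactly the paper's own route: the paper likewise conditions on $\G$, notes that $\E(\xi_t^1 \mid \G,\F_{t-1}) = \E(\xi_t^2 \mid \G,\F_{t-1}) = 0$ so that $\theta_t - \eta_t$ is precisely the generic variance recursion of Proposition~\ref{prop:generalSGD} with $z_t = \Phi(x_{i(t)})$, noise $\xi_t^1+\xi_t^2$ and $H = \hat{\Sigma}_n$, and then applies that proposition (with $u=1$) to conclude. Your explicit verification of the remaining hypotheses (the bound $\E[(z_t\otimes_\h z_t)^2\mid\G]\preccurlyeq R^2\hat{\Sigma}_n$ via $\|\Phi(x)\|_\h\leqslant R$, finiteness of $\tr\hat{\Sigma}_n^{1/\alpha}$ by finite rank, the step-size condition, and the matching of averaging conventions since $\theta_0=\eta_0=0$), together with the remark that the constant $4$ from the proposition sits comfortably inside the stated $8$, merely fills in details the paper leaves implicit.
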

  In order to obtain the bound, we need to bound $\tau^2$ (which is dependent on $\G$) and go from a bound with the empirical covariance matrix $\hat{\Sigma}_n$ to bounds with the population covariance matrix $\Sigma$.

We have
\[
\E \big[    \xi_t^1   \otimes_\h     \xi_t^1   | \G \big] \preccurlyeq_\h
\E \big[    y_{i(u)}^2   \Phi(x_{i(u)}) \otimes_\h \Phi(x_{i(u)})  | \G \big] \preccurlyeq_\h \| y\|_\infty^2 \hat{\Sigma}_n 
\preccurlyeq_\h ( \sigma + \sup_{x \in \X} \langle \theta_\ast, \Phi(x) \rangle_\h  ) ^2 \hat{\Sigma}_n 
\]
\[
\E \big[    \xi_t^2   \otimes     \xi_t^2   | \G \big] \preccurlyeq_\h
\E \big[    \langle \eta_{t-1}, \Phi(x_{i(u)}) \rangle^2  \Phi(x_{i(u)}) \otimes_\h \Phi(x_{i(u)})  | \G \big] \preccurlyeq_\h   \sup_{t \in \{0,\dots,T-1\}} \sup_{x \in \X} \langle \eta_{t} , \Phi(x) \rangle_\h  ) ^2 \hat{\Sigma}_n 
\]
Therefore $\tau^2 = 2 M^2
+ 2 \sup_{t \in \{0,\dots,T-1\}} \sup_{x \in \X} \langle \eta_{t} , \Phi(x) \rangle_\h  ^2$ or using Assumption \ref{asm:hyp-space}
$\tau^2 = 2 M^2
+ 2 \sup_{t \in \{0,\dots,T-1\}} R^{2\mu} \kappa_\mu^2  \| \Sigma^{1/2-\mu/2} \eta_t\|_\h^2$.
 
In the proof, we rely on an event (that depend on $\G$) where $\Cn$ is close to $\Sigma$. This leads to the the following Lemma that bounds the deviation $\bar{\theta}_t - \bar{\eta}_t$.

   \begin{lemma}
   \label{lem:SGDrecursion}
   For any $t \geqslant 1$, $4 \gamma R^2 =  1$, under Assumptions~\ref{asm:iid},~\ref{asm:bounded},~\ref{asm:capacity_condition}, 
  \begin{align}
\E \big[ \| \Sigma^{1/2} (  \bar\theta_t - \bar\eta_t ) \|_\h^2 \big] \leqslant 16 \tau^2_\infty \left[ R^{-2/\alpha}\tr\ \Sigma^{1/\alpha}t^{1/\alpha}\left( \frac{1}{t} + \left( \frac{4}{\mu}\frac{\log n}{ n} \right)^{1/\mu}   \right) + 1 \right].
  \end{align}
  \end{lemma}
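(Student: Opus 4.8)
The plan is to combine the conditional (given $\G$) bound from Lemma \ref{lemma:deviationsgd}, which is stated in terms of the empirical covariance $\Cn$, with a high-probability control of $\Cn$ against the population covariance $\Sigma$, and then take the full expectation. First I would fix a favorable event $\mathcal{A} \subseteq \G$ on which the empirical and population operators are comparable in the sense that $\Sigma_\la \preccurlyeq c\, \widehat{\Sigma}_{n\la}$ (equivalently $\|\Sigma_\la^{1/2}\widehat{\Sigma}_{n\la}^{-1/2}\|$ is bounded by a constant) for a regularization level $\la$ to be chosen, together with $\tr \Cn^{1/\alpha} \leqslant C\, \tr\Sigma^{1/\alpha}$. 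Such concentration results are standard (Bernstein-type bounds for operators, e.g.\ from \cite{rudi2017generalization}); the event $\mathcal{A}$ has probability at least $1 - \delta$ provided $n \geqslant c'\,\la^{-1}\log(1/\delta)$, which is exactly what forces the $\la \sim \log n / n$ choice that appears in the final bound.

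Next, on the event $\mathcal{A}$ I would pass from $\| \Cn^{1/2}(\bar\theta_t - \bar\eta_t) \|_\h$ to $\| \Sigma^{1/2}(\bar\theta_t - \bar\eta_t) \|_\h$. The cleanest way is to insert the regularized operators: write
\[
\| \Sigma^{1/2} v \|_\h^2 \leqslant \| \Sigma_\la^{1/2} v \|_\h^2 \leqslant \| \Sigma_\la^{1/2} \widehat{\Sigma}_{n\la}^{-1/2} \|^2 \, \| \widehat{\Sigma}_{n\la}^{1/2} v \|_\h^2,
\]
with $v = \bar\theta_t - \bar\eta_t$, and then split $\widehat{\Sigma}_{n\la} = \Cn + \la I$ so that $\| \widehat{\Sigma}_{n\la}^{1/2} v\|_\h^2 \leqslant \| \Cn^{1/2} v\|_\h^2 + \la \| v \|_\h^2$. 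The first term is controlled conditionally by Lemma \ref{lemma:deviationsgd} (after replacing $\tr\Cn^{1/\alpha}$ by $\tr\Sigma^{1/\alpha}$ on $\mathcal{A}$, absorbing constants), giving a contribution of order $\tau^2 \gamma^{1/\alpha}\tr\Sigma^{1/\alpha}\, t^{1/\alpha - 1}$; the second term $\la \|v\|_\h^2$ is a low-order ``ridge'' correction. Taking $\la \asymp \tfrac{\log n}{n}$ and bounding $\la \| v \|_\h^2$ by an extra term of the same homogeneity produces the factor $\big(\tfrac{4}{\mu}\tfrac{\log n}{n}\big)^{1/\mu}$ multiplying $t^{1/\alpha}$ together with the additive $+1$, once everything is collected and constants are bounded by $16$.

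Finally I would take expectation over $\G$, splitting into the event $\mathcal{A}$ and its complement. On $\mathcal{A}$ the above deterministic manipulation applies; on $\mathcal{A}^c$, whose probability is at most $\delta$, I would use a crude deterministic bound on $\| \Sigma^{1/2}(\bar\theta_t - \bar\eta_t)\|_\h^2$ (using boundedness from Assumption \ref{asm:bounded} and $\gamma R^2 \leqslant 1/4$) and choose $\delta$ polynomially small in $n$ so that this contribution is negligible and gets absorbed into the constant or the $+1$. The main obstacle I expect is controlling $\tau^2$ uniformly: $\tau^2$ involves $\sup_{t} \|\Sigma^{1/2 - \mu/2}\eta_t\|_\h^2$ through Assumption \ref{asm:hyp-space}, so I must show this quantity is bounded (by $\tau_\infty^2$) uniformly in the number of iterations, which requires the companion batch-gradient-descent estimates of \mysec{ales} rather than anything internal to the SGD deviation argument. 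Reconciling the empirical-to-population transfer at the correct regularization scale while keeping $\tau^2$ finite is the delicate coupling between the two halves of the proof.
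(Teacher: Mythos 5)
Your overall architecture coincides with the paper's: the paper also (i) works on the intersection of a high-probability event on which $\|\Cl^{1/2}\Cnl^{-1/2}\|^2 \leqslant 2$ (Lemma~\ref{lem:bounding-beta2}) with the event $\tau^2 \leqslant \tau^2_\infty$ (Lemma~\ref{lem:bounding-tau}, which indeed rests on the batch-gradient $L^\infty$ control of \mysec{ales}, exactly the coupling you identify); (ii) inserts regularized operators to pass from $\Sigma$ to $\Cn$, splits $\|(\Cn+\la I)^{1/2}\bar\mu_t\|_\h^2 = \|\Cn^{1/2}\bar\mu_t\|_\h^2 + \la\|\bar\mu_t\|_\h^2$, and applies Proposition~\ref{prop:generalSGD} conditionally on the data with $u=1$ and $u=0$; and (iii) kills the complement events with a crude deterministic bound $\|\bar\mu_t\|_\h^2 \leqslant 64R^2M^2\gamma^4t^4$ and a polynomially small $\delta$. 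However, one of your steps is quantitatively wrong. The sample-size requirement you quote, $n \gtrsim \la^{-1}\log(1/\delta)$, is the generic one that ignores Assumption~\ref{asm:hyp-space}; the paper's Lemma~\ref{lem:bounding-beta} exploits $\mathcal{N}_\infty(\la) \leqslant \kappa_\mu^2 R^{2\mu}\la^{-\mu}$ (Lemma~\ref{lm:Ninfty-wrt-hyp-space-asm}), so only $n \gtrsim \la^{-\mu}\log\frac{1}{\la\delta}$ is needed, which is what licenses the choice $\la_n^\delta = (\log(n/\delta)/n)^{1/\mu}$. This matters because $\la$ enters the final bound linearly through the ridge term $\la\,\E[\|\bar\mu_t\|_\h^2\,|\,\G] \lesssim \la\,\tau_\infty^2\,\gamma^{1+1/\alpha}\tr\Cn^{1/\alpha}\,t^{1/\alpha}$: with your $\la \asymp \frac{\log n}{n}$ the additive factor would be $\frac{\log n}{n}$, which for $\mu < 1$ is strictly larger than the claimed $\big(\tfrac{4}{\mu}\tfrac{\log n}{n}\big)^{1/\mu}$. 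Your plan is therefore internally inconsistent---the $\la$ you choose cannot ``produce'' the stated factor---and as written it proves only a weaker bound than the lemma asserts.

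A second step would fail outright: you cannot place $\tr\Cn^{1/\alpha} \leqslant C\,\tr\Sigma^{1/\alpha}$ inside a high-probability event at the failure level you need. To absorb the crude complement bound (which grows like $\gamma^4 t^4$), $\delta$ must be polynomially small in $t$; Markov's inequality only gives $\P\big(\tr\Cn^{1/\alpha} > C \tr\Sigma^{1/\alpha}\big) \leqslant 1/C$, so $C$ would have to blow up polynomially, and no standard Bernstein-type concentration applies to $\tr\Cn^{1/\alpha}$ (it is not an i.i.d.\ sum, and bounded-differences arguments give increments of order $n^{-1/\alpha}$, which is useless for $\alpha > 2$). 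The paper sidesteps this entirely: since $A \mapsto \tr A^{1/\alpha}$ is concave, Jensen's inequality gives $\E\big[\tr\Cn^{1/\alpha}\big] \leqslant \tr\Sigma^{1/\alpha}$, applied \emph{in expectation} after the conditional SGD bound, with no event for this quantity at all. Both flaws are repairable---replace your sample-size condition by the paper's Lemma~\ref{lem:bounding-beta} and your trace event by Jensen---and with those repairs your argument becomes precisely the paper's proof.
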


We make the following remark on the bound. 

\begin{rmk}
Note that as defined in the proof $\tau_\infty$ may diverge in some cases as \begin{align*}
\tau^2_\infty = 
\begin{cases}
O(1) \hspace{2.38cm} \textrm{when } \mu \leqslant 2r,\\
O(n^{\mu - 2r}) \hspace{1.5cm}  \textrm{when }\ 2 r \leqslant \mu \leqslant 2 r + 1/\alpha ,\\
O( n^{1 - 2r/\mu})\hspace{1.21cm}  \textrm{when } \mu \geqslant 2 r + 1/\alpha ,
\end{cases}
\end{align*}
with $O(\cdot)$ are defined explicitly in the proof.
\end{rmk}
\paragraph{\bfseries Convergence of batch gradient descent.}
The main result is summed up in the following lemma, with $t = O(n^{1/\mu})$ and $t \geqslant n$.
\begin{lemma}
\label{lem:BGD_result}
Let $t > 1$, under Assumptions~\ref{asm:iid}, \ref{asm:bounded}, \ref{asm:hyp-space},~\ref{asm:capacity_condition},~\ref{asm:source_condition},~\ref{asm:reg-Pfrho}, when, with $4 \gamma R^2=1$,
\eqal{
	t ~~=~ \begin{cases}
		\Theta( n^{\alpha/\left(2r\alpha + 1 \right)}) & 2r\alpha + 1 > \mu\alpha \\
		\Theta( n^{1/\mu}~(\log n)^{\frac{1}{\mu}}) & 2r\alpha + 1 \leqslant \mu\alpha.
	\end{cases}
}
then,
\eqal{
\E  F(\bar{\eta}_t) - F(\theta_\ast)  \leqslant  \begin{cases}
O( n^{-2r\alpha/\left(2r\alpha + 1 \right)}) & 2r\alpha + 1 > \mu\alpha \\
O( n^{-2r/\mu}) & 2r\alpha + 1  \leqslant \mu\alpha
\end{cases}
}
with $O(\cdot)$ are defined explicitly in the proof. \end{lemma}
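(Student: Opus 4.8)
The plan is to analyse the batch recursion as a deterministic spectral filter applied to the empirical operator $\hat\Sigma_n$, and then to transfer the resulting estimates from $\hat\Sigma_n$ to the population operator $\Sigma$ by concentration. First I would reparametrise: writing $y_i=\langle\theta_\ast,\Phi(x_i)\rangle_\h+\varepsilon_i$ and $\hat\zeta=\tfrac1n\sum_{i=1}^n\varepsilon_i\Phi(x_i)$, the deviation $\delta_t=\eta_t-\theta_\ast$ obeys $\delta_t=(I-\gamma\hat\Sigma_n)\delta_{t-1}+\gamma\hat\zeta$ with $\delta_0=-\theta_\ast$. Summing the geometric series on the range of $\hat\Sigma_n$, the averaged iterate splits as $\bar\delta_t=-B+V$ with bias part $B=\tfrac{1}{\gamma t}\hat\Sigma_n^{-1}\big(I-(I-\gamma\hat\Sigma_n)^{t}\big)\theta_\ast$ and variance part $V=\tfrac1t\sum_{k=1}^t\hat\Sigma_n^{-1}\big(I-(I-\gamma\hat\Sigma_n)^{t-k+1}\big)\hat\zeta$. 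Since $F(\theta)-F(\theta_\ast)=\tfrac12\|\Sigma^{1/2}(\theta-\theta_\ast)\|_\h^2$, it suffices to control $\E\|\Sigma^{1/2}B\|_\h^2$ and $\E\|\Sigma^{1/2}V\|_\h^2$ separately, via $\|\Sigma^{1/2}\bar\delta_t\|_\h^2\leqslant 2\|\Sigma^{1/2}B\|_\h^2+2\|\Sigma^{1/2}V\|_\h^2$.

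For the variance I would use that, conditionally on the inputs, the residuals $\varepsilon_i$ are independent, centred and bounded by $\sigma$ (Assumption \ref{asm:bounded}), giving $\E[\hat\zeta\otimes_\h\hat\zeta]\preccurlyeq\tfrac{\sigma^2}{n}\Sigma$. The spectral multiplier attached to $V$, namely $\lambda\mapsto\tfrac1t\sum_{k=1}^t\lambda^{-1}(1-(1-\gamma\lambda)^{t-k+1})$, is of order $\min\{\gamma t,\lambda^{-1}\}$, so that the associated eigenvalue sum is $\sum_i\min\{(\gamma t\lambda_i)^2,1\}\leqslant(\gamma t)^{1/\alpha}\tr\Sigma^{1/\alpha}$ by the elementary bound $\min\{a,1\}\leqslant a^{1/(2\alpha)}$ and the capacity condition \ref{asm:capacity_condition}. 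This yields $\E\|\Sigma^{1/2}V\|_\h^2=O\big(\tfrac{\sigma^2}{n}(\gamma t)^{1/\alpha}\tr\Sigma^{1/\alpha}\big)$.

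For the bias I would invoke the source condition \ref{asm:source_condition}, writing $\theta_\ast=\Sigma^{r-1/2}w$ with $\|w\|_\h=\|\Sigma^{1/2-r}\theta_\ast\|_\h$; the scalar multiplier $\tfrac{\lambda^{r-1}}{\gamma t}(1-(1-\gamma\lambda)^{t})$ attached to $\Sigma^{1/2}B$ is $O((\gamma t)^{-r})$ uniformly for $r\in[0,1]$, which would give $\|\Sigma^{1/2}B\|_\h^2=O((\gamma t)^{-2r}\|\Sigma^{1/2-r}\theta_\ast\|_\h^2)$ \emph{provided one may substitute $\Sigma$ for $\hat\Sigma_n$} inside these functional–calculus estimates. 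This substitution is the crux and is where Assumption \ref{asm:hyp-space} (the parameter $\mu$) enters. Setting $\lambda=(\gamma t)^{-1}$ and writing $\Cl=\Sigma+\lambda I$, $\Cnl=\hat\Sigma_n+\lambda I$, I would use \ref{asm:hyp-space} in the form $\|\Cl^{-1/2}\Phi(x)\|_\h^2\lesssim\kappa_\mu^2R^{2\mu}\lambda^{-\mu}$ together with a Bernstein inequality for $\|\Cl^{-1/2}(\Sigma-\hat\Sigma_n)\Cl^{-1/2}\|$ to define a high-probability event $\mathcal A$ on which $\tfrac12\Cl\preccurlyeq\Cnl\preccurlyeq 2\Cl$; this holds as soon as $\lambda\gtrsim(\log n/n)^{1/\mu}$, which caps the usable number of iterations at order $n^{1/\mu}$ up to logarithmic factors and is precisely what produces the $(\log n)^{1/\mu}$ correction in the statement. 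The operator inequalities $\|\Sigma^{1/2}\Cl^{-1/2}\|\leqslant 1$ and $\|\Cl^{1/2}\Cnl^{-1/2}\|\leqslant\sqrt2$ then let me commute the empirical filter past $\Sigma^{r-1/2}$ at the cost of constants; extending the batch bounds of \cite{lin2018optimal} to carry the factor $\lambda^{-\mu}$ coming from \ref{asm:hyp-space} is the technical content here.

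Finally I would combine the two estimates on $\mathcal A$ into $\E F(\bar\eta_t)-F(\theta_\ast)=O\big((\gamma t)^{-2r}+\tfrac{\sigma^2}{n}(\gamma t)^{1/\alpha}\big)$ and optimise over $t$ with $\gamma$ fixed by $4\gamma R^2=1$. The unconstrained balance $(\gamma t)^{-2r}=(\gamma t)^{1/\alpha}/n$ gives $t=\Theta(n^{\alpha/(2r\alpha+1)})$ and the rate $O(n^{-2r\alpha/(2r\alpha+1)})$; this optimiser respects the concentration cap exactly when $2r\alpha+1>\mu\alpha$, which is the first case. When $2r\alpha+1\leqslant\mu\alpha$ the optimiser violates the cap, so one saturates it at $t=\Theta(n^{1/\mu}(\log n)^{1/\mu})$; there the bias $(\gamma t)^{-2r}=O(n^{-2r/\mu})$ dominates the variance — the inequality $2r\alpha+1\leqslant\mu\alpha$ being exactly what makes $n^{1/(\mu\alpha)-1}\leqslant n^{-2r/\mu}$ — yielding the second case. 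It remains to absorb the complement $\mathcal A^c$: since $\P(\mathcal A^c)$ is polynomially small in $n$ and the averaged iterate is controlled deterministically through $\gamma R^2\leqslant 1/4$ and the boundedness Assumptions \ref{asm:bounded} and \ref{asm:reg-Pfrho}, its contribution to the expectation is of lower order. I expect the main obstacle to be this empirical-to-population transfer for the bias under \ref{asm:hyp-space}: obtaining the sharp $\lambda^{-\mu}$ dependence, matching the $(\log n/n)^{1/\mu}$ threshold, and keeping the failure event negligible simultaneously.
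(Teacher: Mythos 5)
There is a genuine gap, and it is not the one you flag as the crux (the empirical-to-population transfer, which is handled essentially as you sketch, via Lemmas~\ref{lm:Ninfty-wrt-hyp-space-asm} and~\ref{lem:bounding-beta}). The gap is that your entire decomposition presumes a \emph{well-specified} model: you write $y_i=\langle \theta_\ast,\Phi(x_i)\rangle_\h+\varepsilon_i$ and claim the residuals are ``conditionally on the inputs\dots centred''. Under the paper's assumptions this is false in general: Assumption~\ref{asm:bounded} gives boundedness of $y-\langle\theta_\ast,\Phi(x)\rangle_\h$, and optimality of $\theta_\ast$ gives only the \emph{unconditional} orthogonality $\E[\varepsilon\,\Phi(x)]=0$, not $\E[\varepsilon\,|\,x]=0$; the regression function $\frho(x)=\E[y|x]$ need not equal $\langle\theta_\ast,\Phi(x)\rangle_\h$. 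This matters precisely where you need it: since your filter is a function of $\hat{\Sigma}_n$, the variance computation must be done \emph{conditionally on the inputs}, and there the cross terms $\E[\varepsilon_i|x_i]\,\E[\varepsilon_j|x_j]\,\Phi(x_i)\otimes\Phi(x_j)$ (of which there are order $n^2$) do not vanish, so $\E[\hat\zeta\otimes\hat\zeta\,|\,x_{1:n}]\preccurlyeq\frac{\sigma^2}{n}\hat{\Sigma}_n$ fails; the unconditional bound $\preccurlyeq\frac{\sigma^2}{n}\Sigma$ that you invoke cannot be paired with the random multiplier $\psi(\hat{\Sigma}_n)$. Worse, in the Hilbert-space setting in which this lemma is actually proved and used (the hard regime $r<1/2$; see Example~\ref{ex:hard_problem}), the minimizer $\theta_\ast$ need not belong to $\h$ at all, so the recursion for $\delta_t=\eta_t-\theta_\ast$ and your bias term $B$ are not even defined. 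This is exactly why Assumption~\ref{asm:reg-Pfrho} appears in the hypotheses: it controls the misspecified component $\frho-P\frho$, and your proof never uses it in substance (you cite it only as a ``boundedness'' assumption, which it is not).

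The paper's proof is structured to avoid this issue. It never decomposes around $\theta_\ast$: via Lemma~\ref{lem:filter_average} it writes $\bar\eta_t=\widehat{g}_\la$ with $\la=1/(\gamma t)$ as a spectral filter applied to $\Cn$, then compares $\widehat{g}_\la$ to the \emph{population Tikhonov solution} $g_\la=(\C+\la I)^{-1}\S^*\frho$ (Lemma~\ref{lm:variance-12}) and $g_\la$ to $P\frho$ (Lemma~\ref{lm:bias}), so the source condition only ever enters through population-level quantities such as $\|\Cl^{-1/2-r}\S^*\L^r\|\leqslant 1$ — no commuting of the empirical filter past $\Sigma^{r-1/2}$ is ever needed. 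The misspecification term is then handled in Lemma~\ref{lm:bounding-variance-first} by a truncation/Markov argument on $b(x)=(\frho(x)-(P\frho)(x))^2$, balancing ${\mathcal {N}}(\la)$ against ${\mathcal {N}}_\infty(\la)$; this produces the term $\la^{-\frac{q+\mu\alpha}{q\alpha+\alpha}}/n$ in Theorem~\ref{thm:final-filters} and the case boundary $2r\alpha+1+\frac{\mu\alpha-1}{q+1}$ versus $\mu\alpha$ in Corollary~\ref{cor:final_bound_lambda}, which collapses to the boundary in the lemma's statement only in the limit $q=\infty$. Your concentration step (the event $\beta\leqslant\sqrt{2}$, the cap $t\lesssim (n/\log n)^{1/\mu}$ forced by ${\mathcal {N}}_\infty(\la)\lesssim\la^{-\mu}$, and the crude handling of the bad event) does coincide with the paper's Lemmas~\ref{lm:Ninfty-wrt-hyp-space-asm}--\ref{lem:bounding-beta} and Theorem~\ref{thm:final-filters}, and your final optimization over $t$ and case analysis are correct given the bound you target; but without a correct treatment of the misspecified part, the bias--variance core of your argument does not go through under the stated assumptions.
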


\begin{rmk}
In all cases, we can notice that the speed of convergence of Lemma \ref{lem:BGD_result} are slower that the ones in Lemma \ref{lem:SGDrecursion}, hence, the convergence of the gradient descent controls the rates of convergence of the algorithm.
\end{rmk}

\section{Bounding the deviation between SGD and batch gradient descent}

\label{sec:SGD-BGD}

In this section, following the proof sketch from \mysec{proof}, we provide a bound on the deviation
$\theta_t - \eta_t$.
In all the following let us denote $\mu_t = \theta_t - \eta_t$ that deviation between the stochastic gradient descent recursion and the batch gradient descent recursion.

\subsection{Proof of Lemma \ref{lem:SGDrecursion}}
 
We need to (a) go from $\hat{\Sigma}_n$ to $\Sigma$ in the result of Lemma~\ref{lemma:deviationsgd} and (b) to have a bound on $\tau$. To prove this result we are going to need the two following lemmas:
\begin{lemma}
\label{lem:bounding-beta2}
Let $\lambda > 0$, $\delta \in (0,1]$. Under Assumption~\ref{asm:hyp-space},  
when $n \geqslant 11(1+\kappa^2_\mu R^{2\mu}\gamma^\mu  t^{\mu}) \log \frac{8R^2}{\lambda\delta},$
  the following holds with probability $1-\delta$,
\begin{align}
\label{eq:sigma_hat_sigma}
\left\| (\Sigma + \lambda I)^{1/2} (\hat{\Sigma}_n + \lambda I)^{-1/2} \right\|^2 \leqslant 2.
\end{align}
\end{lemma}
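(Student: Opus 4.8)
The plan is to reduce the statement to a concentration bound for a centered average of i.i.d.\ self-adjoint operators. Introduce the regularized relative deviation
\[
A \;=\; (\Sigma + \lambda I)^{-1/2} (\Sigma - \hat{\Sigma}_n) (\Sigma + \lambda I)^{-1/2},
\]
and note the identity $(\Sigma+\lambda I)^{-1/2}(\hat{\Sigma}_n + \lambda I)(\Sigma+\lambda I)^{-1/2} = I - A$, which uses $I - (\Sigma+\lambda I)^{-1/2}\Sigma(\Sigma+\lambda I)^{-1/2} = \lambda(\Sigma+\lambda I)^{-1}$. If I can show that $\|A\| \leqslant 1/2$ on an event of probability at least $1-\delta$, then on that event $I - A \succcurlyeq \tfrac12 I$, hence $\hat{\Sigma}_n + \lambda I \succcurlyeq \tfrac12(\Sigma+\lambda I)$; inverting and sandwiching gives $(\Sigma+\lambda I)^{1/2}(\hat{\Sigma}_n+\lambda I)^{-1}(\Sigma+\lambda I)^{1/2} \preccurlyeq 2I$, which is exactly \eqref{eq:sigma_hat_sigma}. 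So the whole lemma reduces to the high-probability estimate $\|A\| \leqslant 1/2$.

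To obtain that estimate, I would write $\hat{\Sigma}_n = \tfrac1n\sum_{i=1}^n \Phi(x_i)\otimes\Phi(x_i)$ and set $X_i = (\Sigma+\lambda I)^{-1/2}\,\Phi(x_i)\otimes\Phi(x_i)\,(\Sigma+\lambda I)^{-1/2}$, so that each $X_i$ is positive self-adjoint with $\E X_i = (\Sigma+\lambda I)^{-1/2}\Sigma(\Sigma+\lambda I)^{-1/2}$ and $A = \tfrac1n\sum_{i=1}^n(\E X_i - X_i)$ is a centered empirical average. I would then apply a non-commutative Bernstein inequality for bounded self-adjoint operators on a Hilbert space, of the type used in \cite{rudi2017generalization}, for which two ingredients are needed. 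First, an almost-sure bound: Assumption~\ref{asm:hyp-space} gives $\Phi(x)\otimes\Phi(x) \preccurlyeq_\h \kappa_\mu^2 R^{2\mu}\Sigma^{1-\mu}$, hence
\[
\|X_i\| \;\leqslant\; \kappa_\mu^2 R^{2\mu}\,\big\|(\Sigma+\lambda I)^{-1}\Sigma^{1-\mu}\big\| \;\leqslant\; \kappa_\mu^2 R^{2\mu}\,\sup_{s\geqslant 0}\frac{s^{1-\mu}}{s+\lambda}\;\leqslant\; \kappa_\mu^2 R^{2\mu}\lambda^{-\mu},
\]
where the last inequality follows by splitting the supremum into the cases $s\leqslant\lambda$ and $s\geqslant\lambda$. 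Second, a variance bound: since $\Phi(x_i)\otimes\Phi(x_i)$ is rank one, $X_i^2 = \|(\Sigma+\lambda I)^{-1/2}\Phi(x_i)\|^2\,X_i \preccurlyeq \kappa_\mu^2 R^{2\mu}\lambda^{-\mu}\,X_i$, so that $\E X_i^2 \preccurlyeq \kappa_\mu^2 R^{2\mu}\lambda^{-\mu}\,\E X_i$ with $\|\E X_i\|\leqslant 1$.

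Writing $b := \kappa_\mu^2 R^{2\mu}\lambda^{-\mu}$, both the almost-sure magnitude and the variance proxy are controlled by $b$, so the Bernstein bound, specialized to the constant target $\|A\|\leqslant 1/2$, requires only $n \gtrsim (1+b)\log\frac{C}{\lambda\delta}$ for an explicit universal constant; tracking these constants produces the factor $11$ and the logarithmic term $\log\frac{8R^2}{\lambda\delta}$ in the statement, after replacing $\lambda^{-\mu}$ by $\gamma^\mu t^\mu$ in the regime where the lemma is invoked (i.e.\ $\lambda \geqslant 1/(\gamma t)$). The genuinely delicate points are (i) invoking a version of the operator Bernstein inequality valid in the possibly infinite-dimensional setting, whose effective dimension is captured here by the scalar $b$ extracted cleanly from Assumption~\ref{asm:hyp-space} via $\sup_{s}s^{1-\mu}/(s+\lambda)\leqslant\lambda^{-\mu}$, and (ii) the constant bookkeeping needed to land exactly on $11$ and $\log\frac{8R^2}{\lambda\delta}$. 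By contrast, the operator-monotonicity reduction to $\|A\|\leqslant 1/2$ and the rank-one variance bound are routine, so the crux is the clean application of the Hilbert-space Bernstein inequality under Assumption~\ref{asm:hyp-space}.
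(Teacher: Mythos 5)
Your proposal is correct and follows essentially the same route as the paper: the paper's own proof simply defers to Lemma~\ref{lem:bounding-beta}, which likewise reduces the claim to showing $\big\| (\Sigma + \lambda I)^{-1/2} (\Sigma - \hat{\Sigma}_n) (\Sigma + \lambda I)^{-1/2} \big\| \leqslant 1/2$ and establishes this by a non-commutative Bernstein inequality (citing Props.~6 and~8 of \cite{rudi2017generalization}) combined with the bound ${\mathcal {N}}_\infty(\lambda) \leqslant \kappa_\mu^2 R^{2\mu} \lambda^{-\mu}$ obtained from Assumption~\ref{asm:hyp-space}. Your write-up merely makes explicit what the paper delegates to citations (the operator-monotonicity reduction, the almost-sure and rank-one variance bounds), and correctly observes that the stated sample-size condition with $\gamma^\mu t^\mu$ corresponds to invoking the concentration result in the regime $\lambda \geqslant 1/(\gamma t)$.
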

\begin{proof} 
This Lemma is proven and stated lately in Lemma \ref{lem:bounding-beta} in \mysec{probbounds}. We recalled it here for the sake of clarity.
\end{proof}
\begin{lemma}
\label{lem:bounding-tau}
Let $\lambda > 0$, $\delta \in (0,1]$. Under Assumption~\ref{asm:hyp-space}, for $t = O\left(\frac{1}{n^{1/\mu}}\right)$
 then the following holds with probability $1-\delta$,
\begin{align}
\label{eq:bounding-tau}
\tau^2 \leqslant \tau^2_\infty \quad {\textrm and}\quad \tau^2_\infty  = 
\begin{cases}
O(1)  ,\ \textrm{when } \mu\leqslant 2r,\\
O\left( n^{\mu - 2r} \right),\ \textrm{when } 2r \leqslant \mu \leqslant 2 r  + 1/\alpha,\\
O\left( n^{1 - 2r/\mu} \right)\ \textrm{when } \mu \geqslant 2 r + 1/\alpha,
\end{cases}
\end{align}
where the $O(\cdot)$-notation depend only on the parameters of the problem (and is independent of $n$ and~$t$).
\end{lemma}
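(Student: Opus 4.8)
The plan is to bound $\tau^2 = 2M^2 + 2\sup_{t \in \{0,\dots,T-1\}} R^{2\mu}\kappa_\mu^2 \|\Sigma^{1/2-\mu/2}\eta_t\|_\h^2$ by controlling the quantity $\sup_t \|\Sigma^{1/2-\mu/2}\eta_t\|_\h^2$ uniformly over the batch-gradient-descent trajectory. Since $M$ is a finite constant by Assumption~\ref{asm:bounded}, the first term contributes only an $O(1)$ and all the action is in bounding $\|\Sigma^{1/2-\mu/2}\eta_t\|_\h$. First I would use Lemma~\ref{lem:bounding-beta2} to pass, on the high-probability event where $\|(\Sigma+\lambda I)^{1/2}(\hat\Sigma_n+\lambda I)^{-1/2}\|^2 \leqslant 2$, from the population power $\Sigma^{1/2-\mu/2}$ to quantities expressed through the empirical operator $\hat\Sigma_n$ (and a regularizing $\lambda$), since $\eta_t$ is driven by $\hat\Sigma_n$. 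This reduction is necessary because $\eta_t$ is a deterministic function of the data and its natural geometry is the empirical one; the factor $2$ from Lemma~\ref{lem:bounding-beta2} is harmless.

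Next I would explicitly control $\|\hat\Sigma_n^{1/2-\mu/2}\eta_t\|_\h$ (equivalently a $\Sigma_\lambda^{1/2-\mu/2}$-type norm) using the closed-form expression for batch gradient descent. Writing $\eta_t = \sum_{k=1}^{t}(I-\gamma\hat\Sigma_n)^{k-1}\gamma\,\hat{b}_n$ where $\hat b_n = \frac1n\sum_i y_i\Phi(x_i)$, one has $\eta_t = \hat\Sigma_n^{-1}(I-(I-\gamma\hat\Sigma_n)^t)\hat b_n$, so $\eta_t$ is obtained by applying a bounded spectral filter to $\hat\Sigma_n$. The key is to split the bound into a \emph{bias}-type contribution measuring how $\eta_t$ approximates $\theta_\ast$ through $\|\Sigma^{1/2-\mu/2}\theta_\ast\|$-style quantities (controlled via the source condition \ref{asm:source_condition}, using interpolation inequalities to convert the exponent $1/2-\mu/2$ into the available $1/2-r$), and a \emph{variance}-type contribution coming from the noise in $\hat b_n$, which scales like a power of $\gamma t$ through the spectral filter and is controlled via the capacity condition \ref{asm:capacity_condition} (i.e. $\tr\Sigma^{1/\alpha}$). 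The three regimes in \eqref{eq:bounding-tau} should emerge precisely from how these two contributions balance: for $\mu \leqslant 2r$ the source condition directly dominates and everything is $O(1)$; for $2r \leqslant \mu \leqslant 2r+1/\alpha$ the mismatch between $\mu$ and $2r$ forces a $\lambda^{-(\mu-2r)/2}$-type blow-up that, after plugging the effective regularization $\lambda \sim 1/(\gamma t) \sim t^{-1}$ with the prescribed scaling $t \sim n^{1/\mu}$, gives $O(n^{\mu-2r})$; for $\mu \geqslant 2r + 1/\alpha$ the variance term dominates and one obtains $O(n^{1-2r/\mu})$.

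The main obstacle will be the careful spectral bookkeeping that turns the source-and-capacity conditions into the three stated exponents with the correct breakpoints at $\mu = 2r$ and $\mu = 2r+1/\alpha$. Concretely, I expect the technical heart to be bounding a quantity of the form $\sup_{t}\|\Sigma_\lambda^{1/2-\mu/2}\eta_t\|_\h$ by decomposing $\eta_t - \theta_\ast$ along the spectrum of $\hat\Sigma_n$ and optimizing the exponent on $\gamma t$ (the regularization scale) against the power of $\lambda$ coming from the norm mismatch $\mu$ versus $r$; getting the interpolation exponents exactly right, and verifying that the choice $\lambda \sim t^{-1}$ with $t \sim n^{1/\mu}$ satisfies the hypothesis $n \gtrsim (1+\kappa_\mu^2 R^{2\mu}\gamma^\mu t^\mu)\log\frac{8R^2}{\lambda\delta}$ of Lemma~\ref{lem:bounding-beta2} (this is where the $\log n$ factor in the scaling of $t$ originates), is the delicate part. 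The final step is to take the complementary low-probability event where \eqref{eq:sigma_hat_sigma} fails into account, using the crude deterministic bound $\|\Sigma^{1/2-\mu/2}\eta_t\|_\h \leqslant R^{-\mu}\kappa_\mu^{-1}\cdot(\text{something polynomial})$ together with the failure probability $\delta$ chosen polynomially small in $n$, so that this event contributes only lower-order terms to the final expectation.
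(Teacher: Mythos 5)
Your proposal follows essentially the same route as the paper: the paper's own proof of this lemma is a one-line citation of Corollary~\ref{cor:bound-gla-Linfty}, and that corollary is assembled from exactly the ingredients you list --- the $L^\infty$ reduction $\sup_x|g(x)| \leqslant \kappa_\mu R^\mu \|\Sigma^{(1-\mu)/2}g\|_\h$ of Lemma~\ref{lm:L-infty} (your use of \ref{asm:hyp-space}), the empirical-to-population comparison $\|\Cl^{1/2}\Cnl^{-1/2}\|^2\leqslant 2$ of Lemma~\ref{lem:bounding-beta} (your Lemma~\ref{lem:bounding-beta2} step), the spectral-filter representation of batch gradient descent with effective regularization $\la = 1/(\gamma t)$ (Lemma~\ref{lem:filter_average}), and the bias/variance split in which the bias is controlled by the source condition~\ref{asm:source_condition} and the variance by the capacity condition~\ref{asm:capacity_condition} (Lemmas~\ref{lm:variance-12}, \ref{lm:bias}, \ref{lm:bounding-variance-first}). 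Your observation that the hypothesis of Lemma~\ref{lem:bounding-beta2} is what forces $t\lesssim n^{1/\mu}$ up to logarithms is also precisely how the paper uses that lemma.

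There is, however, one concrete arithmetic flaw in how you claim the middle regime emerges. Plugging $\la \sim 1/(\gamma t)$ with $t \sim n^{1/\mu}$ into the blow-up $\la^{-(\mu-2r)}$ gives $n^{(\mu-2r)/\mu} = n^{1-2r/\mu}$, \emph{not} $n^{\mu-2r}$; since $\mu \leqslant 1$ these genuinely differ ($n^{\mu-2r}\leqslant n^{1-2r/\mu}$), so your argument as written collapses the second and third cases of~\eqref{eq:bounding-tau} into the single weaker bound $O(n^{1-2r/\mu})$. The sharper bound claimed for $2r\leqslant\mu\leqslant 2r+1/\alpha$ cannot follow from the single constraint $t = O(n^{1/\mu})$: it requires using the regime-specific, smaller iteration count $t = \Theta(n^{\alpha/(2r\alpha+1)})$ prescribed by Theorem~\ref{thm:main_result} in that regime (equivalently a larger $\la$), under which the same term evaluates to $n^{\alpha(\mu-2r)/(2r\alpha+1)}$. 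To be fair, the paper's statement carries a similar imprecision, since its stated hypothesis is also only a bound on $t$ and its proof is a bare citation; but if you execute your plan literally, with $t\sim n^{1/\mu}$ in all cases, you will only establish the first and third cases, so you should make the case distinction on $t$ (hence on $\la$) explicit before the final bookkeeping.
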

\begin{proof}
This Lemma is a direct implication of Corollary \ref{cor:bound-gla-Linfty} in \mysec{probbounds}. We recalled it here for the sake of clarity.
\end{proof}
Note that we can take $\lambda^\delta_n = \left( \frac{\log \frac{n}{\delta}}{n} \right)^{1/\mu}$ so that Lemma \ref{lem:bounding-beta2} result holds. 
Now we are ready to prove Lemma \ref{lem:SGDrecursion}.

\begin{proof}[Proof of Lemma \ref{lem:SGDrecursion}]
Let $A_{\delta_a}$ be the set for which inequality \eqref{eq:sigma_hat_sigma} holds and let $B_{\delta_b}$ be the set for which inequality \eqref{eq:bounding-tau} holds. Note that $\mathbb{P}(A_{\delta_a}^c) = \delta_a$ and $\mathbb{P}(B_{\delta_b}^c) = \delta_b$. We use the following decomposition:
 \[
 \E \left\| \Sigma^{1/2} \bar{\mu}_t  \right\|^2 \leqslant \E \left[\left\| \Sigma^{1/2} \bar{\mu}_t  \right\|^2 \mathbf{1}_{A_{\delta_a}\cap B_{\delta_b}}\right] +  \E \left[\left\| \Sigma^{1/2} \bar{\mu}_t  \right\|^2 \mathbf{1}_{A^c_{\delta_a}}\right] +  \E \left[\left\| \Sigma^{1/2} \bar{\mu}_t  \right\|^2 \mathbf{1}_{ B^c_{\delta_b}}\right].
 \]
First, let us bound roughly $\|\bar{\mu}_t\|^2$. 

First, for $i \geqslant 1$, $\|\mu_i\|^2 \leqslant \gamma^2 \left( \sum_{i=1}^t{\|\xi^1_i\|+\|\xi^2_i\|} \right)^2 \leqslant 16 R^2 \gamma^2 \tau^2 t^2 $, so that $\|\bar{\mu}_t\|^2 \leqslant \frac{1}{t} \sum_{i=1}^t\|\mu_i\|^2 \leqslant 16 R^2 \gamma^2 \tau^2 t^2$. We can bound similarly $\tau^2  \leqslant 4   M^2 \gamma^2R^4  t^2$, so that 
$\|\bar{\mu}_t\|^2   
 \leqslant 64 R^2 M^2 \gamma^4 t^4 $. Thus, for the second term:
\[ \E \left[\left\| \Sigma^{1/2} \bar{\mu}_t  \right\|^2 \mathbf{1}_{A_{\delta_a}^c}\right] \leqslant 64 R^8 M^2 \gamma^4 t^4 \E \mathbf{1}_{A_{\delta_a}^c} \leqslant 64 R^8 M^2 \gamma^4 t^4 \delta_a, \]
and for the third term:
\[
\E \left[\left\| \Sigma^{1/2} \bar{\mu}_t  \right\|^2 \mathbf{1}_{B_{\delta_b}^c}\right] \leqslant 64 R^8 M^2 \gamma^4 t^4 \E \mathbf{1}_{B_{\delta_b}^c} \leqslant 64 R^8 M^2 \gamma^4 t^4 \delta_b.
\] 
And on for the first term,
\begin{align*}
\E \left[ \left\| \Sigma^{1/2} \bar{\mu}_t  \right\|^2\mathbf{1}_{A_{\delta_a}\cap B_{\delta_b}} \right]  &\leqslant \E\left[\left\| \Sigma^{1/2} (\Sigma + \lambda^\delta_n I)^{-1/2} \right\|^2 \left\| (\Sigma + \lambda^\delta_n I)^{1/2} (\hat{\Sigma}_n + \lambda^\delta_n I)^{-1/2} \right\|^2 \right. \\
&\left. \hspace{4.6cm} \left\| (\hat{\Sigma}_n + \lambda^\delta_n I)^{1/2} \bar{\mu}_t  \right\|^2 \mathbf{1}_{A_{\delta_a}\cap B_{\delta_b}}\ |\ \G \right] \\
&\leqslant 2 \E \left[ \left\| (\hat{\Sigma}_n + \lambda^\delta_n I)^{1/2} \bar{\mu}_t  \right\|^2 \ |\ \G \right] \\
& = 2 \E \left[\left\| \hat{\Sigma}_n^{1/2} \bar{\mu}_t  \right\|^2\ |\ \G \right] + 2\lambda^\delta_n\E \left[ \left\| \bar{\mu}_t  \right\|^2 \ |\ \G \right] \\
&\leqslant 16 \tau_\infty^2 \ \frac{\gamma^{1/\alpha}\E \left[\tr\ \hat{\Sigma}_n^{1/\alpha} \right]}{t^{1-1/\alpha}} + 8 \lambda_n^\delta \tau_\infty^2  \ \gamma^{1/\alpha}\E \left[\tr\ \hat{\Sigma}_n^{1/\alpha} \right]t^{1/\alpha},
\end{align*}
using Proposition~\ref{prop:generalSGD} twice with $u=1$ for the left term and $u=1$ for the right one.

As $x \rightarrow x^{1/\alpha}$ is a concave function, we can apply Jensen's inequality to have : \[\E\left[ \tr (\hat{\Sigma}_n^{1/\alpha})\right] \leqslant \tr \Sigma^{1/\alpha},\] so that:
\begin{align*}
\E \left[ \left\| \Sigma^{1/2} \bar{\mu}_t  \right\|^2\mathbf{1}_{A_{\delta_a}\cap B_{\delta_b}} \right] &\leqslant 16 \tau_\infty^2 \ \frac{\gamma^{1/\alpha}\tr\ \Sigma^{1/\alpha}}{t^{1-1/\alpha}} + 8 \lambda_n^\delta \tau_\infty^2 \gamma\ \gamma^{1/\alpha}\tr\ \Sigma^{1/\alpha}t^{1/\alpha} \\
&\leqslant 16 \tau_\infty^2 \gamma^{1/\alpha}\tr\ \Sigma^{1/\alpha}t^{1/\alpha}\left( \frac{1}{t} + \lambda_n^\delta   \right). 
\end{align*}
Now, we take $\delta_a=\delta_b = \frac{\tau_\infty^2}{4 M^2 R^8\gamma^4 t^4}$ and this concludes the proof of Lemma \ref{lem:SGDrecursion}, with the bound:
\[
\E \left\| \Sigma^{1/2} \bar{\mu}_t  \right\|^2
\leqslant 
16 \tau_\infty^2 \gamma^{1/\alpha}\tr\ \Sigma^{1/\alpha}t^{1/\alpha}\left( \frac{1}{t} + 
\Big(
\frac{ 2 + 2 \log M + 4 \log (\gamma R^2) + 4 \log t}{n}
\Big)^{1/\mu} \right). 
\]
\end{proof}
\section{Convergence of batch gradient descent}
\label{sec:ales}
In this section we prove the convergence of averaged batch gradient descent to the target function. In particular, since the proof technique is valid for the wider class of algorithms known as spectral filters \cite{gerfo2008spectral,lin2018optimal}, we will do the proof for a generic spectral filter (in Lemma~\ref{lem:filter_average}, Sect.~\ref{app:notations} we prove that averaged batch gradient descent is a spectral filter).

In Section \ref{app:notations} we provide the required notation and additional definitions. In Section~\ref{app:analytical_decomposition}, in particular in Theorem~\ref{app:analytical_decomposition} we perform an analytical decomposition of the excess risk of the averaged batch gradient descent, in terms of basic quantities that will be controlled in expectation (or probability) in the next sections.
In Section \ref{sec:probbounds} the various quantites obtained by the analytical decomposition are controlled, in particular, Corollary~\ref{cor:bound-gla-Linfty} controls the $L^\infty$ norm of the averaged batch gradient descent algorithm.
Finally in Section \ref{app:main_result}, the main result, Theorem \ref{thm:final-filters} controlling in expectation of the excess risk of the averaged batch gradient descent estimator is provided. In Corollary~\ref{cor:final_bound_lambda}, a version of the result of Theorem \ref{thm:final-filters} is given, with explicit rates for the regularization parameters and of the excess risk.

\subsection{Notations}
\label{app:notations}
In this subsection, we study the convergence of batch gradient descent. For the sake of clarity we consider the RKHS framework (which includes the finite-dimensional case). We will thus consider elements of $\h$ that are naturally embedded in $L_2(d \rho_\X)$ by the operator $\S$ from $\h$ to $L_2(d \rho_\X)$ and such that: $ (\S g)(x) = \langle g ,  K_x \rangle$, where we have $\Phi(x)  = K_x = K(\cdot,x)$ where $K: \X \to \X \to \R$ is the kernel. We recall the recursion for $\eta_t $ in the case of an RKHS feature space with kernel $K$:
 \[
 \eta_t  = \eta_{t-1} + \frac{\gamma}{n} \sum_{i=1}^n\big( y_{i} - \langle \eta_{t-1}, K_{x_{i}} \rangle_\h \big) K_{x_{i}},
 \]
Let us begin with some notations. In the following we will often use the letter $g$ to denote vectors of $\h$, hence, $Sg$ will denote functions of $\Ltwo$. We also define the following operators (we may also use their adjoints, denoted with a $*$):
\begin{itemize}
\item The operator $\Sn$ from $\h$ to $\R^n$, $ \Sn g = \frac{1}{\sqrt{n}}(g(x_1),\dots g(x_n))$.
\item The operators from $\h$ to $\h$, $\C$ and $\Cn$, defined respectively as $\C = \E \left[K_x \otimes K_x \right] = \int_{\X} K_x \otimes K_x d\rho_\X$ and $\Cn = \frac{1}{n} \sum_{i=1}^n K_{x_i} \otimes K_{x_i} $. Note that $\Sigma$ is the covariance operator.
\item The operator $\L: \Ltwo \to \Ltwo$ is defined by
\[(\L f)(x) = \int_\X K(x,z) f(z) d\rhox(x), \quad \forall f \in \Ltwo.\]

Moreover denote by ${\mathcal {N}(\la)}$ the so called {\em effective dimension} of the learning problem, that is defined as
\[{\mathcal {N}}(\la) = \tr(\L (\L+\la I)^{-1}),\]
for $\la > 0$.
Recall that by Assumption \ref{asm:capacity_condition}, there exists $\alpha \geqslant 1$ and $Q > 0$ such that
\[{\mathcal {N}}(\la) \leqslant Q \la^{-1/\alpha}, \quad \forall \la > 0.\]
We can take $Q = \tr \Sigma^{1/\alpha}$.

\item $P: \Ltwo \to \Ltwo$ projection operator on $\h$ for the $\Ltwo$ norm s.t. ${\rm ran} P = {\rm ran} \S$.
\end{itemize}
Denote by $\frho$ the function so that $\frho(x) = \expect{y|x} ~\in \Ltwo$ the minimizer of the expected risk, defined by $F(f) = \int_{X\times \R} (f(x) - y)^2 d\rho(x,y).$

\vspace{0.2cm}
\begin{rmk}[On Assumption~\ref{asm:source_condition}]\label{rmk:asm-source}
	With the notation above, we express assumption \ref{asm:source_condition}, more formally, w.r.t. Hilbert spaces with infinite dimensions, as follows. 
	There exists $r \in [0,1]$ and $\phi \in \Ltwo$, such that \[P\frho = \L^r \phi.\] 
\end{rmk}

\begin{enumerate}[label={\bfseries(A\arabic*)},ref=(A\arabic*), leftmargin=*]
\setcounter{enumi}{\value{savenum}}
\item \label{asm:reg-Pfrho} \hspace*{.2cm} \emph{Let $q \in [1, \infty]$ be such that $\|\frho - P\frho\|_{L^{2q}(\X,\rhox)} < \infty.$}
\setcounter{savenum}{\value{enumi}}
\end{enumerate}

The assumption above is always true for $q = 1$, moreover when the kernel is universal it is true even for $q = \infty$. Moreover if $r \geqslant 1/2$ then it is true for $q = \infty$. Note that we make the calculation in this Appendix for a general $q \in [1, \infty]$, but we presented the results for $q = \infty$ in the main paper. The following proposition relates the excess risk to a certain norm.

\begin{prop}\label{prop:excess risk}
When $\widehat{g} \in \h$,
\[F(\widehat{g}) - \inf_{g \in \h} F(g) = \|\S\widehat{g} - P\frho\|_\Ltwo^2.\]
\end{prop}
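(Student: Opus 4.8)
The plan is to reduce everything to the elementary bias--variance identity for the square loss and then exploit the fact that $P\frho$ is the orthogonal projection of $\frho$ onto $\overline{\mathrm{ran}\,\S}$. Throughout I identify $\widehat g \in \h$ with the function $\S\widehat g \in \Ltwo$, so that $F(\widehat g)$ means $F(\S\widehat g)$ and $\inf_{g\in\h}F(g) = \inf_{g\in\h}F(\S g)$.

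First I would establish that for every $f \in \Ltwo$,
\[F(f) = \|f - \frho\|_\Ltwo^2 + \int_{\X\times\R}(\frho(x)-y)^2\,d\rho(x,y).\]
This follows by expanding $(f(x)-y)^2 = (f(x)-\frho(x))^2 + 2(f(x)-\frho(x))(\frho(x)-y) + (\frho(x)-y)^2$ and integrating: conditioning on $x$, the cross term vanishes because $\int (\frho(x)-y)\,d\rho(y|x) = \frho(x) - \E[y\mid x] = 0$ by the very definition $\frho(x)=\E[y\mid x]$. The last term is a constant independent of $f$; call it $\sigma_{\mathrm{irr}}^2$.

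Next I would use the defining property of $P$, namely that it is the orthogonal projection in $\Ltwo$ onto $\overline{\mathrm{ran}\,\S}$, so that $\frho - P\frho$ is orthogonal to $\overline{\mathrm{ran}\,\S}$. Since $\S g \in \mathrm{ran}\,\S$ for every $g\in\h$, Pythagoras gives
\[\|\S g - \frho\|_\Ltwo^2 = \|\S g - P\frho\|_\Ltwo^2 + \|P\frho - \frho\|_\Ltwo^2,\]
and likewise with $\widehat g$ in place of $g$. Combining with the first step, $F(\S g) = \|\S g - P\frho\|_\Ltwo^2 + \|\frho - P\frho\|_\Ltwo^2 + \sigma_{\mathrm{irr}}^2$. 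Because $P\frho$ lies in $\overline{\mathrm{ran}\,\S}$, it can be approximated to arbitrary $\Ltwo$-accuracy by elements $\S g$ with $g\in\h$, so $\inf_{g\in\h}\|\S g - P\frho\|_\Ltwo^2 = 0$ and hence $\inf_{g\in\h}F(g) = \|\frho - P\frho\|_\Ltwo^2 + \sigma_{\mathrm{irr}}^2$.

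Finally I would subtract: applying the Pythagorean identity to $\widehat g$ and the formula for the infimum, the two terms $\|\frho - P\frho\|_\Ltwo^2$ and $\sigma_{\mathrm{irr}}^2$ cancel, leaving exactly $F(\widehat g) - \inf_{g\in\h}F(g) = \|\S\widehat g - P\frho\|_\Ltwo^2$. The only point requiring a little care is the claim that $\inf_{g\in\h}\|\S g - P\frho\|_\Ltwo^2 = 0$: this is precisely where the definition of $P$ as projecting onto the \emph{closure} of $\mathrm{ran}\,\S$ (rather than $\mathrm{ran}\,\S$ itself) is used, guaranteeing the infimum is attained in the limit even when $P\frho \notin \mathrm{ran}\,\S$, which is exactly the situation of the hard problems in Example~\ref{ex:hard_problem}. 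Everything else is routine orthogonality bookkeeping for least squares.
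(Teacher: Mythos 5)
Your proof is correct; in fact the paper states Proposition~\ref{prop:excess risk} without any proof, and your argument---the bias--variance identity for the square loss, the Pythagorean theorem for the orthogonal projection onto $\overline{\mathrm{ran}\,\S}$, and the density of $\mathrm{ran}\,\S$ in its closure to show $\inf_{g \in \h} F(g) = \|\frho - P\frho\|_\Ltwo^2 + \sigma_{\mathrm{irr}}^2$---is exactly the standard derivation that the paper leaves implicit. Your closing remark is also the one genuinely delicate point handled properly: the paper's phrasing ``$\mathrm{ran}\, P = \mathrm{ran}\,\S$'' is imprecise when $\mathrm{ran}\,\S$ is not closed, and reading $P$ as the projection onto the \emph{closure} is what makes the infimum computation valid even when the minimizer is not attained in $\h$, precisely the situation of the hard problems such as Example~\ref{ex:hard_problem}.
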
 

We introduce the following function $g_\la \in \h$ that will be useful in the rest of the paper $g_\la = (\C+\la I)^{-1}\S^* \frho.$

We introduce the estimators of the form, for $\la > 0$,
\[ \widehat{g}_\la = q_\la(\Cn) \Sn^* \yn,\]
where $q_\la: \R_+ \to \R_+$ is a function called {\em filter}, that essentially approximates $x^{-1}$ with the approximation controlled by $\la$. Denote moreover with $r_\la$ the function $r_\la(x) = 1 - x q_\la(x)$. The following definition precises the form of the filters we want to analyze. We then prove in Lemma \ref{lem:filter_average} that our estimator corresponds to such a filter.
\begin{defi}[Spectral filters]
Let $q_\la:\R_+ \to \R_+$ be a function parametrized by $\la > 0$. $q_\la$ is called a {\em filter} when there exists $c_q > 0$ for which
\[\la q_\la(x) \leqslant c_q, \quad  r_\la(x) x^u \leqslant c_q \la^u, \quad \forall x > 0, \la > 0, u \in [0,1].\]
\end{defi}
We now justify that we study estimators of the form $\widehat{g}_\la = q_\la(\Cn) \Sn^* \yn$ with the following lemma. Indeed, we show that the average of batch gradient descent can be represented as a filter estimator, $\widehat{g}_\la$, for $\lambda = 1/(\gamma t)$.
\begin{lemma}
\label{lem:filter_average}
For $t > 1$, $\lambda = 1/(\gamma t)$, $\bar{\eta}_t = \widehat{g}_\la$, with respect to the filter, $q^\eta (x) = \left( 1-\frac{1-(1-\gamma x )^t}{\gamma t x} \right) \frac{1}{x} $.
\end{lemma}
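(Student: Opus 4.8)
The plan is to reduce the entire statement to scalar functional calculus on the single self-adjoint operator $\Cn$, so that it becomes an identity between real functions of one variable. First I would rewrite the batch recursion in operator form. Using $\frac{1}{n}\sum_{i=1}^n \langle \eta_{s-1}, K_{x_i}\rangle_\h K_{x_i} = \Cn \eta_{s-1}$ and $\frac{1}{n}\sum_{i=1}^n y_i K_{x_i} = \Sn^* \yn$, the recursion becomes $\eta_s = (I - \gamma\Cn)\eta_{s-1} + \gamma \Sn^*\yn$. Since $\eta_0 = 0$, unrolling and summing the resulting geometric series gives
\[
\eta_s = \gamma\sum_{k=0}^{s-1}(I-\gamma\Cn)^k\,\Sn^*\yn.
\]
As every term is a polynomial in $\Cn$ applied to the fixed vector $\Sn^*\yn$, I can record this as $\eta_s = q_s(\Cn)\Sn^*\yn$ with the scalar function $q_s(x) = \gamma\sum_{k=0}^{s-1}(1-\gamma x)^k = \frac{1-(1-\gamma x)^s}{x}$ (extended by continuity at $x=0$). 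This is legitimate because $\gamma x \leqslant \gamma R^2 \leqslant 1/4$ on the spectrum of $\Cn$, so that $1-\gamma x \in [0,1)$ and functional calculus applies throughout.

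Next I would average. Because the map $f \mapsto f(\Cn)\Sn^*\yn$ is linear, averaging the iterates simply averages the scalar functions: $\bar\eta_t = \big(\tfrac{1}{t}\sum_s q_s\big)(\Cn)\Sn^*\yn$. The key computation is the geometric identity $\frac{1}{t}\sum_{k=0}^{t-1}(1-\gamma x)^k = \frac{1-(1-\gamma x)^t}{\gamma t x}$, which telescopes the average of the $q_s$: summing $q_s(x) = \frac1x\big(1-(1-\gamma x)^s\big)$ over the iteration window and applying this identity produces exactly $q^\eta(x) = \frac1x\big(1 - \frac{1-(1-\gamma x)^t}{\gamma t x}\big)$. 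Setting $\lambda = 1/(\gamma t)$ then yields $\bar\eta_t = q_\lambda(\Cn)\Sn^*\yn = \widehat{g}_\lambda$ with $q_\lambda = q^\eta$, which is the claimed identification.

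Finally I would check that $q^\eta$ genuinely satisfies the filter definition. Its residual is $r_\lambda(x) = 1 - x q^\eta(x) = \frac{1-(1-\gamma x)^t}{\gamma t x}$, which is nonnegative. Using the elementary inequality $1-(1-\gamma x)^t \leqslant \min\{1,\gamma t x\} \leqslant (\gamma t x)^{1-u}$ for every $u \in [0,1]$ (the same bound used in the footnote of Lemma~\ref{lemma:semi-stoSGD}), I get $r_\lambda(x)\,x^u = \frac{(1-(1-\gamma x)^t)\,x^{u-1}}{\gamma t} \leqslant (\gamma t)^{-u} = \lambda^u$; and since each $q_s(x)\leqslant \gamma s$ (because $(1-\gamma x)^k\leqslant 1$) one has $q^\eta(x)\leqslant \gamma t = 1/\lambda$, hence $\lambda q^\eta(x)\leqslant 1$. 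Thus $q^\eta$ is a filter with $c_q = 1$.

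I expect the only real obstacle to be bookkeeping rather than analysis: pinning down the averaging window so that the closed form is exactly the stated $q^\eta$ (the averages of $\eta_0,\dots,\eta_{t-1}$ and of $\eta_1,\dots,\eta_t$ differ by a harmless $(1-\gamma x)$ factor, and the stated $q^\eta$ corresponds to the former), and making sure all scalar identities and inequalities are invoked only on the spectrum of $\Cn$, where $\gamma x\leqslant 1/4$ guarantees $1-\gamma x\in[0,1)$. Once the problem is phrased through functional calculus there is no genuine analytic difficulty.
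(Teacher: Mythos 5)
Your proposal is correct and follows essentially the same route as the paper: unroll the linear recursion via functional calculus of $\Cn$ into a geometric series, average the iterates, and verify the two filter inequalities using the same key bound $1-(1-\gamma x)^t \leqslant (\gamma t x)^{1-u}$. The only (welcome) refinements are that you make the averaging computation and its indexing window explicit, and you bound $\lambda q^\eta(x)\leqslant 1$ by the term-wise estimate $q_s(x)\leqslant \gamma s$ instead of the paper's monotonicity argument for $\frac{1}{\gamma t}q^\eta$.
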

\begin{proof}
Indeed, for $t > 1$,
\begin{align*}
\eta_t  &= \eta_{t-1} + \frac{\gamma}{n} \sum_{i=1}^n\big( y_{i} - \langle \eta_{t-1}, K_{x_{i}} \rangle_\h \big) K_{x_{i}} \\
&= \eta_{t-1} + \gamma (\Sn^* \yn - \Cn \eta_{t-1})  \\
&= (I - \gamma \Cn)\eta_{t-1} + \gamma \Sn^* \yn   \\
&= \gamma \sum_{k = 0}^{t-1} (I - \gamma \Cn)^k \Sn^* \yn  =  \left[ I - (I - \gamma \Cn)^t\right] \Cn^{-1} \Sn^* \yn ,
\end{align*}
leading to 
\[ \bar{\eta}_t = \frac{1}{t} \sum_{i=0}^t \eta_i = q^\eta \left(\Cn\right) \Sn^* \yn   .\]
Now, we prove that $q$ has the properties of a filter. 
First, for $t > 1$,  $\frac{1}{\gamma t}q^\eta (x) = \left( 1-\frac{1-(1-\gamma x )^t}{\gamma t x} \right) \frac{1}{\gamma t x} $ is a decreasing function so that $\frac{1}{\gamma t}q^\eta (x) \leqslant \frac{1}{\gamma t}q^\eta (0) \leqslant 1$. 
Second for $u \in [0,1]$, $x^u(1 - x q^\eta(x)) = \frac{1-(1-\gamma x)^t}{\gamma t x} x^u$. As used in  \mysec{semisto}, $1-(1-\gamma x)^t \leqslant (\gamma t x)^{1-u}$, so that, $r^\eta(x)x^u \leqslant \frac{(\gamma t x)^{1-u}}{\gamma t x} x^u = \frac{1}{(\gamma t)^u}$, this concludes the proof that $q^\eta$ is indeed a filter.
\end{proof}

\subsection{Analytical decomposition}
\label{app:analytical_decomposition}

\begin{lemma}\label{lm:variance-12}
Let $\la > 0$ and $s \in (0,1/2]$.
Under Assumption~\ref{asm:source_condition} (see Rem.~\ref{rmk:asm-source}), the following holds
\[\|\L^{-s}\S(\widehat{g}_\la - g_\la)\|_{\Ltwo} \leqslant 2 \la^{-s} \beta^2 c_q \|\Cl^{-1/2}(\Sn^*\yn -  \Cn g_\la)\|_\h  + 2\beta c_q \|\phi\|_\Ltwo \la^{r-s}, \]
where $\beta := \|\Cl^{1/2} \Cnl^{-1/2}\|$.
\end{lemma}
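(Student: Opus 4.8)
The plan is to first reduce the $L_2$-norm on the left-hand side to a weighted $\h$-norm, then split $\widehat{g}_\la - g_\la$ into a ``variance'' part driven by the residual $\Sn^*\yn - \Cn g_\la$ and a ``bias'' part driven by $g_\la$, and finally bound each part by inserting powers of $\Cl$ and $\Cnl$ so that the ratio $\beta = \|\Cl^{1/2}\Cnl^{-1/2}\|$ appears. For the reduction I would use the intertwining identities $\S f(\C) = f(\L)\S$ and $\S^*\L^v = \C^v\S^*$ (equivalently, the singular value decomposition of $\S$, recalling $\S^*\S = \C$, $\S\S^* = \L$), which give, for every $g\in\h$ and $s\in[0,1/2]$,
\[
\|\L^{-s}\S g\|_\Ltwo^2 = \scal{g}{\S^*\L^{-2s}\S g}_\h = \scal{g}{\C^{1-2s}g}_\h = \|\C^{1/2-s}g\|_\h^2 ,
\]
so it suffices to bound $\|\C^{1/2-s}(\widehat{g}_\la - g_\la)\|_\h$. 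Using $\widehat{g}_\la = q_\la(\Cn)\Sn^*\yn$ and $r_\la(x) = 1 - x q_\la(x)$, adding and subtracting $q_\la(\Cn)\Cn g_\la$ gives the exact splitting
\[
\widehat{g}_\la - g_\la = q_\la(\Cn)\big(\Sn^*\yn - \Cn g_\la\big) - r_\la(\Cn) g_\la ,
\]
and the triangle inequality separates the two contributions.

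Two deterministic tools drive both bounds. First, weighted AM--GM gives $x^{1-2s}\la^{2s}\le x+\la$, hence the operator inequality $\C^{1/2-s}\preccurlyeq \la^{-s}\Cl^{1/2}$, which produces the factor $\la^{-s}$. Second, the Cordes inequality yields $\|\Cl^{a}\Cnl^{-a}\|\le\beta^{2a}$ for $a\in[0,1]$ (and likewise for the adjoint), which is the device that moves between the population operator $\C$ and the empirical operator $\Cn$ at the cost of powers of $\beta$. For the variance part I would bound $\|\C^{1/2-s}q_\la(\Cn)w\|_\h \le \la^{-s}\|\Cl^{1/2}q_\la(\Cn)w\|_\h$ with $w = \Sn^*\yn - \Cn g_\la$, and then sandwich
\[
\Cl^{1/2}q_\la(\Cn)w = \big[\Cl^{1/2}\Cnl^{-1/2}\big]\big[\Cnl^{1/2} q_\la(\Cn)\Cnl^{1/2}\big]\big[\Cnl^{-1/2}\Cl^{1/2}\big]\big[\Cl^{-1/2}w\big];
\]
the two outer brackets each contribute $\beta$, and the middle one equals $\Cnl q_\la(\Cn)$ and is bounded via $(x+\la)q_\la(x) = (1-r_\la(x)) + \la q_\la(x)\le 1+c_q\le 2c_q$, giving the claimed $2\la^{-s}\beta^2 c_q\,\|\Cl^{-1/2}w\|_\h$.

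For the bias part I would use the source condition in the form $\S^*\frho = \C^r\S^*\phi$ (Remark~\ref{rmk:asm-source}), so that $g_\la = \C^r\Cl^{-1}\S^*\phi$. After the step $\C^{1/2-s}\preccurlyeq\la^{-s}\Cl^{1/2}$ and a single transfer $\Cl^{1/2} = \Cl^{1/2}\Cnl^{-1/2}\,\Cnl^{1/2}$ (one factor $\beta$), it remains to control $\|\Cnl^{1/2}r_\la(\Cn)\|$ and $\|\C^r\Cl^{-1}\S^*\phi\|_\h$ separately. The first is $\le 2c_q\la^{1/2}$, by the filter bounds $r_\la(x)x^{1/2}\le c_q\la^{1/2}$ and $r_\la(x)\le c_q$ together with $\sqrt{x+\la}\le\sqrt{x}+\sqrt{\la}$; the second, after pushing through $\S f(\C)\S^* = f(\L)\L$, equals $\big(\sup_\mu \mu^{2r+1}(\mu+\la)^{-2}\big)^{1/2}\|\phi\|_\Ltwo \le \la^{r-1/2}\|\phi\|_\Ltwo$. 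Multiplying the three factors gives exactly $2\beta c_q\|\phi\|_\Ltwo\,\la^{r-s}$.

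The main obstacle is the non-commutativity of $\C$ and $\Cn$: at every point where a population power must meet an empirical filter one is forced to transfer, and the delicate part is routing these transfers so that precisely two factors of $\beta$ survive in the variance term and exactly one in the bias term, rather than accumulating extra powers. A secondary subtlety is the spectral bound $\|\C^r\Cl^{-1}\S^*\phi\|_\h\le\la^{r-1/2}\|\phi\|_\Ltwo$: it is immediate for $r\le 1/2$, whereas for $r\in(1/2,1]$ one must invoke the boundedness of the spectrum (or the higher qualification of the averaged gradient-descent filter) to retain the $\la^{r-1/2}$ scaling.
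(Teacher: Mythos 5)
Your proof is correct and follows essentially the same route as the paper's: the identical splitting $\widehat{g}_\la - g_\la = q_\la(\Cn)(\Sn^*\yn-\Cn g_\la) - r_\la(\Cn)g_\la$, the same $\Cnl^{\pm 1/2}$/$\Cl^{\pm 1/2}$ sandwiches producing $\beta^2$ on the variance term and $\beta$ on the bias term, and the same filter bounds $\|\Cnl q_\la(\Cn)\|\leqslant 2c_q$ and $\|\Cnl^{1/2}r_\la(\Cn)\|\leqslant 2c_q\la^{1/2}$, with your initial reduction $\|\L^{-s}\S g\|_\Ltwo=\|\C^{1/2-s}g\|_\h$ and the AM--GM step $\C^{1-2s}\preccurlyeq\la^{-2s}\Cl$ being a repackaging of the paper's factors $\|\L^{-s}\S\Cl^{-1/2+s}\|\leqslant 1$ and $\|\Cl^{-s}\|\leqslant\la^{-s}$. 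The $r\leqslant 1/2$ caveat you flag on the bias term is shared by the paper, whose step $\|\Cl^{-(1/2-r)}\|\leqslant\la^{-(1/2-r)}$ likewise requires $r\leqslant 1/2$.
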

\begin{proof}
By Prop.~\ref{prop:excess risk}, we can characterize the excess risk of $\widehat{g}_\la$ in terms of the $\Ltwo$ squared norm of $S\widehat{g}_\la - P\frho$. In this paper, simplifying the analysis of \cite{lin2018optimal}, we perform the following decomposition
\eqals{
	\L^{-s}\S( \widehat{g}_\la - g_\la) &= \L^{-s}\S\widehat{g}_\la - \L^{-s}\S q_\la(\Cn) \Cn g_\la\\
	& + ~~ \L^{-s}\S q_\la(\Cn) \Cn g_\la - \L^{-s}\S g_\la.
}
{\bf Upper bound for the first term.} By using the definition of $\widehat{g}_\la$ and multiplying and dividing by  $\Cl^{1/2}$, we have that
\eqals{
	\L^{-s}\S\widehat{g}_\la - \L^{-s}\S q_\la(\Cn) \Cn g_\la & =  \L^{-s}\S q_\la(\Cn) (\Sn^*\yn -  \Cn g_\la) \\
	& = \L^{-s}\S q_\la(\Cn) \Cl^{1/2} ~ \Cl^{-1/2}(\Sn^*\yn -  \Cn g_\la),
}
from which
\[ \|\L^{-s}\S(\widehat{g}_\la - q_\la(\Cn) \Cn g_\la)\|_\Ltwo \leqslant \|\L^{-s}\S q_\la(\Cn) \Cl^{1/2}\| ~ \|\Cl^{-1/2}(\Sn^*\yn -  \Cn g_\la)\|_\h.
\]
{\bf Upper bound for the second term.} By definition of $r_\la(x) = 1 - x q_\la(x)$ and $g_\la = \Cl^{-1}\S^*\frho$,
\eqals{
	\L^{-s}\S q_\la(\Cn) \Cn g_\la - \L^{-s}\S g_\la &= \L^{-s}\S (q_\la(\Cn) \Cn - I) g_\la \\
	& = - \L^{-s}\S r_\la(\Cn)~ \Cl^{-(1/2-r)} ~ \Cl^{-1/2 - r} \S^* \L^{r} ~ \phi,
}
where in the last step we used the fact that $\S^*\frho = \S^*P\frho = \S^*\L^r \phi$, by Asm.~\ref{asm:source_condition} (see Rem.~\ref{rmk:asm-source}). Then
\eqals{
	\|\L^{-s}\S(q_\la(\Cn) \Cn  - I)g_\la)\|_\Ltwo &\leqslant \|\L^{-s}\S r_\la(\Cn)\|\|\Cl^{-(1/2-r)}\| \|\Cl^{-1/2 - r} \S^* \L^{r}\|\|\phi\|_\Ltwo \\
	& \leqslant \la^{-(1/2-r)}\|\L^{-s}\S r_\la(\Cn)\|\|\phi\|_\Ltwo,
}
where the last step is due to the fact that $\|\Cl^{-(1/2-r)}\| \leqslant \la^{-(1/2-r)}$ and that $\S^* \L^{2r} \S = \S^*(\S\S^*)^{2r} \S = (\S^*\S)^{2r} \S^*\S = \C^{1+2r}$ from which
\eqal{\label{eq:ClaSL}
	\|\Cl^{-1/2 - r} \S^* \L^{r}\|^2 = \|\Cl^{-1/2 - r} \S^* \L^{2r} \S \Cl^{-1/2 - r}\| = \|\Cl^{-1/2 - r} \C^{1+2r} \Cl^{-1/2 - r}\| \leqslant 1.
}
{\bf Additional decompositions.}
We further bound $\|\L^{-s}\S r_\la(\Cn) \|$ and $\|\L^{-s}\S q_\la(\Cn) \Cl^{1/2}\|$. For the first, by the identity
$\L^{-s} \S r_\la(\Cn) = \L^{-s}\S \Cnl^{-1/2} \Cnl^{1/2}  r_\la(\Cn)$, we have
\[\|\L^{-s}\S r_\la(\Cn)\| = \|\L^{-s}\S \Cnl^{-1/2}\| \|\Cnl^{1/2}  r_\la(\Cn)\|,\]
where
\[\|\Cnl^{1/2}  r_\la(\Cn)\| = \sup_{\sigma \in \sigma(\Cn)} (\sigma + \la)^{1/2} r_\la(\sigma) \leqslant \sup_{\sigma \geqslant 0} (\sigma + \la)^{1/2} r_\la(\sigma) \leqslant 2c_q \la^{1/2}.\]
Similarly, by using the identity 
\[\L^{-s}\S q_\la(\Cn) \Cl^{1/2} = \L^{-s}\S \Cnl^{-1/2}~ \Cnl^{1/2} q_\la(\Cn)\Cnl^{1/2}~\Cnl^{-1/2}\Cl^{1/2},\]
we have
\[\|\L^{-s}\S q_\la(\Cn) \Cl^{1/2}\| = \|\L^{-s}\S \Cnl^{-1/2}\|~ \|\Cnl^{1/2} q_\la(\Cn)\Cnl^{1/2}\|~\|\Cnl^{-1/2}\Cl^{1/2}\|.\]
Finally note that
\[\|\L^{-s}\S\Cnl^{-1/2}\| \leqslant \|\L^{-s}\S\Cl^{-1/2+s}\|\|\Cl^{-s}\|\|\Cl^{1/2}\Cnl^{-1/2}\|,\]
and $\|\L^{-s}\S\Cl^{-1/2+s}\| \leqslant 1$, $\|\Cl^{-s}\| \leqslant \la^{-s}$,
and moreover
\[\|\Cnl^{1/2} q_\la(\Cn)\Cnl^{1/2}\| = \sup_{\sigma \in \sigma(\Cn)} (\sigma + \la) q_\la(\sigma) \leqslant \sup_{\sigma \geqslant 0} (\sigma + \la)q_\la(\sigma) \leqslant 2c_q,\]
so, in conclusion 
\[\|\L^{-s}\S r_\la(\Cn)\| \leqslant 2c_q\la^{1/2-s} \beta, \quad \|\L^{-s}\S q_\la(\Cn) \Cl^{1/2}\| \leqslant 2 c_q \la^{-s}\beta^2.\]
The final result is obtained by gathering the upper bounds for the three terms above and the additional terms of this last section.
\end{proof}

\begin{lemma}\label{lm:bias}
Let $\la > 0$ and $s \in (0,\min(r,1/2)]$.
Under Assumption~\ref{asm:source_condition} (see Rem.~\ref{rmk:asm-source}), the following holds
\[\|\L^{-s}(\S \widehat{g}_\la - P\frho)\|_{\Ltwo} \leqslant \la^{r-s}\|\phi\|_\Ltwo.\]
\end{lemma}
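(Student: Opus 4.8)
The plan is to bound the deterministic regularization bias $\S g_\la - P\frho$, with $g_\la = \Cl^{-1}\S^*\frho$: this is the reading of the stated inequality under which the right-hand side $\la^{r-s}\|\phi\|_\Ltwo$ is noise-free, and for this Tikhonov-type object the filter constant collapses to $1$, matching the bound exactly. The entire argument is spectral calculus on the two companion operators $\C = \S^*\S$ acting on $\h$ and $\L = \S\S^*$ acting on $\Ltwo$, so no probabilistic input is needed here.

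First I would combine the source condition $P\frho = \L^{r}\phi$ (Assumption~\ref{asm:source_condition}, Remark~\ref{rmk:asm-source}) with the fact that $\S^*$ annihilates the orthogonal complement of its range, $\S^* = \S^*P$, to write $\S^*\frho = \S^*P\frho = \S^*\L^{r}\phi$. I would then invoke the push-through identity $\S\Cl^{-1} = \Ll^{-1}\S$, equivalently $\S\Cl^{-1}\S^* = \L\Ll^{-1}$, obtaining
\[
\S g_\la = \S\Cl^{-1}\S^*\frho = \L\Ll^{-1}\L^{r}\phi .
\]
Since $P\frho = \L^{r}\phi$ and $\L\Ll^{-1} - I = -\la\Ll^{-1}$ (all operators being functions of the single operator $\L$), subtracting $P\frho$ yields the closed form
\[
\S g_\la - P\frho = -\la\,\Ll^{-1}\L^{r}\phi .
\]

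Next I would premultiply by $\L^{-s}$; as $\L^{-s}$, $\Ll^{-1}$ and $\L^{r}$ all commute and $\L^{-s}\L^{r} = \L^{r-s}$, this gives
\[
\L^{-s}(\S g_\la - P\frho) = -\la\,\Ll^{-1}\L^{r-s}\phi,
\qquad
\big\|\L^{-s}(\S g_\la - P\frho)\big\|_\Ltwo \leqslant \big\|\la\,\Ll^{-1}\L^{r-s}\big\|\,\|\phi\|_\Ltwo .
\]
The remaining operator norm equals $\sup_{\sigma\geqslant 0}\tfrac{\la\,\sigma^{r-s}}{\sigma+\la}$, and here the hypothesis $s\in(0,\min(r,1/2)]$ is crucial: it forces the exponent $r-s$ into $[0,1]$, so Young's inequality $\sigma^{r-s}\la^{1-(r-s)}\leqslant (r-s)\sigma + (1-(r-s))\la \leqslant \sigma+\la$ gives $\tfrac{\la\,\sigma^{r-s}}{\sigma+\la}\leqslant \la^{r-s}$, which is precisely the claimed bound.

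The main obstacle is not the final scalar estimate but justifying the two operator manipulations in infinite dimension with fractional and negative powers: establishing $\S^*=\S^*P$ and the push-through identity with the correct handling of domains, and checking that $\L^{-s}$ may legitimately be applied to $\Ll^{-1}\L^{r}\phi$ without leaving $\Ltwo$. I would control this by diagonalizing $\L$ and reading every identity off its eigenvalues, observing that $r-s\geqslant 0$ makes $\L^{r-s}$ bounded, so that $\L^{-s}\L^{r}=\L^{r-s}$ holds on the closed range of $\L$, where $P\frho = \L^{r}\phi$ lives; this is the second, and essential, use of the constraint $s\leqslant r$.
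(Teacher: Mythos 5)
Your proof is correct and follows essentially the same route as the paper's: you correctly read the statement as concerning the deterministic bias $\S g_\la - P\frho$ (the $\widehat{g}_\la$ in the lemma statement is a typo, as confirmed by its use in Theorem~\ref{thm:analytic-dec}), and you derive the same closed form $\L^{-s}(\S g_\la - P\frho) = -\la \Ll^{-1}\L^{r-s}\phi$ via the push-through identity and the source condition. The only cosmetic difference is the final scalar estimate: the paper factors the operator as $\la^{r-s}\,\la^{1-r+s}\Ll^{-(1-r+s)}\,\Ll^{-(r-s)}\L^{r-s}$ with each factor of norm at most $1$, while you bound $\sup_{\sigma \geqslant 0} \la\sigma^{r-s}/(\sigma+\la) \leqslant \la^{r-s}$ directly by weighted AM--GM; these are equivalent justifications of the same inequality.
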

\begin{proof}
Since $\S\Cl^{-1} \S^* = \L \Ll^{-1} = I - \la \Ll^{-1}$, we have
\eqals{
	\L^{-s}(\S g_\la - P\frho) &= \L^{-s}(\S\Cl^{-1}S^*\frho - P\frho) = \L^{-s}(\S\Cl^{-1}\S^*P \frho - P\frho) \\
	&= \L^{-s}(\S\Cl^{-1}\S^* - I)P\frho =  \L^{-s}(\S\Cl^{-1}\S^* - I)\L^r ~\phi \\
	&= - \la \L^{-s}\Ll^{-1} L^r \phi  = -\la^{r-s} ~ \la^{1-r+s} \Ll^{-(1-r+s)} ~ \Ll^{-(r-s)} \L^{r-s} ~ \phi,
}
from which
\eqals{
	\|\L^{-s}(\S g_\la - P\frho)\|_\Ltwo &\leqslant  \la^{r-s} \|\la^{1-r+s} \Ll^{-(1-r+s)}\| \|\Ll^{-(r-s)} \L^{r-s}\| ~ \|\phi\|_\Ltwo \\
	&\leqslant \la^{r-s} \|\phi\|_\Ltwo.
}
\end{proof}
\begin{thm}\label{thm:analytic-dec}
Let $\la > 0$ and $s \in (0,\min(r,1/2)]$.
Under Assumption~\ref{asm:source_condition} (see Rem.~\ref{rmk:asm-source}), the following holds
\[\|\L^{-s}(\S \widehat{g}_\la - P\frho)\|_{\Ltwo} \leqslant 2 \la^{-s} \beta^2 c_q \|\Cl^{-1/2}(\Sn^*\yn -  \Cn g_\la)\|_\h  + \left(1+ \beta 2 c_q \|\phi\|_\Ltwo\right) \la^{r-s} \]
where $\beta := \|\Cl^{1/2} \Cnl^{-1/2}\|$.
\end{thm}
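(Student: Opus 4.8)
The plan is to obtain Theorem~\ref{thm:analytic-dec} as an immediate synthesis of the two preceding lemmas through the triangle inequality, since all the operator-theoretic work has already been done there. First I would insert and subtract $\L^{-s}\S g_\la$ so as to split the error into a \emph{variance} (estimation) part and a \emph{bias} (approximation) part:
\[
\L^{-s}(\S\widehat{g}_\la - P\frho) = \L^{-s}\S(\widehat{g}_\la - g_\la) + \L^{-s}(\S g_\la - P\frho).
\]
The first piece quantifies how far the empirical filter estimator $\widehat{g}_\la = q_\la(\Cn)\Sn^*\yn$ is from the population-level regularized target $g_\la = \Cl^{-1}\S^*\frho$, while the second piece measures how well $g_\la$ itself approximates the projected regression function $P\frho$.

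Next I would apply the triangle inequality in the $\Ltwo$ norm and bound each term separately. For the variance term I invoke Lemma~\ref{lm:variance-12}, which supplies
\[
\|\L^{-s}\S(\widehat{g}_\la - g_\la)\|_\Ltwo \leqslant 2\la^{-s}\beta^2 c_q \|\Cl^{-1/2}(\Sn^*\yn - \Cn g_\la)\|_\h + 2\beta c_q \|\phi\|_\Ltwo \la^{r-s}.
\]
For the bias term I invoke Lemma~\ref{lm:bias} in the form actually established by its proof, namely $\|\L^{-s}(\S g_\la - P\frho)\|_\Ltwo \leqslant \la^{r-s}\|\phi\|_\Ltwo$ (the printed statement displays $\widehat{g}_\la$, but the argument controls the deterministic quantity $\S g_\la - P\frho$, which is exactly what the decomposition needs). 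Adding the two estimates and collecting the two contributions proportional to $\la^{r-s}$ then produces the coefficient in front of $\la^{r-s}$ appearing in the statement, with the $\|\Cl^{-1/2}(\Sn^*\yn - \Cn g_\la)\|_\h$ term passing through unchanged.

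The only point requiring attention is the compatibility of the hypotheses rather than any genuine difficulty: Lemma~\ref{lm:variance-12} is valid for $s\in(0,1/2]$, whereas Lemma~\ref{lm:bias} requires $s\in(0,\min(r,1/2)]$. Since the theorem restricts to the smaller range $s\in(0,\min(r,1/2)]$, both lemmas apply simultaneously and no further work is needed. Because both lemmas are already proved, there is no real obstacle here; the statement is essentially a corollary, and the entire substance of the decomposition is carried by the two lemmas themselves.
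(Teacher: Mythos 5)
Your proposal is correct and takes exactly the same route as the paper: the paper's proof performs the identical decomposition $\L^{-s}(\S\widehat{g}_\la - P\frho) = \L^{-s}\S(\widehat{g}_\la - g_\la) + \L^{-s}(\S g_\la - P\frho)$ and cites Lemma~\ref{lm:variance-12} for the first term and Lemma~\ref{lm:bias} for the second. Your remark that Lemma~\ref{lm:bias} really controls the deterministic quantity $\S g_\la - P\frho$ (despite the $\widehat{g}_\la$ appearing in its printed statement) is also accurate and is precisely how the paper uses that lemma.
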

\begin{proof}
By Prop.~\ref{prop:excess risk}, we can characterize the excess risk of $\widehat{g}_\la$ in terms of the $\Ltwo$ squared norm of $S\widehat{g}_\la - P\frho$. In this paper, simplifying the analysis of \cite{lin2018optimal}, we perform the following decomposition
\eqals{
\L^{-s}(\S \widehat{g}_\la - P\frho) &= \L^{-s}\S\widehat{g}_\la - \L^{-s}\S g_\la\\
& + ~~ \L^{-s}(\S g_\la - P\frho).
}
The first term is bounded by Lemma~\ref{lm:variance-12}, the second is bounded by Lemma~\ref{lm:bias}. 
\end{proof}
\subsection{Probabilistic bounds}
\label{sec:probbounds}
In this section denote by ${\mathcal {N}}_\infty(\la)$, the quantity
\[
{\mathcal {N}}_\infty(\la) = \sup_{x \in S} \|\Cl^{-1/2} K_x\|_\h^2,
\]
where $S \subseteq \X$ is the support of the probability measure $\rhox$.

\begin{lemma}\label{lm:L-infty}
Under Asm.~\ref{asm:hyp-space}, we have that
for any $g \in \h$
\[
\sup_{x \in \textrm{supp}(\rhox)} |g(x)| \leqslant \kappa_\mu R^{\mu} \|\C^{1/2(1-\mu)} g\|_\h = \kappa_\mu R^{\mu} \|\L^{-\mu/2} S g\|_\Ltwo. \]

\end{lemma}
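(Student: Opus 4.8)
The plan is to establish the inequality first, by combining the reproducing property with Assumption~\ref{asm:hyp-space}, and then to rewrite the resulting $\h$-norm as an $\Ltwo$-norm using the factorizations of $\C$ and $\L$ through the operator $S$.

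\textbf{The inequality.} For any $g \in \h$ and $x \in \X$, the reproducing property gives $g(x) = \langle g, K_x\rangle_\h$. The key idea is to split this inner product across the operator $\C^{(1-\mu)/2}$: writing $K_x = \C^{(1-\mu)/2}\,\C^{-(1-\mu)/2}K_x$ and using self-adjointness of $\C^{(1-\mu)/2}$, one gets $g(x) = \langle \C^{(1-\mu)/2} g,\ \C^{-(1-\mu)/2} K_x\rangle_\h$, so that Cauchy--Schwarz yields
\[
|g(x)| \leqslant \|\C^{(1-\mu)/2} g\|_\h\ \|\C^{-(1-\mu)/2} K_x\|_\h.
\]
Assumption~\ref{asm:hyp-space}, in its second form $\|\C^{\mu/2-1/2}K_x\|_\h = \|\C^{-(1-\mu)/2}K_x\|_\h \leqslant \kappa_\mu R^\mu$ (recall $\Phi(x)=K_x$), bounds the second factor uniformly over $x \in \textrm{supp}(\rhox)$. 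Taking the supremum over such $x$ gives $\sup_x |g(x)| \leqslant \kappa_\mu R^\mu \|\C^{(1-\mu)/2} g\|_\h$, the first stated bound. A point requiring care is that $\C$ need not be invertible, so $\C^{-(1-\mu)/2}K_x$ must be interpreted via the pseudo-inverse on the range of $\C^{(1-\mu)/2}$; this is precisely legitimate because the almost-sure inequality $K_x\otimes_\h K_x \preccurlyeq \kappa_\mu^2 R^{2\mu}\C^{1-\mu}$ forces $K_x \in \mathrm{ran}(\C^{(1-\mu)/2})$ with the norm bound above.

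\textbf{The equality.} It remains to show $\|\C^{(1-\mu)/2}g\|_\h = \|\L^{-\mu/2}Sg\|_\Ltwo$, which after squaring amounts to the operator identity $\C^{1-\mu} = S^*\L^{-\mu}S$ tested against $g$. The plan is to use the standard factorizations $\C = S^*S$ on $\h$ and $\L = SS^*$ on $\Ltwo$. From the singular value decomposition of $S$ one obtains, for any admissible power $a$, the identity $S^*(SS^*)^{a}S = (S^*S)^{a+1}$; applying it with $a = -\mu$ gives $S^*\L^{-\mu}S = \C^{1-\mu}$. Here one only needs $\mu \leqslant 1$ so that $\L^{-\mu/2}Sg$ is well defined, which holds since $\mathrm{ran}(S)\subseteq\mathrm{ran}(\L^{1/2})$. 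Combining this with the inequality above completes the proof.

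\textbf{Main obstacle.} Everything reduces to the reproducing property and a single Cauchy--Schwarz step; the only genuinely delicate part is the functional-analytic bookkeeping with powers of the possibly non-invertible operators $\C$ and $\L$. Concretely, one must justify that $K_x$ lies in $\mathrm{ran}(\C^{(1-\mu)/2})$ (supplied by Assumption~\ref{asm:hyp-space}) and that the identity $S^*(SS^*)^{-\mu}S = (S^*S)^{1-\mu}$ holds on the relevant domains, both of which are standard but should be stated with care.
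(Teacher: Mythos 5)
Your proposal is correct and follows essentially the same route as the paper's proof: the reproducing property split across $\C^{(1-\mu)/2}$, Cauchy--Schwarz, Assumption~\ref{asm:hyp-space} in the form $\|\C^{-(1-\mu)/2}K_x\|_\h \leqslant \kappa_\mu R^\mu$, and then the operator identity $\C^{1-\mu} = S^*\L^{-\mu}S$ for the norm equality. The extra care you take with the pseudo-inverse interpretation and the range conditions is sound bookkeeping that the paper leaves implicit, but it does not change the argument.
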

\begin{proof}
	Note that, Asm.~\ref{asm:hyp-space} is equivalent to
\[
\|\C^{-1/2(1-\mu)}K_x\| \leqslant \kappa_\mu R^{\mu},\]
	for all $x$ in the support of $\rhox$.
	Then we have, for any $x$ in the support of $\rhox$,
	\eqals{
		|g(x)| &= \scal{g}{K_x}_\h = \scal{\C^{1/2(1-\mu)}g}{\C^{-1/2(1-\mu)}K_x}_\h \\
		& \leqslant \|\C^{1/2(1-\mu)}g\|_\h \|\C^{-1/2(1-\mu)}K_x\| \leqslant \kappa_\mu R^{\mu} \|\C^{1/2(1-\mu)}g\|_\h.
	}
	Now note that, since $\C^{1-\mu} = S^*\L^{-\mu}S$, we have
	\[
	\|\C^{1/2(1-\mu)}g\|_\h^2 = \scal{g}{\C^{1-\mu}g}_\h = \scal{\L^{-\mu/2}Sg}{\L^{-\mu/2}Sg}_\Ltwo.
	\]
\end{proof}
\begin{lemma}\label{lm:Ninfty-wrt-hyp-space-asm}
Under Assumption~\ref{asm:hyp-space}, we have
\[ {\mathcal {N}}_\infty(\la) \leqslant \kappa_\mu^2 R^{2\mu}\la^{-\mu}.\]
\end{lemma}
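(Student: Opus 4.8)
The plan is to reduce the bound to an elementary scalar spectral estimate via a factorization of $K_x$. Recall that ${\mathcal {N}}_\infty(\la)=\sup_{x\in S}\|\Cl^{-1/2}K_x\|_\h^2$ with $K_x=\Phi(x)$, and that Assumption~\ref{asm:hyp-space}, written in the equivalent vector form $\|\C^{\mu/2-1/2}K_x\|_\h\leqslant\kappa_\mu R^\mu$ for every $x$ in the support of $\rhox$, guarantees both that $K_x$ lies in the range of $\C^{(1-\mu)/2}$ and that $w_x:=\C^{(\mu-1)/2}K_x$ obeys $\|w_x\|_\h\leqslant\kappa_\mu R^\mu$.

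First I would insert the factorization $K_x=\C^{(1-\mu)/2}w_x$ and use submultiplicativity of the operator norm:
\[
\|\Cl^{-1/2}K_x\|_\h=\|\Cl^{-1/2}\C^{(1-\mu)/2}w_x\|_\h\leqslant\|\Cl^{-1/2}\C^{(1-\mu)/2}\|\,\|w_x\|_\h\leqslant\kappa_\mu R^\mu\,\|\Cl^{-1/2}\C^{(1-\mu)/2}\|.
\]
It then remains to control $\|\Cl^{-1/2}\C^{(1-\mu)/2}\|$, which by spectral calculus on the self-adjoint operator $\C$ equals $\big(\sup_{\sigma\in\sigma(\C)}\tfrac{\sigma^{1-\mu}}{\sigma+\la}\big)^{1/2}$.

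The only genuine computation is the scalar bound $\sup_{\sigma\geqslant 0}\tfrac{\sigma^{1-\mu}}{\sigma+\la}\leqslant\la^{-\mu}$. Substituting $\sigma=\la t$ rewrites this as $\la^{-\mu}\sup_{t\geqslant 0}\tfrac{t^{1-\mu}}{t+1}$, and since $1-\mu\in[0,1]$ we have $t^{1-\mu}\leqslant\max(1,t)\leqslant 1+t$, so the supremum is at most $\la^{-\mu}$. Combining the two displays gives $\|\Cl^{-1/2}K_x\|_\h^2\leqslant\kappa_\mu^2 R^{2\mu}\la^{-\mu}$ uniformly in $x$, and taking the supremum over the support yields the claim.

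There is no substantive obstacle here; the statement is essentially a spectral interpolation estimate. The only points requiring a little care are invoking Assumption~\ref{asm:hyp-space} in its vector form to justify the factorization $K_x=\C^{(1-\mu)/2}w_x$, and the elementary case split on whether $t\leqslant 1$ or $t\geqslant 1$ for the scalar supremum. One may avoid the factorization altogether by working directly with the operator inequality $K_x\otimes K_x\preccurlyeq\kappa_\mu^2 R^{2\mu}\C^{1-\mu}$: conjugating both sides by $\Cl^{-1/2}$ preserves the order, and taking \emph{operator} norms (not traces) while using $\|u\otimes u\|=\|u\|_\h^2$ with $u=\Cl^{-1/2}K_x$ together with $\|\Cl^{-1/2}\C^{1-\mu}\Cl^{-1/2}\|=\|\Cl^{-1/2}\C^{(1-\mu)/2}\|^2$ reduces to the very same scalar bound.
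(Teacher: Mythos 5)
Your proof is correct, and it takes a genuinely different route from the paper's. You work entirely at the operator level: you invoke the vector form of Assumption~\ref{asm:hyp-space} to factor $K_x=\C^{(1-\mu)/2}w_x$ with $\|w_x\|_\h\leqslant\kappa_\mu R^{\mu}$ (or, in your variant, conjugate the operator inequality $K_x\otimes K_x\preccurlyeq \kappa_\mu^2R^{2\mu}\C^{1-\mu}$ by $\Cl^{-1/2}$), and then reduce everything to the scalar spectral estimate $\sup_{\sigma\geqslant 0}\,\sigma^{1-\mu}/(\sigma+\la)\leqslant\la^{-\mu}$, which you verify correctly. The paper argues instead by duality and interpolation: it writes $\|\Cl^{-1/2}K_x\|_\h=\sup_{u\in B(\h)}\scal{\Cl^{-1/2}u}{K_x}_\h$, recognizes the inner product as the point evaluation $f_{\la,u}(x)$ of the function $f_{\la,u}=\Cl^{-1/2}u$, and applies the $L^\infty$-interpolation consequence of Assumption~\ref{asm:hyp-space}, namely $|g(x)|\leqslant \kappa_\mu R^{\mu}\|g\|_\h^{\mu}\|g\|_{\Ltwo}^{1-\mu}$, together with the elementary bounds $\|f_{\la,u}\|_\h\leqslant\la^{-1/2}\|u\|_\h$ and $\|f_{\la,u}\|_{\Ltwo}\leqslant\|u\|_\h$. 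Your approach is more elementary and self-contained: it needs only submultiplicativity and one scalar calculus fact, and your conjugation variant even sidesteps the range-inclusion (Douglas factorization) subtlety required to define $w_x$ when $\C$ is not injective --- a point that does deserve the care you give it, since $\C^{(\mu-1)/2}$ is a priori unbounded; note that both forms of the assumption, operator and vector, are stated in the paper, so invoking either is legitimate. The paper's route, in exchange, reuses the same $L^\infty$-interpolation machinery that it needs elsewhere in the appendix (e.g., Lemma~\ref{lm:L-infty} and the $L^\infty$ control of $\widehat{g}_\la$), so within the overall proof architecture it comes at no extra cost. Both arguments yield exactly the same constant $\kappa_\mu^2R^{2\mu}\la^{-\mu}$.
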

\begin{proof}
First denote with $f_{\la,u} \in \h$ the function $\Cl^{-1/2} u$ for any $u \in \h$ and $\la > 0$. Note that
\[ \|f_{\la,u}\|_\h = \|\Cl^{-1/2}u\|_\h \leqslant \|\Cl^{-1/2}\|\|u\|_\h \leqslant \la^{-1/2} \|u\|_\h.\]
Moreover, since for any $g \in \h$ the identity $\|g\|_\Ltwo = \|Sg\|_\h$, we have
\[\|f_{\la,u}\|_\Ltwo = \|\S \Cl^{-1/2}u\|_\h \leqslant \|\S \Cl^{-1/2}\|\|u\|_\h \leqslant \|u\|_\h.\]
Now denote with $B(\h)$ the unit ball in $\h$, by applying Asm.~\ref{asm:hyp-space} to $f_{\la,u}$ we have that
\eqals{
	{\mathcal {N}}_\infty(\la) & = \sup_{x \in S} \|\Cl^{-1/2}K_x\|^2 = \sup_{x \in S, u \in B(\h)} \scal{u}{\Cl^{-1/2}K_x}_\h^2 \\
	&= \sup_{x\in S, u \in B(\h)} \scal{f_{\la,u}}{K_x}_\h^2  = \sup_{u \in B(\h)} \sup_{x \in S} |f_{\la,u}(x)|^2\\
	& \leqslant \kappa_\mu^2 R^{2\mu}\sup_{u \in B(\h)} \|f_{\la,u}\|^{2\mu}_\h \|f_{\la,u}\|^{2-2\mu}_\Ltwo  \\
	& \leqslant \kappa_\mu^2 R^{2\mu}\la^{-\mu} \sup_{u \in B(\h)} \|u\|_\h^2 \leqslant \kappa_\mu^2 R^{2\mu}\la^{-\mu}.
}
\end{proof}
\begin{lemma}\label{lem:bounding-beta}
Let $\la > 0$, $\delta \in (0,1]$ and $n \in \N$. Under Assumption~\ref{asm:hyp-space}, we have that,
when 
\[ n \geqslant 11(1+\kappa_\mu^2 R^{2\mu} \la^{-\mu}) \log \frac{8R^2}{\la\delta},\]
 then the following holds with probability $1-\delta$,
\[\|\Cl^{1/2} \Cnl^{-1/2}\|^2 \leqslant 2.\]
\end{lemma}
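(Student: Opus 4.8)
The plan is to reduce the claimed operator-norm bound to a concentration inequality for a sum of i.i.d.\ self-adjoint operators. Write $\beta = \|\Cl^{1/2}\Cnl^{-1/2}\|$, so that $\beta^2 = \|\Cl^{1/2}\Cnl^{-1}\Cl^{1/2}\|$. Setting $\Delta = \Cl^{-1/2}(\C - \Cn)\Cl^{-1/2}$ and using $\Cl^{-1/2}\Cl\Cl^{-1/2} = I$, one has $\Cl^{-1/2}\Cnl\Cl^{-1/2} = I - \Delta$, hence $\Cl^{1/2}\Cnl^{-1}\Cl^{1/2} = (I-\Delta)^{-1}$. Since $\Delta$ is self-adjoint, a Neumann-series bound gives, whenever $\|\Delta\| < 1$,
\[
\beta^2 = \|(I-\Delta)^{-1}\| \leqslant \frac{1}{1-\|\Delta\|}.
\]
It therefore suffices to show that the sample-size condition forces $\|\Delta\| \leqslant 1/2$ with probability $1-\delta$, which yields $\beta^2 \leqslant 2$.

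First I would write $\Delta = \frac{1}{n}\sum_{i=1}^n Z_i$ with $Z_i = \Cl^{-1/2}(\C - K_{x_i}\otimes K_{x_i})\Cl^{-1/2}$, which are i.i.d.\ self-adjoint with $\E Z_i = 0$. Both moment quantities entering a Bernstein bound are controlled through ${\mathcal N}_\infty(\la) = \sup_{x}\|\Cl^{-1/2}K_x\|_\h^2$. Indeed, since $\Cl^{-1/2}(K_x\otimes K_x)\Cl^{-1/2} = (\Cl^{-1/2}K_x)\otimes(\Cl^{-1/2}K_x)$ has norm $\|\Cl^{-1/2}K_x\|_\h^2 \leqslant {\mathcal N}_\infty(\la)$, and $0 \preccurlyeq \Cl^{-1/2}\C\Cl^{-1/2} \preccurlyeq I$, we get $\|Z_i\| \leqslant 1 + {\mathcal N}_\infty(\la)$ almost surely. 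For the variance, using that $(u\otimes u)^2 = \|u\|^2 (u\otimes u)$,
\[
\E[Z_i^2] \preccurlyeq \E\big[(\Cl^{-1/2}(K_x\otimes K_x)\Cl^{-1/2})^2\big] \preccurlyeq {\mathcal N}_\infty(\la)\,\Cl^{-1/2}\C\Cl^{-1/2},
\]
so $\|\E Z_i^2\| \leqslant {\mathcal N}_\infty(\la)$. By Lemma~\ref{lm:Ninfty-wrt-hyp-space-asm}, Assumption~\asm{asm:hyp-space} gives ${\mathcal N}_\infty(\la) \leqslant \kappa_\mu^2 R^{2\mu}\la^{-\mu}$, exactly the quantity appearing in the threshold.

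The core step is an intrinsic-dimension Bernstein inequality for self-adjoint operators on $\h$ (in the style of Tropp/Minsker, as used in \cite{rudi2017generalization}): with probability $1-\delta$,
\[
\|\Delta\| \leqslant \frac{2(1+{\mathcal N}_\infty(\la))}{3n}\log\frac{8R^2}{\la\delta} + \sqrt{\frac{2{\mathcal N}_\infty(\la)}{n}\log\frac{8R^2}{\la\delta}}.
\]
The log factor is $\log(8R^2/(\la\delta))$ rather than a dimension because $\tr\C = \E\|K_x\|_\h^2 \leqslant R^2$ controls the effective rank (with $\la^{-1}$ entering through the effective dimension $\tr\Cl^{-1}\C \leqslant R^2/\la$). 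The main obstacle is precisely obtaining this dimension-free inequality: in an infinite-dimensional $\h$ one cannot use the ambient dimension, and must invoke a version of the operator Bernstein bound whose prefactor depends only on the trace of the covariance. Once this is in hand, it remains routine to check that $n \geqslant 11(1+\kappa_\mu^2 R^{2\mu}\la^{-\mu})\log\frac{8R^2}{\la\delta}$ makes the first term at most $2/33$ and the second at most $\sqrt{2/11}$ (using $1+{\mathcal N}_\infty(\la)\geqslant {\mathcal N}_\infty(\la)$), whose sum is below $1/2$; this gives $\|\Delta\|\leqslant 1/2$ and hence $\beta^2 \leqslant 2$.
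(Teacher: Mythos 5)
Your proposal follows essentially the same route as the paper's proof: the factorization $\Cnl = \Cl^{1/2}(I-\Delta)\Cl^{1/2}$ with $\Delta = \Cl^{-1/2}(\C - \Cn)\Cl^{-1/2}$, giving $\|\Cl^{1/2}\Cnl^{-1/2}\|^2 = \|(I-\Delta)^{-1}\| \leqslant (1-\|\Delta\|)^{-1}$ (the paper cites this as Prop.~8 of \cite{rudi2017generalization}; you prove it inline, which is fine); then the intrinsic-dimension Bernstein inequality with $\eta = \log\frac{8R^2}{\la\delta}$ (the paper's citation of Prop.~6 of \cite{rudi2017generalization} and Lemma~9 of \cite{rudi2017falkon}), the bound $\mathcal{N}_\infty(\la) \leqslant \kappa_\mu^2 R^{2\mu}\la^{-\mu}$ from Lemma~\ref{lm:Ninfty-wrt-hyp-space-asm}, and the same arithmetic $2/33 + \sqrt{2/11} < 1/2$. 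Your moment computations $\|Z_i\| \leqslant 1 + \mathcal{N}_\infty(\la)$ and $\|\E Z_i^2\| \leqslant \mathcal{N}_\infty(\la)$ are correct and are exactly the inputs needed for that concentration result.

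There is, however, one genuine (if small) gap: the Bernstein inequality you invoke, with log factor $\log\frac{8R^2}{\la\delta}$, is only valid for $0 < \la \leqslant \|\C\|$, whereas the lemma allows arbitrary $\la > 0$. The intrinsic-dimension ratio in Minsker-type bounds is $\tr V/\|V\|$ with variance proxy $V \propto \Cl^{-1/2}\C\Cl^{-1/2}$; the numerator is controlled by $\tr(\Cl^{-1}\C) \leqslant R^2/\la$ as you say, but converting the denominator requires $\|\Cl^{-1}\C\| = \|\C\|/(\|\C\|+\la) \geqslant 1/2$, which is precisely the condition $\la \leqslant \|\C\|$. For $\la > \|\C\|$ the inequality as you state it is not merely unjustified but can fail outright: when $\la\delta > 8R^2$ its right-hand side is negative while $\|\Delta\| \geqslant 0$ (and $>0$ whenever $\Cn \neq \C$); simultaneously, the hypothesis $n \geqslant 11(1+\kappa_\mu^2 R^{2\mu}\la^{-\mu})\log\frac{8R^2}{\la\delta}$ becomes vacuous, so the lemma must then hold for \emph{every} $n$, which no concentration argument can deliver. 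The paper closes this regime with a one-line deterministic bound, which your proof needs to add: for $\la \geqslant \|\C\|$,
\[
\|\Cl^{1/2}\Cnl^{-1/2}\|^2 \leqslant \|\C+\la I\|\,\|(\Cn+\la I)^{-1}\| \leqslant \frac{\|\C\|+\la}{\la} \leqslant 2 .
\]
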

\begin{proof}
This result is a refinement of the one in \cite{rudi2013sample} and is based on non-commutative Bernstein inequalities for random matrices \cite{tropp2012user}. By Prop.~8 in \cite{rudi2017generalization}, we have that
\[\|\Cl^{1/2} \Cnl^{-1/2}\|^2 \leqslant (1-t)^{-1}, \quad t := \|\Cl^{-1/2}(\C - \Cn)\Cl^{-1/2}\|.\]
When $0 < \la \leqslant \|\C\|$, by Prop.~6 of \cite{rudi2017generalization} (see also \cite{rudi2017falkon} Lemma~9 for more refined constants), we have
that the following holds with probability at least $1-\delta$,
\[t \leqslant \frac{2\eta(1+{\mathcal {N}}_\infty(\la))}{3 n} + \sqrt{\frac{2\eta {\mathcal {N}}_\infty(\la)}{n}},\]
with $\eta = \log \frac{8R^2}{\la\delta}$.
Finally, by selecting $n \geqslant 11(1+ \kappa_\mu^2 R^{2\mu}\la^{-\mu}) \eta$, we have that $t \leqslant 1/2$ and so $\|\Cl^{1/2} \Cnl^{-1/2}\|^2 \leqslant (1-t)^{-1} \leqslant 2$, with probability $1-\delta$. 

To conclude note that when $\la \geqslant \|\C\|$, we have
\[\|\Cl^{1/2} \Cnl^{-1/2}\|^2 \leqslant \|\C + \la I\|\|(\Cn + \la I)^{-1}\| \leqslant \frac{\|\C\| + \la}{\la} = 1 + \frac{\|\C\|}{\la} \leqslant 2.\]
\end{proof}
\begin{lemma}\label{lm:bounding-variance-first}
Under Assumption~\ref{asm:hyp-space},~\ref{asm:capacity_condition},~\ref{asm:source_condition} (see Rem.~\ref{rmk:asm-source}),~\ref{asm:reg-Pfrho} we have
\begin{enumerate}
\item Let $\la > 0$, $n \in \N$, the following holds
\[\expect{\|\Cl^{-1/2}(\Sn^*\yn -  \Cn g_\la)\|_\h^2} \leqslant \|\phi\|^2_\Ltwo\la^{2r} + \frac{2 \kappa_\mu^2 R^{2\mu} \la^{-(\mu - 2r)}}{n} + \frac{4\kappa_\mu^2 R^{2\mu}AQ\la^{-\frac{q + \mu\alpha }{q\alpha+\alpha}}}{n},\]
where $A := \|\frho - P\frho\|_{L^{2q}(\X,\rhox)}^{2 - 2/(q+1)}$.

\item Let $\delta \in (0,1]$, under the same assumptions, the following holds with probability at least $1-\delta$
\eqals{
\|\Cl^{-1/2}(\Sn^*\yn -  \Cn g_\la)\|_\h &\leqslant c_0\la^{r} + \frac{4(c_1\la^{-\frac{\mu}{2}}  + c_2\la^{-r-\mu})\log\frac{2}{\delta}}{n} \\
&\qquad\qquad + \sqrt{\frac{16 \kappa_\mu^2 R^{2\mu} (\la^{-(\mu - 2r)}  +  2AQ\la^{-\frac{q + \mu\alpha }{q\alpha+\alpha}})\log\frac{2}{\delta}}{n}},
}
with $c_0 = \|\phi\|_\Ltwo$, $c_1 = \kappa_\mu R^{\mu} M  + \kappa_\mu^2 R^{2\mu}(2R)^{2r-\mu} \|\phi\|_\Ltwo$, $c_2 = \kappa_\mu^2 R^{2\mu} \|\phi\|_\Ltwo$
\end{enumerate}
\end{lemma}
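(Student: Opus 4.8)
The plan is to observe that $\Sn^*\yn - \Cn g_\la = \tfrac1n\sum_{i=1}^n (y_i - g_\la(x_i))K_{x_i}$ is an empirical average of the i.i.d.\ $\h$-valued variables $\zeta_i = (y_i - g_\la(x_i))K_{x_i}$, so that, writing $\xi_i = \Cl^{-1/2}\zeta_i$, the quantity to control is $\tfrac1n\sum_i\xi_i$. For part~(1) I would use the exact bias--variance identity for an average of i.i.d.\ vectors,
\[
\expect{\Big\|\tfrac1n\textstyle\sum_i\xi_i\Big\|_\h^2} = \|\expect{\xi_1}\|_\h^2 + \tfrac1n\expect{\|\xi_1 - \expect{\xi_1}\|_\h^2} \le \|\expect{\xi_1}\|_\h^2 + \tfrac1n\expect{\|\xi_1\|_\h^2}.
\]
The first summand is the (squared) bias and the second gives the two $1/n$ terms. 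For the bias, I would first compute $\expect{\xi_1} = \Cl^{-1/2}(\S^*\frho - \C g_\la)$; since $g_\la = \Cl^{-1}\S^*\frho$ this is $\la\Cl^{-1/2}g_\la$. Then, using the source condition (Rem.~\ref{rmk:asm-source}) to write $\S^*\frho = \S^*\L^r\phi$ and factoring $\Cl^{-3/2}\S^*\L^r = \Cl^{-(1-r)}(\Cl^{-1/2-r}\S^*\L^r)$, the identity \eqref{eq:ClaSL} gives $\|\Cl^{-1/2-r}\S^*\L^r\|\le 1$ while $\|\Cl^{-(1-r)}\|\le\la^{r-1}$, whence $\|\expect{\xi_1}\|_\h\le\la^r\|\phi\|_\Ltwo$, i.e.\ the first term (and $c_0 = \|\phi\|_\Ltwo$ for part~(2)).

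For the variance I would bound the second moment $\expect{\|\xi_1\|_\h^2} = \expect{(y - g_\la(x))^2\|\Cl^{-1/2}K_x\|_\h^2}$ by decomposing $y - g_\la(x)$ into a conditionally-centred noise term $y-\frho(x)$, a misspecification term $w(x):=\frho(x)-(P\frho)(x)$, and an approximation term $(P\frho)(x)-g_\la(x)$ (cross terms with the noise vanishing upon conditioning on $x$). The noise contributes $\lesssim\sigma^2{\mathcal N}(\la)\le\sigma^2 Q\la^{-1/\alpha}$; the approximation term, via $\|\Cl^{-1/2}K_x\|_\h^2\le{\mathcal N}_\infty(\la)\le\kappa_\mu^2 R^{2\mu}\la^{-\mu}$ (Lemma~\ref{lm:Ninfty-wrt-hyp-space-asm}) together with $\|\S g_\la - P\frho\|_\Ltwo\le\la^r\|\phi\|_\Ltwo$ (Lemma~\ref{lm:bias} with $s=0$), contributes $\lesssim\kappa_\mu^2 R^{2\mu}\la^{-(\mu-2r)}$ (the second term). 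The misspecification term $\expect{w(x)^2\|\Cl^{-1/2}K_x\|_\h^2}$ is the delicate one: I would interpolate the $L^{2q}$-regularity of $w$ (Assumption~\ref{asm:reg-Pfrho}) against both ${\mathcal N}_\infty(\la)\le\kappa_\mu^2 R^{2\mu}\la^{-\mu}$ and $\expect{\|\Cl^{-1/2}K_x\|_\h^2}={\mathcal N}(\la)\le Q\la^{-1/\alpha}$ (Assumption~\ref{asm:capacity_condition}) through Hölder with exponents tuned to $q$, so the powers recombine into the convex-combination exponent $\tfrac{q+\mu\alpha}{q\alpha+\alpha}=\tfrac{q}{q+1}\tfrac1\alpha+\tfrac1{q+1}\mu$ and the factor $A=\|w\|_{L^{2q}}^{2-2/(q+1)}$ (the third term, which dominates the $\la^{-1/\alpha}$ noise contribution since $\mu\alpha\ge 1$).

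For part~(2) I would replace the second-moment control by a Bernstein-type concentration inequality for sums of i.i.d.\ Hilbert-valued random variables (e.g.\ the Pinelis--Yurinskii inequality), giving with probability $1-\delta$
\[
\Big\|\tfrac1n\textstyle\sum_i\xi_i\Big\|_\h \le \|\expect{\xi_1}\|_\h + \frac{2L\log\frac2\delta}{n} + \sqrt{\frac{2V\log\frac2\delta}{n}},
\]
where $\|\expect{\xi_1}\|_\h\le c_0\la^r$ is the mean from part~(1), the variance proxy $V$ is exactly the second-moment bound of part~(1) (producing the square-root term), and $L$ is an almost-sure bound on $\|\xi_1\|_\h = |y - g_\la(x)|\,\|\Cl^{-1/2}K_x\|_\h$. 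To obtain $L$ I would use $|y|\le M$ (Assumption~\ref{asm:bounded}), $\|\Cl^{-1/2}K_x\|_\h\le\kappa_\mu R^\mu\la^{-\mu/2}$, and the $L^\infty$ bound $\sup_x|g_\la(x)|\le\kappa_\mu R^\mu\|\C^{(1-\mu)/2}g_\la\|_\h$ from Lemma~\ref{lm:L-infty} (then bounded in $\la$ through the source condition), which assemble into the two scales $c_1\la^{-\mu/2}$ and $c_2\la^{-r-\mu}$.

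The main obstacle is the misspecification term in the variance: selecting the Hölder exponents so that the $L^{2q}$ norm of $\frho - P\frho$, the $L^\infty$-capacity $\la^{-\mu}$, and the effective dimension $\la^{-1/\alpha}$ recombine into exactly $A\,Q\,\la^{-(q+\mu\alpha)/(q\alpha+\alpha)}$ is the one genuinely non-mechanical step. The remainder is careful operator bookkeeping (repeatedly inserting $\Cl^{\pm s}$ and invoking \eqref{eq:ClaSL} and $\|\Cl^{-s}\|\le\la^{-s}$) and, for part~(2), verifying the almost-sure bound $L$ demanded by the Bernstein inequality.
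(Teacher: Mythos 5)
Your proposal follows the paper's proof almost step for step: the same reduction to the i.i.d.\ vectors $\zeta_i = (y_i - g_\la(x_i))\,\Cl^{-1/2}K_{x_i}$, the same bias--variance identity for the empirical mean, the same factorization $\la\Cl^{-3/2}\S^*\L^r\phi = \la^r\,\big(\la^{1-r}\Cl^{-(1-r)}\big)\big(\Cl^{-1/2-r}\S^*\L^r\big)\phi$ with Eq.~(\ref{eq:ClaSL}) giving the bias term $\la^r\|\phi\|_\Ltwo$, the same interpolation between $\mathcal{N}(\la)\leqslant Q\la^{-1/\alpha}$ and $\mathcal{N}_\infty(\la)\leqslant \kappa_\mu^2R^{2\mu}\la^{-\mu}$ for the misspecification term, and the same Pinelis-type inequality for part (2), with the almost-sure envelope assembled exactly as in the paper from $|y|\leqslant M$, Lemma~\ref{lm:Ninfty-wrt-hyp-space-asm} and Lemma~\ref{lm:L-infty}. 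One small technical remark: the paper implements the interpolation not by H\"older but by truncation at a level $t$ plus Markov's inequality and optimization over $t$; this is what produces exactly the $L^{2q}$ norm in $A = \|\frho - P\frho\|_{L^{2q}}^{2-2/(q+1)}$, whereas a direct H\"older application with exponents $(q+1)/q$ and $q+1$ would require the $L^{2(q+1)}$ norm instead. The mechanism and the resulting exponent $\frac{q+\mu\alpha}{q\alpha+\alpha}$ are the same, so this is a matter of bookkeeping, not of substance.

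The one place where you genuinely diverge is the treatment of the noise, and there your version is more careful than the paper's. The paper's proof passes from $\mathbb{E}_{y_1|x_1}\big[(y_1 - g_\la(x_1))^2\big]$ to $(\frho(x_1)-g_\la(x_1))^2$ as if these were equal; they are not (the conditional variance of $y$ given $x$ is discarded), and this erroneous identity is the only reason no noise term appears in the stated bound. Your three-term decomposition correctly retains the contribution $\sigma^2\mathcal{N}(\la)/n \leqslant \sigma^2 Q\la^{-1/\alpha}/n$. Be aware, however, that your claim that this term is ``dominated'' by the misspecification term holds only at the level of exponents (via $\mu\alpha\geqslant 1$), not of constants: in the well-specified case $\frho = P\frho$ one has $A=0$, so the third term vanishes while the noise does not, and no constant can absorb it. Carried out honestly, your argument therefore proves the lemma with an additional term $\sigma^2Q\la^{-1/\alpha}/n$ on the right-hand side --- which is arguably what the statement should contain. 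This does not damage the downstream results: in the regime $\mu\alpha < 2r\alpha+1<\alpha$ the extra term balances against the bias $\la^{2r}$ at exactly the rate $n^{-2r\alpha/(2r\alpha+1)}$ claimed in Theorem~\ref{thm:main_result}, and in the regime $\mu\alpha \geqslant 2r\alpha+1$, with $\la = \Theta(n^{-1/\mu}(\log n)^{1/\mu})$, it is of order at most $n^{-2r/\mu}$; so the final rates are unchanged, only the constants in Lemma~\ref{lm:ctrl-ghatla-prob} and Theorem~\ref{thm:final-filters} would pick up a $\sigma^2 Q$ contribution.
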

\begin{proof}
First denote with $\zeta_i$ the random variable
\[\zeta_i = (y_i  -  g_\la(x_i)) \Cl^{-1/2}K_{x_i}.\]
In particular note that, by using the definitions of $\Sn$, $\yn$ and $\Cn$, we have
\[\Cl^{-1/2}(\Sn^*\yn -  \Cn g_\la) = \Cl^{-1/2}(\frac{1}{n} \sum_{i=1}^n K_{x_i} y_i -  \frac{1}{n} (K_{x_i} \otimes K_{x_i})g_\la) =  \frac{1}{n} \sum_{i=1}^n \zeta_i.\]
I
So, by noting that $\zeta_i$ are independent and identically distributed, we have
\eqals{
\expect{\|\Cl^{-1/2}(\Sn^*\yn -  \Cn g_\la)\|^2_\h} &= \expect{\|\frac{1}{n} \sum_{i=1}^n \zeta_i\|_\h^2} = \frac{1}{n^2}\sum_{i,j=1}^n\expect{\scal{\zeta_i}{\zeta_j}_\h} \\
&= \frac{1}{n} \expect{\|\zeta_1\|^2_\h } + \frac{n-1}{n} \|\expect{\zeta_1}\|^2_\h.
}
Now note that
\[\expect{\zeta_1} = \Cl^{-1/2}(\expect{K_{x_1} y_1} - \expect{K_{x_1} \otimes K_{x_1}} g_\la) = \Cl^{-1/2}(\S^*\frho - \C g_\la).\]
In particular, by the fact that $\S^*\frho = P\frho$, $P\frho = \L^r \phi$ and $\C g_\la = \C \Cl^{-1} \S^*\frho$ and $\C \Cl^{-1} = I - \la \Cl^{-1}$, we have
\[\Cl^{-1/2}(\S^*\frho - \C g_\la) = \la \Cl^{-3/2} \S^*\frho = \la^r ~\la^{1-r}\Cl^{-(1-r)} ~\Cl^{-1/2-r} \S^* \L^r ~ \phi.\]
So, since $\|\Cl^{-1/2-r} \S^*\L^r\| \leqslant 1$, as proven in Eq.~\ref{eq:ClaSL}, then
\[\|\expect{\zeta_1}\|_\h \leqslant \la^r \|\la^{1-r}\Cl^{-(1-r)}\| ~\|\Cl^{-1/2-r} \S^* \L^r\| ~ \|\phi\|_\Ltwo \leqslant \la^r \|\phi\|_\Ltwo := Z.\]
Morever
\eqals{
	\expect{\|\zeta_1\|_\h^2} &=  \expect{\|\Cl^{-1/2}K_{x_1}\|_\h^2 (y_1 - g_\la(x_1))^2} = \mathbb{E}_{x_1} \mathbb{E}_{y_1|x_1}[\|\Cl^{-1/2}K_{x_1}\|_\h^2 (y_1 - g_\la(x_1))^2] \\
	&= \mathbb{E}_{x_1} [\|\Cl^{-1/2}K_{x_1}\|_\h^2 (\frho(x_1) - g_\la(x_1))^2].
}
Moreover we have
\eqals{
\expect{\|\zeta_1\|_\h^2} &= \mathbb{E}_{x} [\|\Cl^{-1/2}K_{x}\|_\h^2 (\frho(x) - g_\la(x))^2] \\
& =\mathbb{E}_{x} [\|\Cl^{-1/2}K_{x}\|_\h^2 ((\frho(x) - (P\frho)(x)) + ((P\frho)(x) - g_\la(x)))^2] \\
& \leqslant 2\mathbb{E}_{x} [\|\Cl^{-1/2}K_{x}\|_\h^2 (\frho(x) - (P\frho)(x))^2] + 2\mathbb{E}_{x} [\|\Cl^{-1/2}K_{x}\|_\h^2 ((P\frho)(x) - g_\la(x))^2].
}
Now since $\expect{A B} \leqslant (\textrm{ess}\sup A) \expect{B}$, for any two random variables $A, B$, we have
\eqals{
\mathbb{E}_{x} [\|\Cl^{-1/2}K_{x}\|_\h^2 ((P\frho)(x) - g_\la(x))^2] &\leqslant {\mathcal {N}}_\infty(\la) \mathbb{E}_{x} [((P\frho)(x) - g_\la(x))^2]  \\
& = {\mathcal {N}}_\infty(\la) \|P\frho - Sg_\la\|^2_\Ltwo\\ & \leqslant \kappa_\mu^2 R^{2\mu} \la^{-(\mu-2r)},
}
where in the last step we bounded ${\mathcal {N}}_\infty(\la)$ via Lemma~\ref{lm:Ninfty-wrt-hyp-space-asm} and $\|P\frho - \S g_\la\|^2_\Ltwo$, via Lemma.~\ref{lm:bias} applied with $s = 0$.
Finally, denoting by $a(x) = \|\Cl^{-1/2}K_{x}\|_\h^2$ and $b(x) = (\frho(x) - (P\frho)(x))^2$ and noting that by Markov inequality we have
$\mathbb{E}_{x}[{\bf 1}_{\{b(x) > t\}}] = \rhox(\{b(x) > t\}) = \rhox(\{b(x)^q > t^q\}) \leqslant \mathbb{E}_{x}[b(x)^q] t^{-q}$, for any $t > 0$. Then for any $t > 0$ the following holds
\eqals{
\mathbb{E}_{x}[a(x)b(x)] &= \mathbb{E}_{x}[a(x)b(x) {\bf 1}_{\{b(x) \leqslant t\}}] + \mathbb{E}_{x}[a(x)b(x) {\bf 1}_{\{b(x) > t\}}] \\
& \leqslant  t \mathbb{E}_{x}[a(x)] + N_\infty(\la)\mathbb{E}_{x}[b(x) {\bf 1}_{\{b(x) > t\}}] \\ 
& \leqslant  t N(\la) + N_\infty(\la) \mathbb{E}_{x}[b(x)^q] t^{-q}.
}
By minimizing the quantity above in $t$, we obtain
\eqals{
\mathbb{E}_{x} [\|\Cl^{-1/2}K_{x}\|_\h^2 (\frho(x) - (P\frho)(x))^2] &\leqslant 2\|\frho - P\frho\|_{L^q(X,\rhox)}^{\frac{q}{q+1}} {\mathcal {N}}(\la)^{\frac{q}{q+1}} {\mathcal {N}}_\infty(\la)^{\frac{1}{q+1}}\\
& \leqslant 2\kappa_\mu^2 R^{2\mu}AQ\la^{-\frac{q + \mu\alpha }{q\alpha+\alpha}}.
}
So finally
\[\expect{\|\zeta_1\|^2_\h} \leqslant 2\kappa_\mu^2 R^{2\mu} \la^{-(\mu-2r)} + 4\kappa_\mu^2 R^{2\mu}AQ\la^{-\frac{q + \mu\alpha }{q\alpha+\alpha}} := W^2.\]

To conclude the proof, let us obtain the bound in high probability. We need to bound the higher moments of $\zeta_1$. First note that 
\[\expect{\|\zeta_1 - \expect{\zeta_1}\|_\h^p} \leqslant \expect{\|\zeta_1 - \zeta_2\|_\h^p} \leqslant 2^{p-1}\expect{\|\zeta_1|^p_\h + \|\zeta_2\|_\h^p} \leqslant 2^{p}\expect{\|\zeta_1|^p_\h}.\]
Moreover, denoting by $S \subseteq \X$ the support of $\rhox$ and recalling that $y$ is bounded in $[-M, M]$, the following bound holds almost surely
\eqals{
\|\zeta_1\| &\leqslant \sup_{x \in S}\|\Cl^{-1/2}K_{x}\| (M + |g_\la(x)|) \leqslant (\sup_{x \in S}\|\Cl^{-1/2}K_{x}\|) (M + \sup_{x \in S} |g_\la(x)|) \\
& \leqslant \kappa_\mu R^{\mu} \la^{-\mu/2} (M + \kappa_\mu R^{\mu} \|\C^{1/2(1-\mu)} g_\la\|_\h). 
}
where in the last step we applied Lemma~\ref{lm:Ninfty-wrt-hyp-space-asm} and Lemma~\ref{lm:L-infty}. In particular, by definition of $g_\la$, the fact that $\S^* \frho = \S^* P \frho$, that $P\frho = \L^r \phi$ and that $\|\Cl^{-(1/2+r)} \S^* \L^r\| \leqslant 1$ as proven in Eq.~\ref{eq:ClaSL}, we have
\eqals{
\|\C^{1/2(1-\mu)}g_\la\|_\h & = \|\C^{1/2(1-\mu)} \Cl^{-1}S^* \L^r \phi\|_\h \\
& \leqslant \|\C^{1/2(1-\mu)} \C^{-1/2(1-\mu)}\|\|\Cl^{-(\mu/2 - r)}\|\|\Cl^{-(1/2+r)} \S^* \L^r\| \|\phi\|_\Ltwo \\
	&\leqslant \|\Cl^{r - \mu/2}\|\|\phi\|_\Ltwo.
}
Finally note that if $r \leqslant \mu/2$ then $\|\Cl^{r - \mu/2}\| \leqslant \la^{-(\mu/2-r)}$, if
$r \geqslant \mu/2$ then 
\[\|\Cl^{r - \mu/2}\| = (\|C\| + \la)^{r-\mu/2} \leqslant (2\|C\|)^{r-\mu/2} \leqslant (2R)^{2r-\mu}.\]
So in particular
\[\|\Cl^{r - \mu/2}\| \leqslant (2R)^{2r-\mu} + \la^{-(\mu/2-r)}.\]
Then the following holds almost surely
\[\|\zeta_1\| \leqslant (\kappa_\mu R^{\mu} M  + \kappa_\mu^2 R^{2\mu}(2R)^{2r-\mu} \|\phi\|_\Ltwo) \la^{-\mu/2} + \kappa_\mu^2 R^{2\mu} \|\phi\|_\Ltwo \la^{r-\mu} := V.\]
So finally
\[\expect{\|\zeta_1 - \expect{\zeta_1}\|_\h^p} \leqslant 2^p \expect{\|\zeta_1\|_\h^p} \leqslant \frac{p!}{2} (2V)^{p-2} (4W^2).\]
By applying Pinelis inequality, the following holds with probability $1-\delta$
\[\|\frac{1}{n} \sum_{i=1}^n (\zeta_i - \expect{\zeta_i}) \|_\h \leqslant \frac{4V \log\frac{2}{\delta}}{n} + \sqrt{\frac{8W \log\frac{2}{\delta}}{n}}.\]
So with the same probability
\[\|\frac{1}{n} \sum_{i=1}^n \zeta_i \|_\h \leqslant \|\frac{1}{n} \sum_{i=1}^n (\zeta_i - \expect{\zeta_i}) \|_\h + \|\expect{\zeta_1}\|_\h \leqslant Z + \frac{4V \log\frac{2}{\delta}}{n} + \sqrt{\frac{8W \log\frac{2}{\delta}}{n}}.\]
\end{proof}

\begin{lemma}\label{lm:ctrl-ghatla-prob}
Let $\la >0$, $n \in \N$ and $s \in (0, 1/2]$. Let $\delta \in (0,1]$.
Under Assumption~\ref{asm:hyp-space},~\ref{asm:capacity_condition},~\ref{asm:source_condition} (see Rem.~\ref{rmk:asm-source}),~\ref{asm:reg-Pfrho}, when 
\[n \geqslant 11(1+\kappa_\mu^2 R^{2\mu} \la^{-\mu}) \log \frac{16R^2}{\la \delta},\]
then the following holds with probability $1-\delta$,
\eqals{
	\|\L^{-s}\S(\widehat{g}_\la - g_\la)\|_{\Ltwo} &~~\leq~~ c_0\la^{r-s} ~+~ \frac{(c_1 \la^{-\frac{\mu}{2}-s}  + c_2\la^{r-\mu-s})\log\frac{4}{\delta}}{n} \\
	&\qquad\qquad + \sqrt{\frac{ (c_3\la^{-(\mu +2s - 2r)}  +  c_4\la^{-\frac{q + \mu\alpha }{q\alpha+\alpha}-2s})\log\frac{4}{\delta}}{n}}.
}
with $c_0 = 7c_q \|\phi\|_\Ltwo$, $c_1 = 16c_q(\kappa_\mu R^{\mu} M  + \kappa_\mu^2 R^{2\mu}(2R)^{2r-\mu} \|\phi\|_\Ltwo)$,  $c_2 = 16c_q  \kappa_\mu^2 R^{2\mu} \|\phi\|_\Ltwo$, $c_3 = 64\kappa_\mu^2 R^{2\mu}c_q^2$, $c_4 = 128\kappa_\mu^2 R^{2\mu}AQc_q^2$.
\end{lemma}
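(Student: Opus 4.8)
The plan is to assemble the statement from three ingredients that are already available: the deterministic filter decomposition of Lemma~\ref{lm:variance-12}, the multiplicative spectral bound on $\beta := \|\Cl^{1/2}\Cnl^{-1/2}\|$ from Lemma~\ref{lem:bounding-beta}, and the in-probability control of the residual $\|\Cl^{-1/2}(\Sn^*\yn - \Cn g_\la)\|_\h$ from the second part of Lemma~\ref{lm:bounding-variance-first}. Indeed, for $s \in (0,1/2]$ Lemma~\ref{lm:variance-12} already gives
\[
\|\L^{-s}\S(\widehat{g}_\la - g_\la)\|_{\Ltwo} \leqslant 2 \la^{-s} \beta^2 c_q \|\Cl^{-1/2}(\Sn^*\yn -  \Cn g_\la)\|_\h  + 2\beta c_q \|\phi\|_\Ltwo \la^{r-s},
\]
so the whole task reduces to inserting high-probability upper bounds for the two random factors $\beta$ and the residual norm, and then reorganizing the resulting expression into the three advertised terms.

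First I would split the confidence budget into two halves. Let $\mathcal{E}_1$ be the event of Lemma~\ref{lem:bounding-beta} invoked with confidence $\delta/2$, on which $\beta^2 \leqslant 2$; since replacing $\delta$ by $\delta/2$ turns its hypothesis $n \geqslant 11(1+\kappa_\mu^2 R^{2\mu}\la^{-\mu})\log\frac{8R^2}{\la\delta}$ into exactly the condition $n \geqslant 11(1+\kappa_\mu^2 R^{2\mu}\la^{-\mu})\log\frac{16R^2}{\la\delta}$ assumed here, the event $\mathcal{E}_1$ holds with probability at least $1-\delta/2$. Let $\mathcal{E}_2$ be the event of Lemma~\ref{lm:bounding-variance-first}(2) invoked with confidence $\delta/2$, on which the residual is bounded by $\|\phi\|_\Ltwo\la^{r}$ plus a $1/n$ term and a $\sqrt{\,\cdot/n}$ term, where now $\log\frac{2}{\delta/2} = \log\frac{4}{\delta}$ produces the logarithmic factor of the target statement. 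A union bound then gives $\P(\mathcal{E}_1 \cap \mathcal{E}_2) \geqslant 1-\delta$, and I restrict attention to this event.

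On $\mathcal{E}_1 \cap \mathcal{E}_2$ I would substitute, using $\beta^2 \leqslant 2$ and $\beta \leqslant \sqrt{2}$, so that the variance prefactor $2\la^{-s}\beta^2 c_q$ is at most $4 c_q \la^{-s}$ and the bias prefactor $2\beta c_q$ is at most $2\sqrt{2}\,c_q$. Multiplying the residual bound through by $4 c_q \la^{-s}$ and distributing reorganizes the expression into precisely three pieces. The $\la^r$ part of the residual yields $4 c_q\|\phi\|_\Ltwo \la^{r-s}$, which together with the bias term $2\sqrt{2}\,c_q\|\phi\|_\Ltwo\la^{r-s}$ is at most $7 c_q\|\phi\|_\Ltwo\la^{r-s}$, giving $c_0 = 7c_q\|\phi\|_\Ltwo$ (a convenient sanity check, since $4+2\sqrt2 \leqslant 7$). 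The $1/n$ part carries the extra $\la^{-s}$ through, producing the $c_1\la^{-\mu/2-s}+c_2\la^{r-\mu-s}$ term with $c_1,c_2$ equal to $16c_q$ times the corresponding constants of Lemma~\ref{lm:bounding-variance-first}; and pulling the deterministic factor $4c_q\la^{-s}$ inside the square root converts the last piece into the stated $\sqrt{(c_3\la^{-(\mu+2s-2r)}+c_4\la^{-\frac{q+\mu\alpha}{q\alpha+\alpha}-2s})\log\frac{4}{\delta}\,/\,n}$, with $c_3,c_4$ obtained by squaring the prefactor into the variance proxy.

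There is essentially no analytic difficulty remaining: all the genuinely hard estimates---the operator concentration underlying $\beta$, the Bernstein/Pinelis control of the residual, and the deterministic filter identity---are already established, so the argument is a union bound followed by substitution. The only point requiring care is the bookkeeping of constants and logarithmic factors, in particular verifying that halving $\delta$ reproduces exactly the sample-size hypothesis stated in the lemma, and that the variance proxy is handled throughout by the subadditivity $\sqrt{a+b}\leqslant\sqrt{a}+\sqrt{b}$ rather than by any spurious cross-term.
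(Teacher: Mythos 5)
Your proposal is correct and is exactly the paper's own proof: the paper combines Lemma~\ref{lm:variance-12} with Lemma~\ref{lem:bounding-beta} and Lemma~\ref{lm:bounding-variance-first}(2), each invoked at confidence $\delta/2$, and takes the intersection of the two events, which is precisely your argument (your bookkeeping of $c_0 = 7c_q\|\phi\|_\Ltwo$ via $4+2\sqrt{2}\leqslant 7$ and of $c_1, c_2$ is in fact more explicit than the paper's one-line proof). The only caveat is in the square-root term, where carefully squaring the prefactor $4c_q\la^{-s}$ yields $256\kappa_\mu^2R^{2\mu}c_q^2$ and $512\kappa_\mu^2R^{2\mu}AQc_q^2$ rather than the stated $c_3, c_4$; this factor-of-4 discrepancy lies in the paper's stated constants, not in your reasoning.
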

\begin{proof}
Let $\tau = \delta/2$, the result is obtained by combining Lemma~\ref{lm:variance-12}, with Lemma~\ref{lm:bounding-variance-first} with probability $\tau$, and Lemma~\ref{lem:bounding-beta}, with probability $\tau$ and then taking the intersection bound of the two events.
\end{proof}

\begin{cor}\label{cor:bound-gla-Ltwo}
Let $\la > 0$, $n \in \N$ and $s \in (0, 1/2]$. Let $\delta \in (0,1]$.
Under the assumptions of Lemma~\ref{lm:ctrl-ghatla-prob}, when 
\[n \geqslant 11(1+\kappa_\mu^2 R^{2\mu} \la^{-\mu}) \log \frac{16R^2}{\la \delta},\]
then the following holds with probability $1-\delta$,
\eqals{
\|\L^{-s} \S \widehat{g}_\la\|_\Ltwo & ~~\leq~~ R^{2r-2s} + (1+c_0)\la^{r-s}~+~ \frac{(c_1 \la^{-\frac{\mu}{2}-s}  + c_2\la^{r-\mu-s})\log\frac{4}{\delta}}{n} \\
&\qquad\qquad + \sqrt{\frac{ (c_3\la^{-(\mu +2s - 2r)}  +  c_4\la^{-\frac{q + \mu\alpha }{q\alpha+\alpha}-2s})\log\frac{4}{\delta}}{n}}  + 
}
with the same constants $c_0, \dots, c_4$ as in Lemma~\ref{lm:ctrl-ghatla-prob}.
\end{cor}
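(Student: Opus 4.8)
The plan is to derive this directly from the two preceding results by a triangle inequality, isolating the single random summand. Concretely, I would use the three-way splitting
\[
\L^{-s}\S\widehat{g}_\la = \L^{-s}\S(\widehat{g}_\la - g_\la) + \L^{-s}(\S g_\la - P\frho) + \L^{-s}P\frho,
\]
and bound each term in $\Ltwo$-norm separately. Since only the first summand depends on the sample, the resulting estimate will hold with probability $1-\delta$ under exactly the same sample-size requirement $n \geqslant 11(1+\kappa_\mu^2 R^{2\mu}\la^{-\mu})\log\frac{16R^2}{\la\delta}$ as in Lemma~\ref{lm:ctrl-ghatla-prob}, the remaining two terms being deterministic.

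For the first summand I would invoke Lemma~\ref{lm:ctrl-ghatla-prob} verbatim: it already bounds $\|\L^{-s}\S(\widehat{g}_\la - g_\la)\|_{\Ltwo}$ by the full expression supplying the constants $c_0,\dots,c_4$ together with both the $1/n$ term and the $n^{-1/2}$ term of the target inequality. For the second summand, the bias, I would reuse the computation carried out inside the proof of Lemma~\ref{lm:bias} (which actually estimates $\S g_\la - P\frho$): using $\S\Cl^{-1}\S^* = \L\Ll^{-1}$ together with $P\frho = \L^r\phi$ from the source condition (Rem.~\ref{rmk:asm-source}), one obtains $\|\L^{-s}(\S g_\la - P\frho)\|_{\Ltwo}\leqslant \la^{r-s}\|\phi\|_{\Ltwo}$, which is precisely the extra $\la^{r-s}$ contribution that promotes the $c_0\la^{r-s}$ from Lemma~\ref{lm:ctrl-ghatla-prob} to the stated $(1+c_0)\la^{r-s}$.

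The third summand is what produces the constant term $R^{2r-2s}$. Using $P\frho = \L^r\phi$ once more, $\L^{-s}P\frho = \L^{r-s}\phi$; since $s\in(0,\min(r,1/2)]$ forces $r-s\geqslant 0$, I can write $\|\L^{r-s}\|\leqslant \|\L\|^{r-s}\leqslant R^{2(r-s)}$, where $\|\L\|=\|\C\|\leqslant R^2$ because $\|K_x\|\leqslant R$ almost surely by Assumption~\ref{asm:bounded}. This gives $\|\L^{-s}P\frho\|_{\Ltwo}\leqslant R^{2r-2s}\|\phi\|_{\Ltwo}$. Summing the three pieces by the triangle inequality and collecting the two deterministic contributions of order $\la^{r-s}$ then reproduces exactly the claimed inequality (with $\|\phi\|_{\Ltwo}$ absorbed into the stated constants).

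I do not expect a genuine obstacle: the statement is essentially a repackaging of Lemmas~\ref{lm:ctrl-ghatla-prob} and~\ref{lm:bias} together with one elementary operator-norm estimate. The only points requiring care are bookkeeping ones, namely checking that the defining event and the sample-size threshold are inherited unchanged from Lemma~\ref{lm:ctrl-ghatla-prob}, and verifying the range constraint $r-s\geqslant 0$ so that the bound $\|\L\|\leqslant R^2$ may be raised to the power $r-s$ without reversing the inequality.
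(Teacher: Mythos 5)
Your overall strategy---peel off the sample-dependent term with Lemma~\ref{lm:ctrl-ghatla-prob} and bound the remainder deterministically---matches the paper's, and your handling of the event and the sample-size threshold is fine. But there is a genuine gap in the range of $s$. The corollary (like Lemma~\ref{lm:ctrl-ghatla-prob}) is stated for all $s \in (0,1/2]$, with no requirement that $s \leqslant r$; you silently replace this hypothesis by $s \in (0,\min(r,1/2)]$ at the two places where your argument needs it: when invoking Lemma~\ref{lm:bias} (whose hypothesis is exactly $s \leqslant \min(r,1/2)$) and when bounding $\|\L^{r-s}\| \leqslant R^{2(r-s)}$. When $r < s \leqslant 1/2$ both steps fail: $\L$ is a positive compact operator with eigenvalues accumulating at $0$, so $\L^{r-s}$ is unbounded (only densely defined), and the individual pieces of your three-way splitting, $\L^{-s}P\frho = \L^{r-s}\phi$ and $\L^{-s}(\S g_\la - P\frho) = -\la \L^{r-s}\Ll^{-1}\phi$, need not even belong to $\Ltwo$; only their sum $\L^{-s}\S g_\la$ is guaranteed to make sense, since $\L^{-s}\S = (\L^{-s}\S\Cl^{-1/2+s})\,\Cl^{1/2-s}$ is a bounded operator from $\h$ to $\Ltwo$. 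This excluded case is not vacuous: the corollary is consumed by Corollary~\ref{cor:bound-gla-Linfty} with $s = \mu/2$, precisely to cover the regime $\mu > 2r$ (see the case distinctions in Lemma~\ref{lem:bounding-tau}), i.e., exactly where $s > r$.

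The paper avoids this by never splitting off $P\frho$: it writes $\|\L^{-s}\S\widehat{g}_\la\|_\Ltwo \leqslant \|\L^{-s}\S(\widehat{g}_\la - g_\la)\|_\Ltwo + \|\L^{-s}\S g_\la\|_\Ltwo$, bounds the first term by Lemma~\ref{lm:ctrl-ghatla-prob} as you do, and bounds the second by $\|\L^{-s}\S\Cl^{-1/2+s}\|\,\|\Cl^{r-s}\|\,\|\Cl^{-1/2-r}\S^*\L^r\|\,\|\phi\|_\Ltwo \leqslant \|\Cl^{r-s}\|\,\|\phi\|_\Ltwo$. The crucial point is that all negative powers are taken of $\Cl = \C + \la I \succcurlyeq \la I$, never of $\L$ itself, so they are always bounded: if $s \geqslant r$ then $\|\Cl^{r-s}\| \leqslant \la^{-(s-r)}$, and if $s < r$ then $\|\Cl^{r-s}\| \leqslant R^{2r-2s} + \la^{r-s}$; in either case $\|\Cl^{r-s}\| \leqslant R^{2r-2s} + \la^{r-s}$, which gives the stated bound on the whole range $s \in (0,1/2]$. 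Your argument can be repaired by the same move---keep $\S g_\la$ intact and route every negative power through $\Cl$ rather than through $\L$---but as written it establishes the corollary only on the strictly smaller range $s \leqslant \min(r,1/2)$.
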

\begin{proof}
First note that 
\[\|\L^{-s} \S \widehat{g}_\la\|_\Ltwo \leqslant \|\L^{-s} \S (\widehat{g}_\la - g_\la)\|_\Ltwo + \|\L^{-s} \S g_\la\|_\Ltwo.\]
The first term on the right hand side is controlled by Lemma~\ref{lm:ctrl-ghatla-prob}, for the second, by using the definition of $g_\la$ and Asm.~\ref{asm:source_condition} (see Rem.~\ref{rmk:asm-source}), we have
\eqals{
\|\L^{-s} \S g_\la\|_\Ltwo &\leqslant \|\L^{-s}\S \Cl^{-1/2+s}\|\|\Cl^{-(s - r)}\|\|\Cl^{-1/2-r}\S^*\L^r\|\|\phi\|_\Ltwo \\
& \leqslant \|\Cl^{r-s}\|\|\phi\|_\Ltwo,
}
where $\|\Cl^{-1/2-r}\S^*\L^r\|  \leqslant 1$ by Eq.~\ref{eq:ClaSL} and analogously $\|\L^{-s}\S \Cl^{-1/2+s}\| \leqslant 1$.
Note that if $s \geqslant r$ then $\|\Cl^{r-s}\| \leqslant \la^{-(s-r)}$. If $s < r$, we have
\[\|\Cl^{r-s}\| = (\|\C\| + \la)^{r-s} \leqslant \|C\|^{r-s} + \la^{r-s} \leqslant R^{2r-2s} + \la^{r-s}.\]
So finally $\|\Cl^{r-s}\| \leqslant R^{2r-2s} + \la^{r-s}$.
\end{proof}

\begin{cor}\label{cor:bound-gla-Linfty}
Let $\la > 0$, $n \in \N$ and $s \in (0, 1/2]$. Let $\delta \in (0,1]$.
Under Assumption~\ref{asm:hyp-space},~\ref{asm:capacity_condition},~\ref{asm:source_condition} (see Rem.~\ref{rmk:asm-source}),~\ref{asm:reg-Pfrho}, when 
\begin{align*}n \geqslant 11(1+\kappa_\mu^2 R^{2\mu} \la^{-\mu}) \log \frac{16R^2}{\la \delta},
\end{align*}
 then the following holds with probability $1-\delta$,
\begin{align*}
	\sup_{x \in \X} |\widehat{g}_\la(x)| & ~~\leq~~ \kappa_\mu R^{\mu}R^{2r-2s}~+~\kappa_\mu R^{\mu}(1+c_0)\la^{r-\mu/2} ~+~ \kappa_\mu R^{\mu} \frac{(c_1 \la^{-\mu}  + c_2\la^{r - 3/2\mu})\log\frac{4}{\delta}}{n} \\
	&\qquad\qquad + \kappa_\mu R^{\mu}\sqrt{\frac{ (c_3\la^{-(2\mu - 2r)}  +  \kappa_\mu R^{\mu} c_4\la^{-\frac{q + \mu\alpha }{q\alpha+\alpha}-\mu})\log\frac{4}{\delta}}{n}}.
\end{align*}
with the same constants $c_0,\dots, c_4$ in Lemma~\ref{lm:ctrl-ghatla-prob}.
\end{cor}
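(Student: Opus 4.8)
The plan is to reduce the $L^\infty$ estimate to the $\Ltwo$-type bound already established in Corollary~\ref{cor:bound-gla-Ltwo}, using the regularity inequality of Lemma~\ref{lm:L-infty} as the bridge. First I would apply Lemma~\ref{lm:L-infty} directly to the estimator $\widehat{g}_\la \in \h$ itself. Under Assumption~\ref{asm:hyp-space} this gives, for $x$ in the support of $\rhox$,
\[
\sup_{x} |\widehat{g}_\la(x)| \leqslant \kappa_\mu R^{\mu}\, \|\C^{1/2(1-\mu)}\widehat{g}_\la\|_\h = \kappa_\mu R^{\mu}\, \|\L^{-\mu/2}\S\widehat{g}_\la\|_\Ltwo,
\]
which converts the quantity of interest into exactly the weighted $\Ltwo$ norm controlled in the previous corollary, up to the prefactor $\kappa_\mu R^\mu$. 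Note that the exponent here is forced to be $\mu/2$ by the assumption, so no freedom is lost.

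The second step is to invoke Corollary~\ref{cor:bound-gla-Ltwo} with the specific choice $s = \mu/2$. Since $\mu \in [0,1]$ by Assumption~\ref{asm:hyp-space}, we have $\mu/2 \in (0,1/2]$ for $\mu > 0$, so this value is admissible, and the sample-size hypothesis $n \geqslant 11(1+\kappa_\mu^2 R^{2\mu}\la^{-\mu})\log\frac{16R^2}{\la\delta}$ transfers verbatim. Substituting $s = \mu/2$ into the bound of Corollary~\ref{cor:bound-gla-Ltwo} turns each exponent into the one stated in the target: $R^{2r-2s} = R^{2r-\mu}$, $\la^{r-s} = \la^{r-\mu/2}$, $\la^{-\mu/2-s} = \la^{-\mu}$, $\la^{r-\mu-s} = \la^{r-3\mu/2}$, $\la^{-(\mu+2s-2r)} = \la^{-(2\mu-2r)}$, and $\la^{-\frac{q+\mu\alpha}{q\alpha+\alpha}-2s} = \la^{-\frac{q+\mu\alpha}{q\alpha+\alpha}-\mu}$. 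Multiplying the resulting inequality through by $\kappa_\mu R^\mu$ and keeping the constants $c_0,\dots,c_4$ from Lemma~\ref{lm:ctrl-ghatla-prob} then yields precisely the claimed bound.

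No additional probabilistic argument is needed: both ingredients hold on the same event of probability $1-\delta$ under the stated sample-size condition, so the chaining is deterministic once Corollary~\ref{cor:bound-gla-Ltwo} has been applied on its own $1-\delta$ event. The only point requiring care is verifying that the forced specialization $s = \mu/2$ lies in the admissible range $(0,1/2]$ of Corollary~\ref{cor:bound-gla-Ltwo}, and then carrying the constant $\kappa_\mu R^\mu$ through every term of the bound. There is no genuine analytic obstacle here: the real work — the interpolation/$L^\infty$ control coming from Assumption~\ref{asm:hyp-space} (Lemma~\ref{lm:L-infty}) and the concentration of $\widehat g_\la$ in weighted $\Ltwo$ norm (Corollary~\ref{cor:bound-gla-Ltwo}) — has already been done, and this statement is their immediate composition.
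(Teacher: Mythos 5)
Your proposal is correct and is essentially identical to the paper's own proof, which consists of applying Lemma~\ref{lm:L-infty} to $\widehat{g}_\la$ and then invoking Corollary~\ref{cor:bound-gla-Ltwo} with $s=\mu/2$; your write-up in fact spells out the admissibility check $\mu/2\in(0,1/2]$ and the exponent substitutions that the paper leaves implicit.
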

\begin{proof}
The proof is obtained by applying Lemma~\ref{lm:L-infty} on $\widehat{g}_\la$ and then Corollary~\ref{cor:bound-gla-Ltwo}.
\end{proof}

\subsection{Main Result}
\label{app:main_result}
\begin{thm}\label{thm:final-filters}
Let $\la > 0$, $n \in \N$ and $s \in (0, \min(r,1/2)]$.
Under Assumption~\ref{asm:hyp-space},~\ref{asm:capacity_condition},~\ref{asm:source_condition} (see Rem.~\ref{rmk:asm-source}),~\ref{asm:reg-Pfrho}, when 
\[n \geqslant 11(1+\kappa_\mu^2 R^{2\mu} \la^{-\mu}) \log \frac{c_0}{\la^{3+4r-4s}},\]
then
\[\expect{\|\L^{-s}(\S\widehat{g}_\la - P\frho)\|^2_\Ltwo} \leqslant c_1\frac{\la^{-(\mu +2s- 2r)}}{n} + c_2\frac{\la^{-\frac{q + \mu\alpha }{q\alpha+\alpha}-2s}}{n} +  c_3\la^{2r-2s},\]
where $m_4 = M^4$, $c_0 = 32R^{4-4s} m_4 + 32R^{8-8r-8s}\|\phi\|^4_\Ltwo$, $c_1 = 16c_q^2 \kappa_\mu^2 R^{2\mu}$, $c_2 = 32 c_q^2\kappa_\mu^2 R^{2\mu}AQ$, $c_3 = 3+ 8 c_q^2 \|\phi\|_\Ltwo^2$.
\end{thm}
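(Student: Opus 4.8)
The statement is an expectation-level synthesis of three facts already proved: the deterministic analytic decomposition of Theorem~\ref{thm:analytic-dec}, the second-moment control of the variance term in Lemma~\ref{lm:bounding-variance-first}(1), and the concentration of $\beta := \|\Cl^{1/2}\Cnl^{-1/2}\|$ in Lemma~\ref{lem:bounding-beta}. The only genuine difficulty is that the bound
\[\|\L^{-s}(\S\widehat{g}_\la - P\frho)\|_\Ltwo \leqslant 2\la^{-s}\beta^2 c_q\, V + (1 + 2\beta c_q\|\phi\|_\Ltwo)\,\la^{r-s}, \qquad V := \|\Cl^{-1/2}(\Sn^*\yn - \Cn g_\la)\|_\h,\]
of Theorem~\ref{thm:analytic-dec} carries the random factor $\beta$, which cannot be controlled deterministically. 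The plan is therefore to split the expectation over the event $\mathcal A = \{\beta^2 \leqslant 2\}$ and its complement, Lemma~\ref{lem:bounding-beta} guaranteeing $\P(\mathcal A^c) \leqslant \delta$ for a $\delta$ fixed at the very end.

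On $\mathcal A$ I substitute $\beta^2 \leqslant 2$ (hence $\beta \leqslant \sqrt 2$), so that $\|\L^{-s}(\S\widehat{g}_\la - P\frho)\|_\Ltwo\,\mathbf 1_{\mathcal A} \leqslant 4\la^{-s}c_q V + (1 + 2\sqrt 2\, c_q\|\phi\|_\Ltwo)\la^{r-s}$. Using $\mathbf 1_{\mathcal A} \leqslant 1$, squaring with $(x+y)^2 \leqslant 2x^2 + 2y^2$, and taking expectation, I obtain $\expect{\|\L^{-s}(\S\widehat{g}_\la - P\frho)\|_\Ltwo^2\,\mathbf 1_{\mathcal A}} \leqslant 32\,c_q^2\la^{-2s}\expect{V^2} + 2(1 + 2\sqrt 2\,c_q\|\phi\|_\Ltwo)^2\la^{2r-2s}$. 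Inserting the second-moment bound of Lemma~\ref{lm:bounding-variance-first}(1), $\expect{V^2} \leqslant \|\phi\|^2_\Ltwo\la^{2r} + \tfrac{2\kappa_\mu^2 R^{2\mu}\la^{-(\mu-2r)}}{n} + \tfrac{4\kappa_\mu^2 R^{2\mu}AQ\la^{-(q+\mu\alpha)/(q\alpha+\alpha)}}{n}$, the summand $\|\phi\|^2_\Ltwo\la^{2r}$ (carrying the factor $\la^{-2s}$) merges with the $\la^{2r-2s}$ bias term into the $c_3\la^{2r-2s}$ contribution, while the two $1/n$ summands produce, up to explicit constants, the terms $c_1\la^{-(\mu+2s-2r)}/n$ and $c_2\la^{-(q+\mu\alpha)/(q\alpha+\alpha)-2s}/n$.

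It remains to bound $\expect{\|\L^{-s}(\S\widehat{g}_\la - P\frho)\|_\Ltwo^2\,\mathbf 1_{\mathcal A^c}}$. Here one cannot simply bound the operator $\L^{-s}$, which is unbounded for $s>0$; instead I use the range structure. Because $q_\la$ is a filter, $\|q_\la(\Cn)\| \leqslant c_q/\la$, and since $\|\Sn^*\yn\|_\h \leqslant RM$ by Assumption~\ref{asm:bounded}, one gets $\|\widehat{g}_\la\|_\h \leqslant c_q RM/\la$. From $\S\C = \L\S$ I have $\L^{-s}\S = \S\,\C^{-s}$, whence $\|\L^{-s}\S g\|_\Ltwo^2 = \scal{g}{\C^{1-2s}g}_\h \leqslant R^{2-4s}\|g\|_\h^2$, an estimate valid precisely because $s \leqslant 1/2$; combined with $\|\L^{-s}P\frho\|_\Ltwo = \|\L^{r-s}\phi\|_\Ltwo \leqslant R^{2r-2s}\|\phi\|_\Ltwo$ (using $s \leqslant r$), this gives the deterministic bound $\|\L^{-s}(\S\widehat{g}_\la - P\frho)\|_\Ltwo \leqslant B$ with $B = c_q R^{2-2s}M/\la + R^{2r-2s}\|\phi\|_\Ltwo$. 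Therefore $\expect{\|\L^{-s}(\S\widehat{g}_\la - P\frho)\|_\Ltwo^2\,\mathbf 1_{\mathcal A^c}} \leqslant B^2\delta$.

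Finally I choose $\delta$ so that the threshold $n \geqslant 11(1+\kappa_\mu^2 R^{2\mu}\la^{-\mu})\log\frac{8R^2}{\la\delta}$ of Lemma~\ref{lem:bounding-beta} becomes the stated one, that is $\delta = 8R^2\la^{2+4r-4s}/c_0$. Since $r \geqslant s$, the product $B^2\delta$ is then of order $\la^{4r-4s} \leqslant \la^{2r-2s}$ and is absorbed into $c_3$. The main obstacle is exactly this passage from the high-probability control of $\beta$ to an expectation bound: it forces the crude deterministic estimate $B$, and it is the squaring of $B$ together with the resulting value of $\delta$ that produce the fourth powers $M^4$ and $\|\phi\|^4_\Ltwo$ and the exponents appearing in $c_0$. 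Pinning down $c_0,\dots,c_3$ explicitly is then routine bookkeeping, which I would carry out once the three contributions above have been collected.
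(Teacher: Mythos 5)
Your proposal is correct, and its skeleton is exactly the paper's own proof: split the expectation over the event $\{\beta^2 \leqslant 2\}$ and its complement, apply Theorem~\ref{thm:analytic-dec} together with the second-moment bound of Lemma~\ref{lm:bounding-variance-first}(1) on the good event, and control the bad event via Lemma~\ref{lem:bounding-beta} with $\delta$ chosen polynomial in $\la$. The one step where you genuinely diverge is the bad-event term. The paper bounds $\|\L^{-s}\S\widehat{g}_\la\|_\Ltwo$ by a \emph{random} quantity involving $n^{-1}\sum_i y_i$ and then pays $\sqrt{\expect{\|\S\widehat{g}_\la\|^4_\Ltwo}}\sqrt{\delta}$ via Cauchy--Schwarz; this is precisely where the fourth moments $m_4 = M^4$ and $\|\phi\|^4_\Ltwo$ in $c_0$ originate. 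You instead exploit $|y| \leqslant M$ almost surely to get the fully deterministic bound $B = c_q R^{2-2s}M/\la + R^{2r-2s}\|\phi\|_\Ltwo$ (your identity $\|\L^{-s}\S g\|^2_\Ltwo = \scal{g}{\C^{1-2s}g}_\h$ is the same intertwining $\S^*\L^{-2s}\S = \C^{1-2s}$ the paper uses in Lemma~\ref{lm:L-infty}), paying only $B^2\delta$. Your variant is cleaner: no Cauchy--Schwarz, $\delta$ rather than $\sqrt{\delta}$, and your choice $\delta = 8R^2\la^{2+4r-4s}/c_0$ makes the threshold on $n$ coincide exactly with the one in the statement, which the paper's choice $\delta = \la^{2+4r-4s}/c_0$ actually does not (it requires $\log\frac{8R^2 c_0}{\la^{3+4r-4s}}$, larger than the stated $\log\frac{c_0}{\la^{3+4r-4s}}$ when $8R^2 > 1$). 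Two caveats, neither a substantive gap: first, your closing claim that squaring $B$ produces the fourth powers in $c_0$ is inaccurate for your route, which yields $M^2$ and $\|\phi\|^2_\Ltwo$, so your bookkeeping will produce a $c_0$ (and $c_1, c_2, c_3$) of a different form than the stated ones --- but the paper's own computation also fails to reproduce its stated constants exactly (e.g.\ the factor accounting in the good-event squaring and the power of $R$ in $c_0$), so this is a wash; second, absorbing $B^2\delta$ into $c_3\la^{2r-2s}$ uses $\la^{4r-4s} \leqslant \la^{2r-2s}$, i.e.\ implicitly $\la \leqslant 1$, an assumption of the same nature as the paper's own unstated use of $\la \leqslant \|\C\|$ in its bad-event estimate.
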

\begin{proof}
Denote by $R(\widehat{g}_\la)$, the expected risk $R(\widehat{g}_\la) = \EE(\widehat{g}_\la) - \inf_{g \in \h} \EE(g)$.
First, note that by Prop.~\ref{prop:excess risk}, we have 
\[R_s(\widehat{g}_\la) = \|\L^{-s}(\S\widehat{g}_\la - P\frho)\|^2_\Ltwo.\]
Denote by $E$ the event such that $\beta$ as defined in Thm.~\ref{thm:analytic-dec}, satisfies $\beta \leqslant 2$.
Then we have
\eqals{
\expect{R_s(\widehat{g}_\la)} &= \expect{R_s(\widehat{g}_\la) {\bf 1}_E} + \expect{R(\widehat{g}_\la) {\bf 1}_{E^c}}.
}
For the first term, by Thm.~\ref{thm:analytic-dec} and Lemma~\ref{lm:bounding-variance-first}, we have
\eqals{
\mathbb{E}[R_s(\widehat{g}_\la) & {\bf 1}_E] \leqslant  \expect{\Big(2\la^{-2s}\beta^4 c_q^2 \|\Cl^{-1/2}(\Sn^*\yn -  \Cn g_\la)\|_\h^2  \\
	& \qquad\qquad\qquad\qquad\qquad + 2\left(1+ \beta^2 2 c_q^2 \|\phi\|_\Ltwo^2\right) \la^{2r-2s}\Big) {\bf 1}_E} \\
& \leqslant 8 \la^{-2s} c_q^2 \expect{\|\Cl^{-1/2}(\Sn^*\yn -  \Cn g_\la)\|_\h^2}  + 2\left(1+ 4 c_q^2 \|\phi\|_\Ltwo^2\right) \la^{2r-2s} \\
& \leqslant \frac{16c_q^2 \kappa_\mu^2 R^{2\mu} \la^{-\mu + 2r-2s}}{n} + \frac{32 c_q^2\kappa_\mu^2 R^{2\mu}AQ\la^{-\frac{q + \mu\alpha }{q\alpha+\alpha} - 2s}}{n} + \left(2+ 8 c_q^2 \|\phi\|_\Ltwo^2\right) \la^{2r-2s}.
}
For the second term, since $\Cnl^{1/2}q_\la(\Cn)\Cnl^{1/2} = \Cnl q_\la(\Cn) \leqslant \sup_{\sigma > 0} (\sigma + \la) q_\la(\sigma) \leqslant c_q$ by definition of filters, and that $P\frho = L^r \phi$, we have
\eqals{
	R_s(\widehat{g}_\la)^{1/2} &\leqslant \|\L^{-s}\S\widehat{g}_\la\|_\Ltwo + \|\L^{-s} P\frho\|_\Ltwo \\
	&\leqslant \|\L^{-s}\S\|\|\Cnl^{-1/2}\| \|\Cnl^{1/2}q_\la(\Cn)\Cnl^{1/2}\| \|\Cnl^{-1/2} \Sn^*\|\|\yn\| + \|\L^{-s} \L^{r}\| \|\phi\|_\Ltwo\\
	& \leqslant R^{1/2-s}\la^{-1/2}\|\yn\| + R^{2r-2s}\|\phi\|_\Ltwo \\
	& \leqslant \la^{-1/2}(R^{1/2-s}(n^{-1}\sum_{i=1}^n y_i) + R^{1+2r-2s}\|\phi\|_\Ltwo),
}
where the last step is due to the fact that $1 \leqslant \la^{-1/2}\|\L\|^{1/2}$ since $\la$ satisfies $0 < \la \leqslant \|\C\| =\|\L\| \leqslant R^2$.
Denote with $\delta$ the quantity $\delta = \la^{2 + 4r-4s}/c_0$.
Since $\expect{{\bf 1}_E^c}$ corresponds to the probability of the event $E^c$, and, by Lemma~\ref{lem:bounding-beta}, we have that $E^c$ holds with probability at most $\delta$ since $n \geqslant 11(1+\kappa_\mu^2 R^{2\mu} \la^{-\mu}) \log \frac{8R^2}{\la\delta}$, then 
we have that
\eqals{
\expect{R(\widehat{g}_\la) {\bf 1}_{E^c}} &\leqslant \expect{\|S \widehat{g}_\la\|^2_\Ltwo {\bf 1}_{E^c}} \leqslant \sqrt{\expect{\|S \widehat{g}_\la\|^4_\Ltwo}} \sqrt{\expect{{\bf 1}_{E^c}}}\\
& \leqslant \sqrt{\frac{4R^{2-4s} n^{-2}(\sum_{i,j=1}^n \expect{y_i^2 y_j^2}) + 4R^{4-8r-8s}\|\phi\|^4_\Ltwo}{\la^2}} \sqrt{\delta} \\
& \leqslant \frac{\sqrt{\delta}}{\la} \sqrt{4R^{2-4s} m_4 + 4R^{4-8r-8s}\|\phi\|^4_\Ltwo} \\
&= \frac{\sqrt{\delta c_0/(8R^2)}}{\la} \leqslant \la^{2r-2s}.
}
\end{proof}

\begin{cor}
\label{cor:final_bound_lambda}
Let $\la > 0$ and $n \in \N$ and $s = 0$. Under Assumption~\ref{asm:hyp-space},~\ref{asm:capacity_condition},~\ref{asm:source_condition} (see Rem.~\ref{rmk:asm-source}),~\ref{asm:reg-Pfrho}, when
\eqal{\label{eq:la-wrt-n}
	\la ~~=~~ B_1~ \begin{cases}
		n^{-\alpha/\left(2r\alpha + 1 + \frac{\mu\alpha-1}{q+1}\right)} & 2r\alpha + 1 + \frac{\mu\alpha-1}{q+1} > \mu\alpha \\
		n^{-1/\mu}~(\log B_2 n)^{\frac{1}{\mu}} & 2r\alpha + 1 + \frac{\mu\alpha-1}{q+1} \leqslant \mu\alpha.
	\end{cases}
}
then,
\eqal{
\mathbb{E}~\EE(\widehat{g}_\la) - \inf_{g \in \h} \EE(g) \leqslant B_3 \begin{cases}
n^{-2r\alpha/\left(2r\alpha + 1 + \frac{\mu\alpha-1}{q+1}\right)} & 2r\alpha + 1 + \frac{\mu\alpha-1}{q+1} > \mu\alpha \\
n^{-2r/\mu} & 2r\alpha + 1 + \frac{\mu\alpha-1}{q+1} \leqslant \mu\alpha
\end{cases}
}
where $B_2 = 3 \vee (32R^6 m_4)^{\frac{\mu}{3+4r}}B_1^{-\mu}$ and $B_1$ defined explicitly in the proof.
\end{cor}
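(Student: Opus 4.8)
The plan is to read this off as a bias–variance trade-off applied to the explicit bound of Theorem~\ref{thm:final-filters}, specialised to $s=0$. First I would invoke Proposition~\ref{prop:excess risk}, which identifies the excess risk $\EE(\widehat{g}_\la)-\inf_{g\in\h}\EE(g)$ with $\|\S\widehat{g}_\la-P\frho\|_\Ltwo^2$, i.e. exactly the quantity $\|\L^{-s}(\S\widehat{g}_\la-P\frho)\|_\Ltwo^2$ bounded in Theorem~\ref{thm:final-filters} at $s=0$ (the theorem is phrased for $s\in(0,\min(r,1/2)]$, but every step of its proof, and in particular Lemma~\ref{lm:bias}, goes through at $s=0$; alternatively one lets $s\downarrow 0$). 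Writing $\nu:=\tfrac{q+\mu\alpha}{\alpha(q+1)}$, this gives
\[
\mathbb{E}\,\EE(\widehat{g}_\la)-\inf_{g\in\h}\EE(g)\;\leqslant\; c_1\,\frac{\la^{2r-\mu}}{n}\;+\;c_2\,\frac{\la^{-\nu}}{n}\;+\;c_3\,\la^{2r},
\]
valid whenever the sample-size condition $n\geqslant 11(1+\kappa_\mu^2R^{2\mu}\la^{-\mu})\log(c_0\la^{-(3+4r)})$ holds. The only other ingredient I need is the algebraic identity $2r+\nu=\tfrac1\alpha\big(2r\alpha+1+\tfrac{\mu\alpha-1}{q+1}\big)$, obtained by placing $2r$ and $\nu$ over the common denominator $\alpha(q+1)$. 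It shows that the dichotomy in the statement, comparing $2r\alpha+1+\tfrac{\mu\alpha-1}{q+1}$ with $\mu\alpha$, is the same as comparing $2r+\nu$ with $\mu$, i.e. it records whether the capacity variance exponent $-\nu$ or the source/$L^\infty$ variance exponent $2r-\mu$ is the more negative, hence which variance term binds as $\la\to0$.

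In Case~1 ($2r+\nu>\mu$) I would balance the bias $c_3\la^{2r}$ against the capacity variance $c_2\la^{-\nu}/n$, which is the dominant variance term here. Solving $\la^{2r+\nu}\propto 1/n$ yields $\la=B_1\,n^{-\alpha/(2r\alpha+1+\frac{\mu\alpha-1}{q+1})}$, and both balanced terms are then $\Theta\big(n^{-2r/(2r+\nu)}\big)=\Theta\big(n^{-2r\alpha/(2r\alpha+1+\frac{\mu\alpha-1}{q+1})}\big)$, the announced rate. Two checks finish this case: the unbalanced term $c_1\la^{2r-\mu}/n$ is negligible, since its ratio to the bias is $\propto\la^{-\mu}/n\propto n^{\mu/(2r+\nu)-1}\to0$ because $\mu<2r+\nu$; and the sample-size condition holds for all $n$ once $B_1$ is large, because $\la^{-\mu}\propto n^{\mu/(2r+\nu)}$ with exponent $<1$, so the polynomial $n$ on the left dominates $n^{\mu/(2r+\nu)}\log n$.

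Case~2 ($2r+\nu\leqslant\mu$) is where the real obstacle lies. The unconstrained optimiser would be $\la\propto n^{-1/\mu}$, but there $\la^{-\mu}\propto n$ and $\kappa_\mu^2R^{2\mu}\la^{-\mu}\log(\cdots)\sim n\log n$ violates the sample-size condition; the condition is therefore binding and $\la$ cannot be pushed below the scale $(\log n/n)^{1/\mu}$. I would thus take $\la=B_1\,n^{-1/\mu}(\log B_2 n)^{1/\mu}$, so that $\la^{-\mu}=B_1^{-\mu}\,n/\log(B_2 n)$; the constant $B_2$ is chosen precisely so that $\log(c_0\la^{-(3+4r)})\leqslant\log(B_2 n)$, after which the condition reduces to $n\gtrsim 11\kappa_\mu^2R^{2\mu}B_1^{-\mu}\,n$ and is met by taking $B_1$ large. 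Substituting this $\la$, the two variance terms carry nonpositive powers of $\log(B_2 n)$ (their log-exponents are $2r/\mu-1\leqslant0$ and $-\nu/\mu<0$, using $2r\leqslant\mu$ and $2r+\nu\leqslant\mu$) and are each $O(n^{-2r/\mu})$, while the bias $c_3\la^{2r}=c_3B_1^{2r}(\log B_2 n)^{2r/\mu}\,n^{-2r/\mu}$ governs the rate and yields $n^{-2r/\mu}$ up to the logarithmic factor absorbed into the statement; gathering $c_1,c_2,c_3,B_1$ into $B_3$ concludes. The main delicacy is exactly this self-consistent bookkeeping of the sample-size condition in Case~2: showing that the bound $\log(B_2 n)$ on the effective-dimension logarithm is what forces the $(\log n)^{1/\mu}$ inflation of $\la$, and that this inflation simultaneously makes the condition hold and remains harmless (up to polylog) for the final rate.
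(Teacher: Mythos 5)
Your proposal is correct and follows essentially the same route as the paper's own proof: a direct application of Theorem~\ref{thm:final-filters} (at $s=0$) with the stated choices of $\la$, where the bulk of the work is verifying the sample-size condition --- by polynomial dominance in the first case, and in the second case by choosing $B_2$ to control the logarithm inside the condition and then taking $B_1$ large, exactly as the paper does. If anything you are slightly more careful than the paper, since you justify applying the theorem at $s=0$ (it is stated for $s>0$) and you acknowledge explicitly that in the second regime the bias term carries a $(\log B_2 n)^{2r/\mu}$ factor, which the paper's statement of the rate silently absorbs.
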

\begin{proof}
The proof of this corollary is a direct application of Thm.~\ref{thm:final-filters}. In the rest of the proof we find the constants to guarantee that the condition relating $n, \la$ in the theorem is always satisfied. Indeed 
to guarantee the applicability of Thm.~\ref{thm:final-filters}, we need to be sure that $n \geqslant 11(1+\kappa_\mu^2 R^{2\mu} \la^{-\mu}) \log \frac{32R^6 m_4}{\la^{3+4r}}$. This is satisfied when both the following conditions hold $n \geqslant 22 \log \frac{32R^6 m_4}{\la^{3+4r}}$ and $n \geqslant 2\kappa_\mu^2 R^{2\mu} \la^{-\mu} \log \frac{32R^6 m_4}{\la^{3+4r}}$. 
To study the last two conditions, we recall that for $A, B,s, q > 0$ we have that $A n^{-s} \log (B n^q)$ satisfy
\[ A n^{-s} \log(B n^{q}) = \frac{qA B^{s/q}}{s} \frac{\log B^{s/q} n^s}{B^{s/q} n^s} \leqslant  \frac{qA B^{s/q}}{e s},\]
for any $n > 0$, since $\frac{log x}{x} \leqslant \frac{1}{e}$ for any $x > 0$.
Now we define explicitly $B_1$, let $\tau = \alpha/\left(2r\alpha + 1 + \frac{\mu\alpha-1}{q+1}\right)$, we have
\eqal{B_1 &= \left(\frac{22(3+4r)}{e\mu}(32R^6 m_4)^{\frac{\mu}{3+4r}}\right)^\frac{1}{\mu}  \vee \\
& \qquad\qquad	 \vee \begin{cases}
\left(\frac{2M(3+4r)}{e(1/\tau - \mu)}(32R^6 m_4)^{\frac{1/\tau-\mu}{3+4r}}\right)^\tau &  2r\alpha + 1 + \frac{\mu\alpha-1}{q+1} > \mu\alpha \\
\left(\frac{2M(3+4r)}{\mu}\right)^\frac{1}{\mu} & 2r\alpha + 1 + \frac{\mu\alpha-1}{q+1} \leqslant \mu\alpha
\end{cases}.
}
For the first condition, we use the fact that $\la$ is always larger than $B_1 n^{-1/\mu}$, so we have
\[\frac{22}{n} \log \frac{32R^6 m_4}{\la^{3+4r}} \leqslant \frac{22}{n} \log \frac{32R^6 m_4 n^{(3+4r)/\mu}}{B_1^{3+4r}} \leqslant \frac{22(3+4r) (32R^6 m_4)^{\mu/(3+4r)}}{e \mu B_1^\mu} \leqslant 1.\]
For the second inequality, when $2r\alpha + 1 + \frac{\mu\alpha-1}{q+1} \geqslant \mu\alpha$, we have $\la = B_1 n^{-\tau}$, so
\eqals{
\frac{2\kappa_\mu^2 R^{2\mu}}{n} \la^{-\mu} \log \frac{32R^6 m_4}{\la^{3+4r}} &\leqslant \frac{2\kappa_\mu^2 R^{2\mu}}{B_1^{\mu} n^{1 - \mu\tau}} \log \frac{32R^6 m_4 n^{(3+4r)\tau}}{B_1^{3+4r}} \\
&\leqslant \frac{2\kappa_\mu^2 R^{2\mu}(3+4r)\tau}{e(1-\mu\tau)}\frac{(32R^6 m_4)^{\frac{1/\tau-\mu}{3+4r}}}{B_1^{1/\tau}} \leqslant 1.
}
Finally, when $2r\alpha + 1 + \frac{\mu\alpha-1}{q+1} \geqslant \mu\alpha$, we have $\la = B_1 n^{-1/\mu} (\log B_2 n)^{1/\mu}$. So since $\log(B_2 n) > 1$, we have
\eqals{
\frac{2\kappa_\mu^2 R^{2\mu}}{n} \log \frac{32R^6 m_4}{\la^{3+4r}} & \leqslant \frac{2\kappa_\mu^2 R^{2\mu}}{B_1^{\mu}} \frac{\log \frac{32R^6 m_4 n^{(3+4r)/\mu}}{B_1^{3+4r}}}{\log(B_2 n)} = \frac{2\kappa_\mu^2 R^{2\mu}(3+4r)}{\mu B_1^{\mu}} \frac{\log \frac{(32R^6 m_4)^{\mu/(3+4r)} n}{B_1^\mu}}{\log(B_2 n)} \leqslant 1.
}
So by selecting $\la$ as in Eq.~\ref{eq:la-wrt-n}, we guarantee that the condition required by Thm.~\ref{thm:final-filters} is satisfied. 

Finally the constant $B_3$ is obtained by 
\[B_3 = c_1 \max(1,w)^{-(\mu +2s- 2r)} + c_2\max(1,w)^{-\frac{q + \mu\alpha }{q\alpha+\alpha}-2s}+  c_3\max(1,w)^{2r-2s},\]
with $w = B_1 \log (1 + B_2)$ and $c_1, c_2, c_3$ as in Thm.~\ref{thm:final-filters}.
\end{proof}

\section{Experiments with different sampling}

\label{app:experiments}

We present here the results for two different types of sampling, which seem to be more stable, perform better and are widely used in practice : \\
{\bfseries Without replacement (Figure \ref{fig:t_versus_n_cycling})}: for which we select randomly the data points but never use two times over the same point in one epoch.\\
{\bfseries Cycles (Figure \ref{fig:t_versus_n_without_replacement})}: for which we pick successively the data points in the same order. 

\begin{figure}[ht]
\footnotesize
\includegraphics[width=0.48\textwidth]{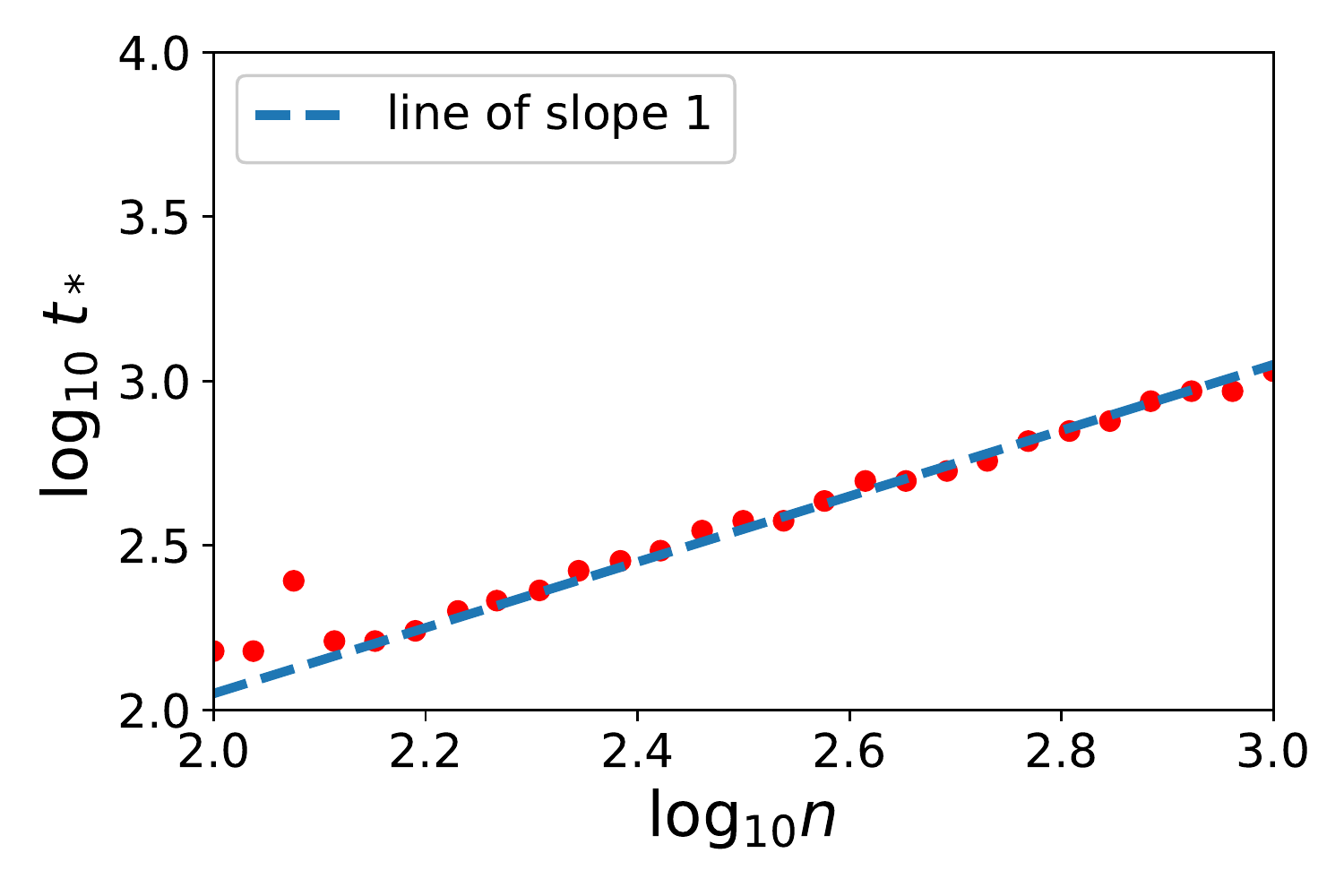}
\hspace{0.5cm}%
\includegraphics[width=0.48\textwidth]{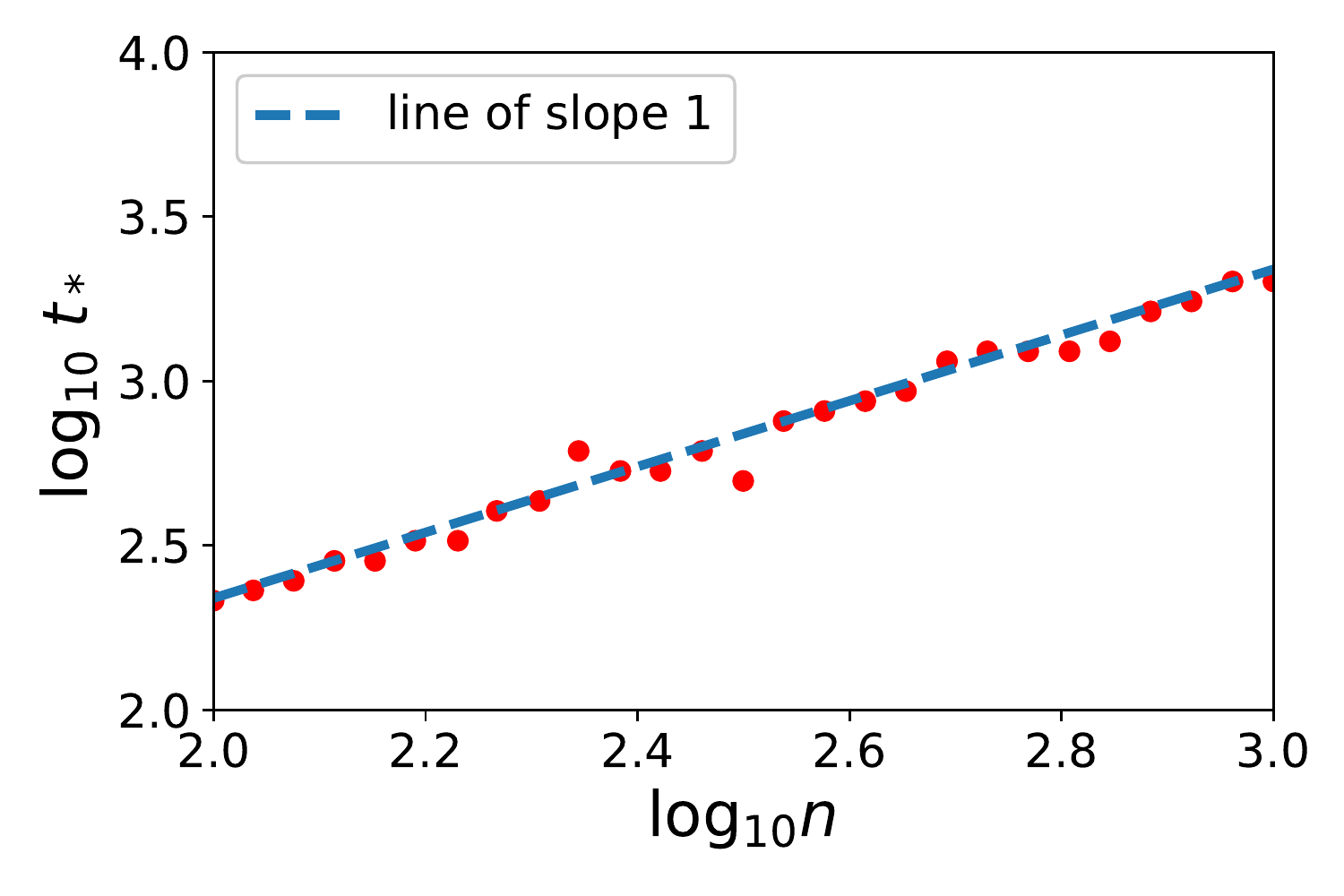} \\
\includegraphics[width=0.48\textwidth]{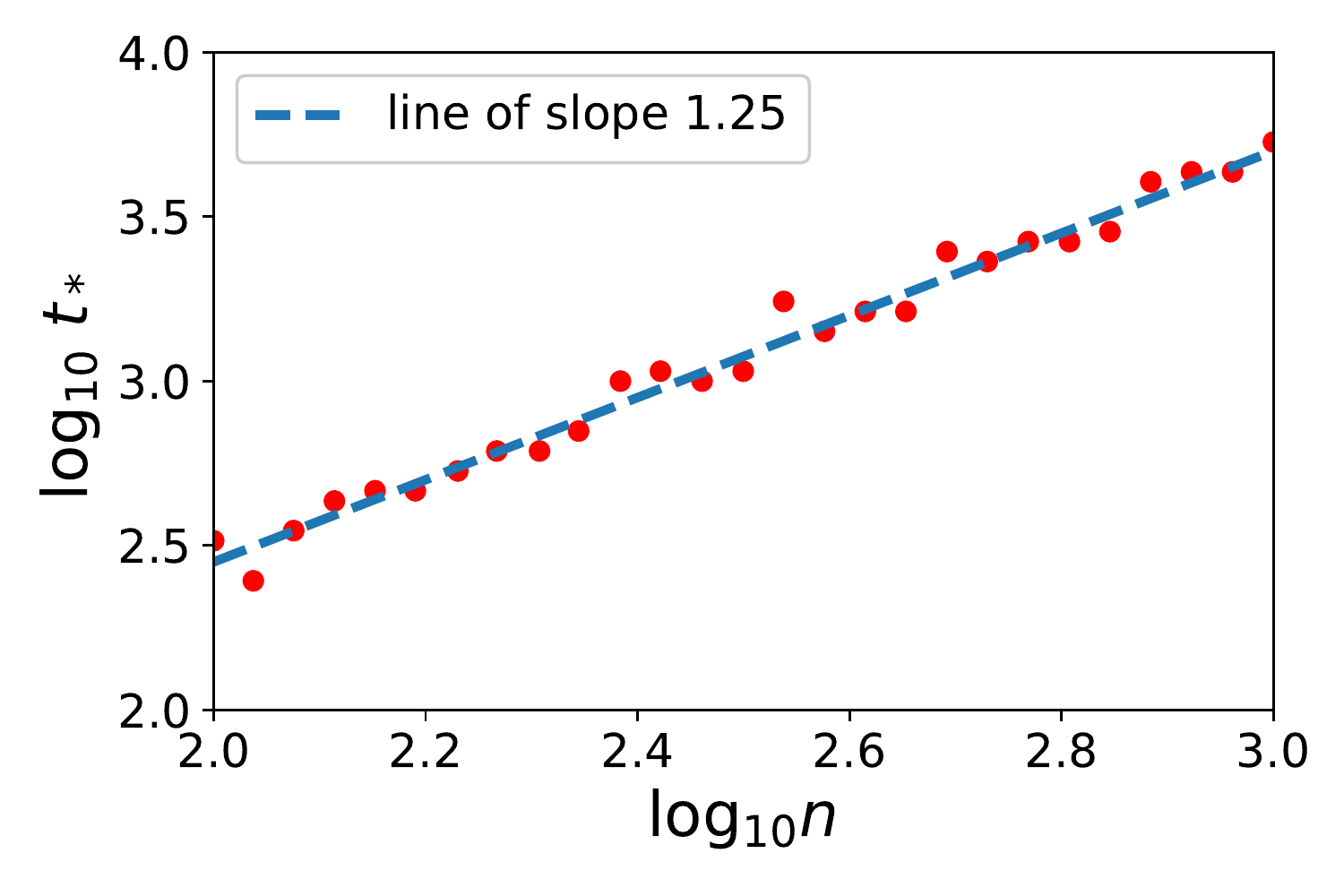}
\hspace{0.5cm}%
\includegraphics[width=0.48\textwidth]{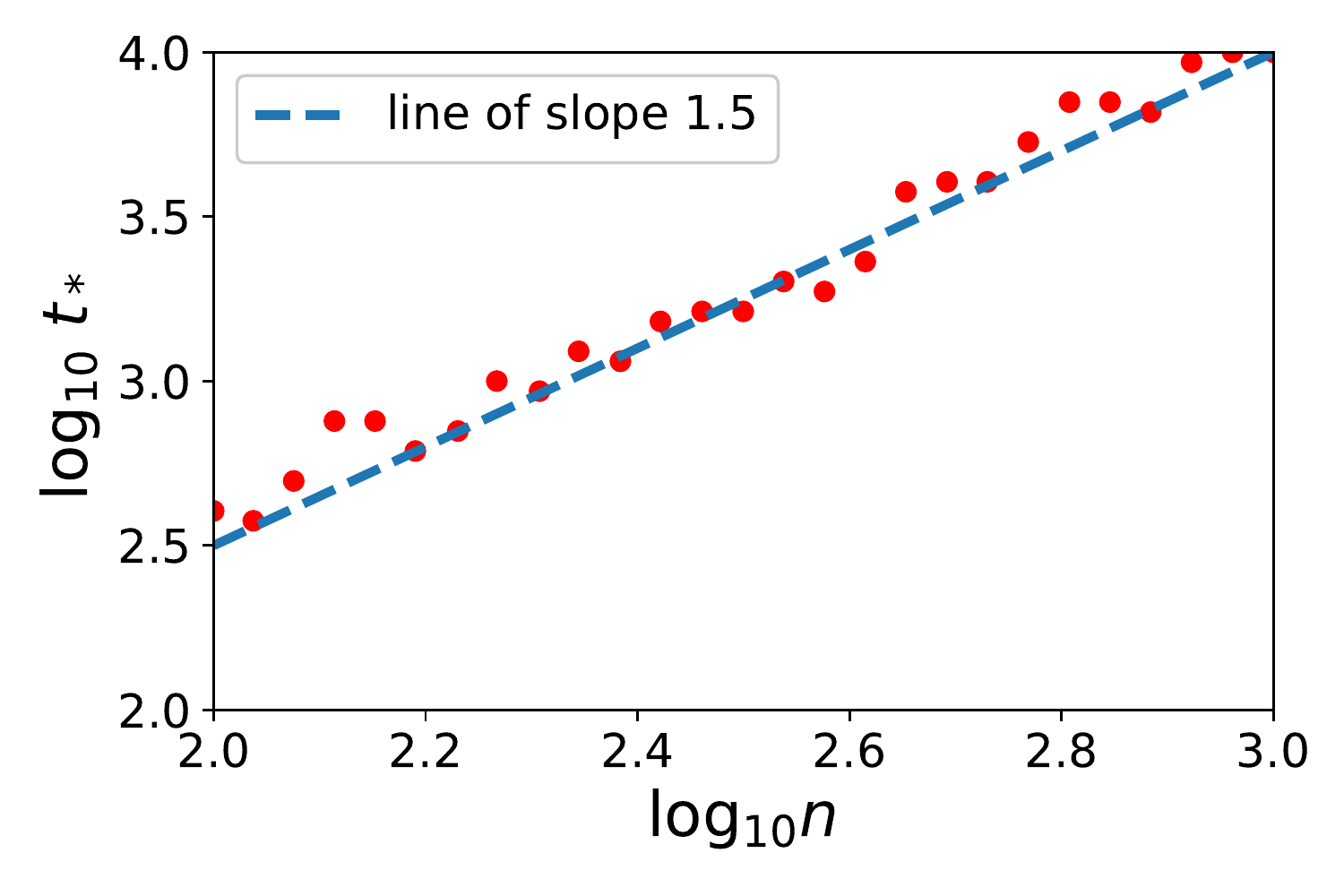}
\vspace{-0.6cm}
\caption{ \small The sampling is performed by {\bfseries cycling over the data} The four plots represent each a different configuration on the $(\alpha , r)$ plan represented in Figure \ref{fig:optimality_zones}, for $r = 1/(2\alpha) $. {\bfseries Top left} ($\alpha = 1.5$) and {\bfseries right} ($\alpha = 2$) are two easy problems (Top right is the limiting case where $r = \frac{\alpha-1}{2\alpha}$) for which one pass over the data is optimal. {\bfseries Bottom left} ($\alpha = 2.5$) and {\bfseries right} ($\alpha = 3$) are two hard problems for which an increasing number of passes is recquired. The blue dotted line are the slopes predicted by the theoretical result in Theorem \ref{thm:main_result}. }
\label{fig:t_versus_n_cycling}
\end{figure}

\begin{figure}[ht]
\footnotesize
\includegraphics[width=0.48\textwidth]{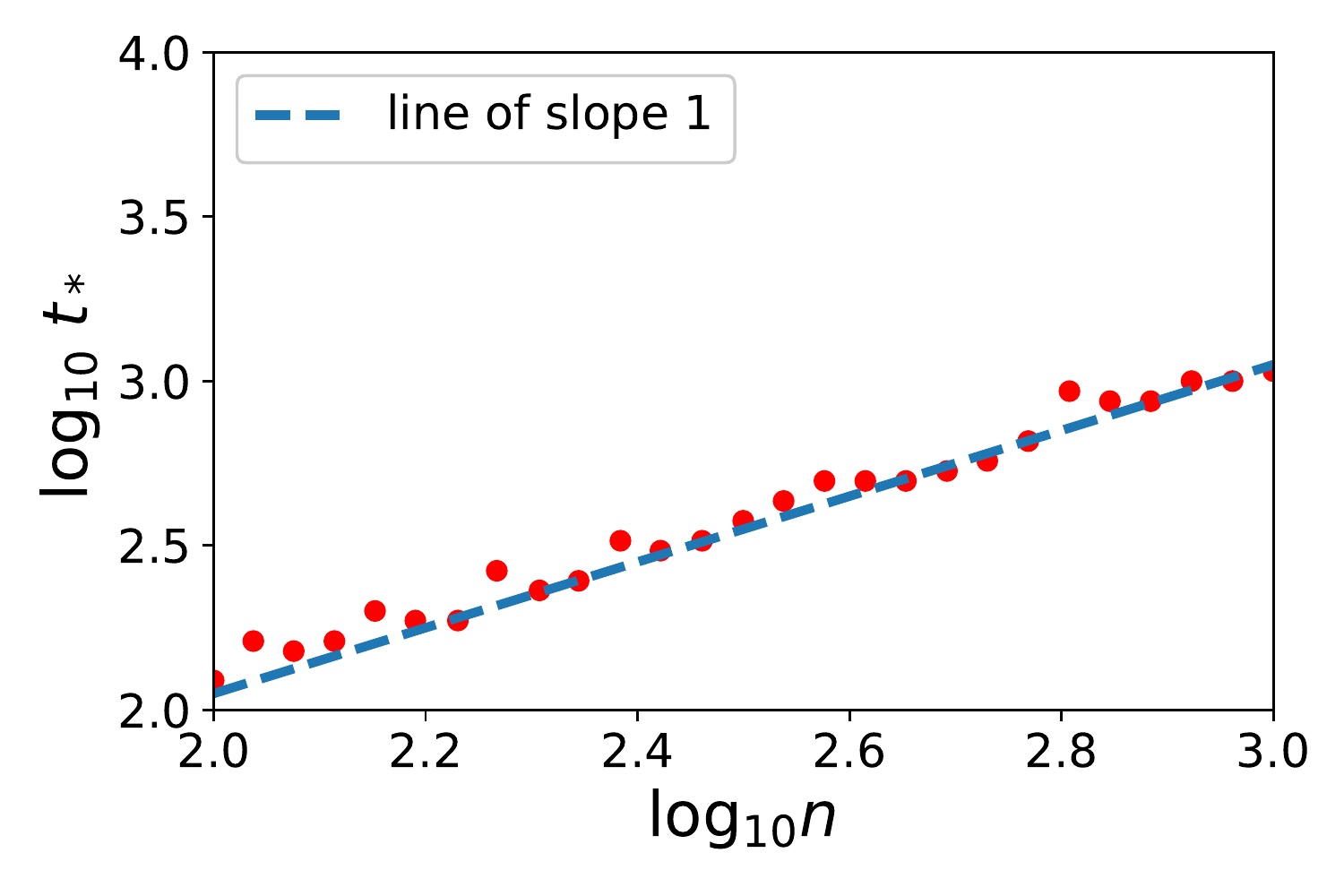}
\hspace{0.5cm}%
\includegraphics[width=0.48\textwidth]{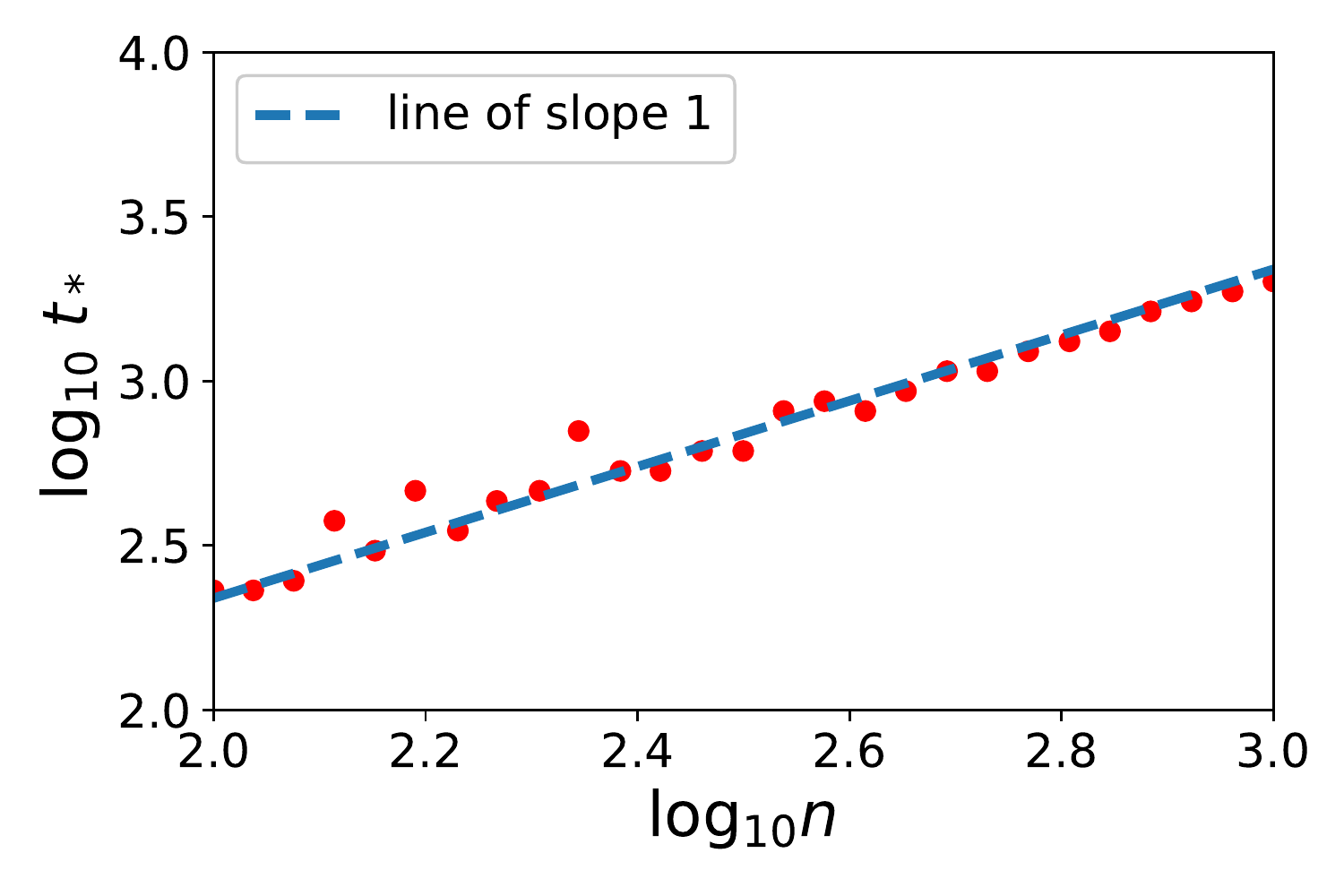} \\
\includegraphics[width=0.48\textwidth]{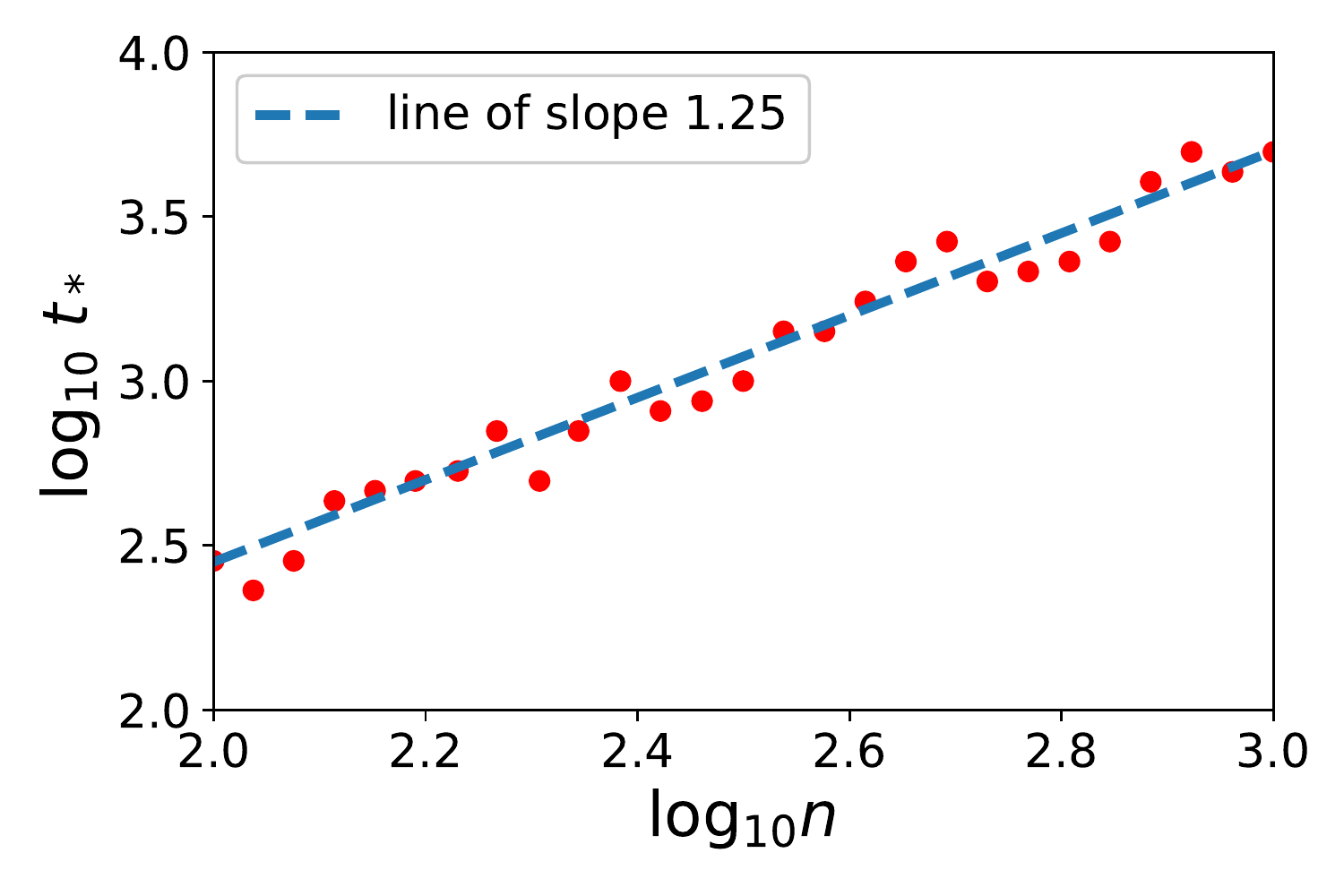}
\hspace{0.5cm}%
\includegraphics[width=0.48\textwidth]{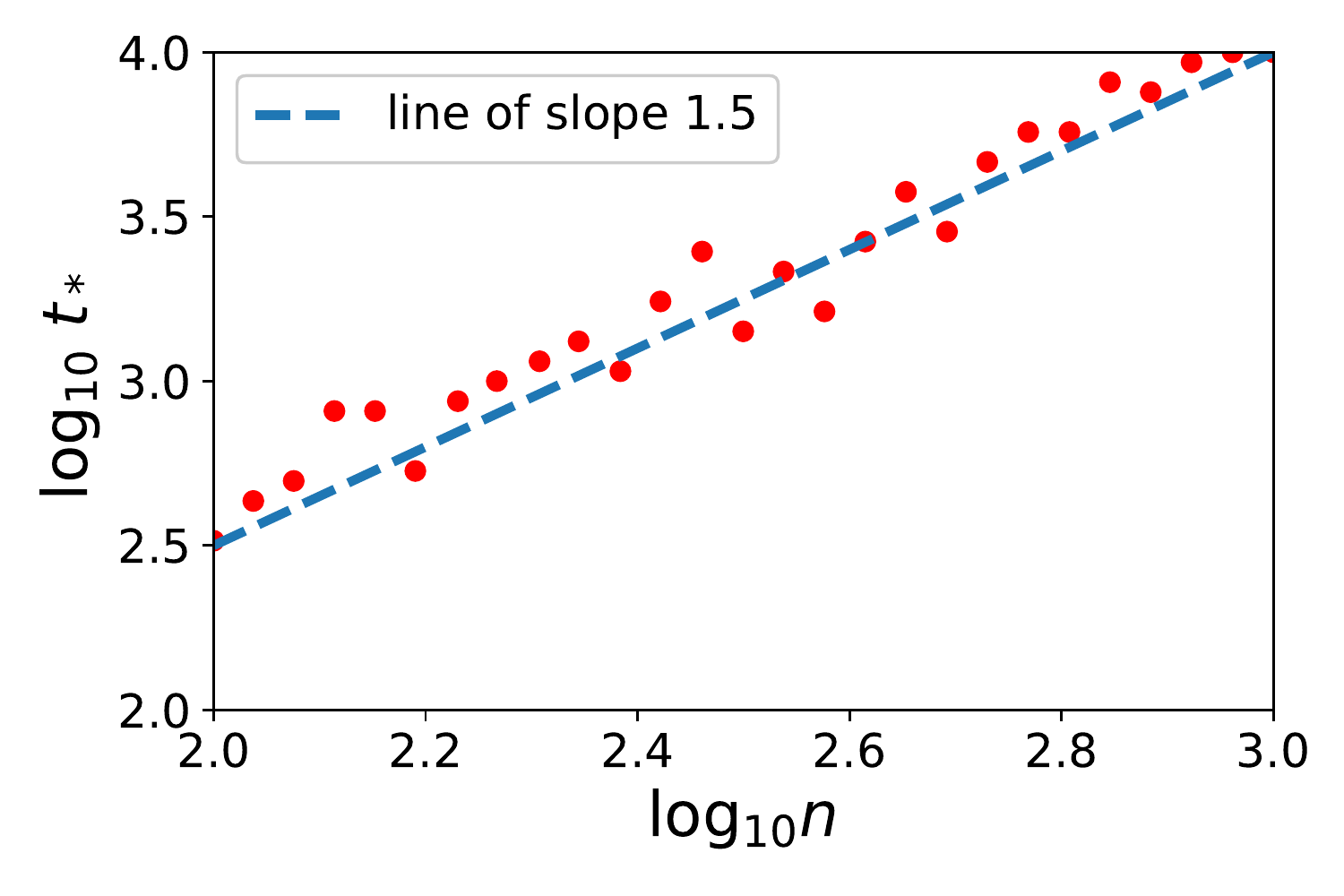}
\vspace{-0.6cm}
\caption{ \small The sampling is performed {\bfseries without replacement}. The four plots represent each a different configuration on the $(\alpha , r)$ plan represented in Figure \ref{fig:optimality_zones}, for $r = 1/(2\alpha) $. {\bfseries Top left} ($\alpha = 1.5$) and {\bfseries right} ($\alpha = 2$) are two easy problems (Top right is the limiting case where $r = \frac{\alpha-1}{2\alpha}$) for which one pass over the data is optimal. {\bfseries Bottom left} ($\alpha = 2.5$) and {\bfseries right} ($\alpha = 3$) are two hard problems for which an increasing number of passes is recquired. The blue dotted line are the slopes predicted by the theoretical result in Theorem \ref{thm:main_result}. }
\label{fig:t_versus_n_without_replacement}
\end{figure}

%
%

\end{document}